\documentclass{article}

\PassOptionsToPackage{numbers}{natbib}

\usepackage[preprint]{neurips_2026}

\usepackage[utf8]{inputenc} %
\usepackage[T1]{fontenc}    %
\usepackage{hyperref}       %
\usepackage{url}            %
\usepackage{booktabs}       %
\usepackage{amsfonts}       %
\usepackage{nicefrac}       %
\usepackage{microtype}      %
\usepackage{xcolor}         %
\usepackage{amsmath,amsthm,amsfonts}
\usepackage{algorithm}
\usepackage{algpseudocode}
\usepackage{amssymb}
\usepackage{multirow}
\usepackage{xspace}
\usepackage{arydshln}
\usepackage{graphicx}
\usepackage{cleveref}
\usepackage{wrapfig}
\usepackage{placeins}
\usepackage{subcaption}

\newtheorem{lemma}{Lemma}
\newtheorem{proposition}{Proposition}

\newcommand{\ours}{GASLoC\xspace}
\newcommand{\oursbold}{\textbf{\ours}\xspace}
\newcommand{\ddp}{DDP\xspace}
\newcommand{\localsgd}{Local SGD\xspace}
\newcommand{\diloco}{DiLoCo\xspace}
\newcommand{\DAdam}{DAdam\xspace}
\newcommand{\allreduce}{All-Reduce\xspace}
\newcommand{\versionD}{Local-\DAdam}

\title{Unifying Local Communications and Local Updates for LLM Pretraining}

\author{
\parbox{\textwidth}{\centering
  Pietro Cagnasso$~^{1,2}$
  \hspace{5pt}\qquad
  Eugene Belilovsky$^*~^{1,2}$
  \hspace{5pt}\qquad
  Edouard Oyallon$^*~^{3}$
  \vspace{5pt}\\
  {\normalfont
  $^1$Concordia University ~~ $^2$Mila ~~~~ $^3$CNRS, Sorbonne University}\\
}
}

\begin{document}

\maketitle
\begingroup
\renewcommand{\thefootnote}{\fnsymbol{footnote}}
\makeatletter
\long\def\@makefntext#1{%
  \parindent=0pt
  \noindent
  \makebox[0.8em][l]{\@makefnmark}#1%
}
\makeatother
\footnotetext[1]{Equal contributions.}
\endgroup

\begin{abstract}

  Communication-efficient pre-training of LLMs is increasingly important as training draws on compute distributed across clusters, data centers, and lower-bandwidth links. Many practical methods reduce communication frequency but still rely on synchronous \allreduce operations that maintain identical model states and tie progress to global collectives. 
  This can become a bottleneck when bandwidth or worker speed is heterogeneous. We introduce \oursbold, a novel decentralized pre-training algorithm that generalizes the notion of communication acceleration to the recently popular ``outer optimizer'' to allow a practical gossip-based training framework that is compatible with adaptive optimizers, allows for local optimizer steps, and can utilize sparse randomized peer communication. 
  Empirically, 
  on a number of standard LLM training tasks, we demonstrate that \oursbold  outperforms state-of-the-art decentralized algorithms in single step per communication setting for a number of topologies and, unlike existing decentralized methods in the LLM setting, iit achieves performance competitive with DiLoCo when utilizing multiple local steps. In the heterogeneous bandwidth setting, we demonstrate the %
  advantage of \oursbold showing that it can significantly outperform DiLoCo.

\end{abstract}

\section{Introduction}
Communication is a major bottleneck in distributed optimization and can significantly limit the scalability of distributed training, in particular when a large number of compute nodes are available and for training very large models like LLMs. In typical large-scale training setups, communication is implemented through bandwidth-efficient \allreduce, which typically allows all nodes to exchange information and average in parallel through two phases: a reduce-scatter phase followed by an all-gather (broadcast) phase. Although this strategy is highly effective on tightly coupled clusters, its communication cost still scales linearly with the number of participating nodes and is highly synchronous. In practice, this can saturate the bandwidth and lead to a lack of robustness to slower workers. Recent practical work on communication-efficient training~\citep{douillard2023diloco} has shown a number of methods that utilize less frequent communication~\citep{charles2026communication}, gradient compression~\citep{wang2023cocktailsgd,sarfi2025sparseloco}, and communication overlap~\citep{douillard2025streaming}. However, all these methods rely on \allreduce operations.

Decentralized learning based on gossip algorithms  is a direction of research widely studied in classical optimization that replaces global synchronization with communication over  network topologies. This makes it attractive for large-scale or bandwidth-constrained settings, since each worker communicates only with a small subset of neighbors. In principle, because each step involves fewer communication links than \allreduce, a single gossip step can substantially reduce communication bottlenecks relative to standard \allreduce, whose speed is limited by the slowest worker. Furthermore, maintaining optimization performance often requires more communication rounds between model updates. This creates a trade-off similar to that of \localsgd~\citep{stich2018local} and \diloco-style~\citep{douillard2023diloco} methods: communication becomes cheaper per round, but may need to occur more frequently.

A central quantity in gossip-based optimization is the consensus error, which measures disagreement between local model replicas as workers alternate between local updates and  communication. Classical communication-acceleration methods reduce this disagreement by applying momentum to the consensus dynamics~\citep{loizou2019provably,berthier2020accelerated}. The combination of gossip and local updates has been studied before: \citet{koloskova2020unified} provide a unified theory covering decentralized SGD with local updates, time-varying topologies, and heterogeneous local objectives. However, this line of work primarily treats local computation through a shared update schedule and studies heterogeneity as statistical heterogeneity across workers. In the specific setting of decentralized LLM training with modern adaptive optimizers, the closest prior work we are aware of is \citet{wang2024promise}. Their method communicates after each local optimizer step and does not use an outer momentum mechanism for communication acceleration or a \localsgd-style block of multiple local updates between communication rounds. In contrast, our setting is homogeneous-data LLM pre-training with system heterogeneity, where workers may have different bandwidth or compute speeds and the number of local steps \(H_i\) can be adapted across each worker $i$. We therefore study whether a decentralized \localsgd-style method with sparse randomized peer communication \citep{wang2024promise,ying2021exponential} and an outer momentum mechanism can remain stable in modern LLM training while reducing dependence on global synchronization.

In this work, we propose a novel, principled algorithm that combines both local gradient steps and local communication rounds alongside communication acceleration. In contrast to standard decentralized methods, our algorithm communicates locally updated parameters, and decouples communication from model updates through a momentum mechanism.
\ours improves over evaluated decentralized baselines in the \(H=1\) regime, and remains close to \diloco in the \(H=30\) regime while avoiding global collectives, and yields substantial wall-clock gains under a bandwidth-straggler model.
It can be seen as a principled generalization of FedOpt frameworks~\citep{reddi2021adaptive}. Moreover, the outer optimizer takes the role of the classical and well-understood communication acceleration, particularly when the graph is not complete. It reduces to a standard outer/server optimizer when, instead, the graph is complete,
elegantly linking this important and well-understood structure from gossip methods to the emerging practical results in \citet{reddi2021adaptive} and  \citet{douillard2023diloco}.

\paragraph{Contributions.}
Our contributions in this work are as follows. First, \textbf{(a)} we propose a novel decentralized algorithm that practically incorporates communication acceleration  and \textbf{(b)} we show that it
is a strict generalization of DiLoCo to gossip-based algorithms. \textbf{(c)} 
We argue that, instead of averaging all available neighbors, 
using a subset of \(k \leq 2\) neighbors is more communication- and algorithmically-efficient than standard All-Reduce.
\textbf{(d)} We prove standard convergence rates in the homogeneous setting.
\textbf{(e)}  We show for the first time in the literature that in gossip settings the outer momentum mechanism reduces the communication complexity, improving from $\chi$ to $\sqrt{\chi}$, where $\chi$ denotes the spectral gap. We empirically demonstrate \textbf{(f)} that we
outperform state-of-the-art decentralized methods for LLMs, while \textbf{(g)}
also extending to the multi-step setting.
In this setting, \ours yields competitive results compared to \diloco, while offering communication advantages, robustness to heterogeneous bandwidth, and better fault tolerance.

\section{Related Work}

\paragraph{\localsgd and extensions to LLM training.}
\localsgd~\citep{stich2018local} is a standard approach in distributed training to reduce communication frequency. It has been shown to work well for LLM training when combined with adaptive optimizers and an outer momentum mechanism, as in \diloco~\citep{douillard2023diloco}. In both settings, the main idea is to maintain an outer loop that aggregates parameters through communication, while an inner loop performs local gradient updates. Although these ideas originally arose in the context of federated learning~\citep{konevcny2016federated}, several extensions have been proposed since then, including methods that incorporate a slower momentum mechanism~\citep{wang2019slowmo}. Another line of work seeks to further reduce communication through sparse variants, which compress transmitted updates to lower communication costs~\citep{douillard2025streaming,kale2025eager}. 
A related recent work \citep{kolehmainen2025noloco} considers extensions of \diloco that allow for randomization of the pipeline path and for averaging pipeline stages in subgroups,
which is similar in spirit to decentralized methods.

\begin{figure}[t]
    \centering
    \includegraphics[width=0.9\linewidth]{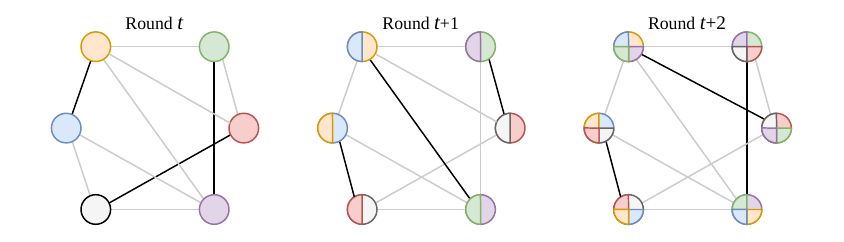}
    \caption{\textbf{Time-varying 1-Peer gossip communication.} At each round, only a sparse subset of peer-to-peer exchanges is active, shown in black, while the possible communication graph is shown in light gray. Changing the active peers across rounds lets information propagate through the network without global synchronization. Here, each worker communicates with one peer per round. When \ours communicates on this kind of graph we name it \ours-1-Peer.}
    \label{fig:gossip}
\end{figure}

\paragraph{Decentralized (gossip) learning.}
Decentralized learning~\citep{koloskova2020unified} aims to remove the need for a central coordinator by allowing communication only between neighboring nodes.
Much of this literature is motivated by settings with data heterogeneity, where different workers or clients optimize different local objectives due to non-IID data partitions, as in federated learning ~\citep{mcmahan2017communication} and decentralized training on heterogeneous datasets ~\citep{vogels2021relaysum}. Existing theory also distinguishes this regime from the homogeneous-data case~\citep{koloskova2020unified}.
Our work instead targets LLM pre-training, where workers sample from a shared large-scale corpus and data are typically treated as homogeneous across workers. Thus, the main heterogeneity we study is not statistical heterogeneity across local objectives, but system heterogeneity: uneven bandwidth, variable compute speed, and stragglers. This paradigm has been extensively studied in the theoretical setting. However, it has seen limited practical adoption, in particular with LLM training. 
In particular, we show that a straightforward decentralized adaptation of \diloco can be highly sensitive to the number of local steps. \citet{chen2021accelerating} also consider gossip-based communication, but their method does not incorporate a momentum step at communication time, whereas our approach does, which empirically leads to more stable training. Similarly, \citet{assran2019stochastic} communicate parameters at every gradient step rather than relying on an outer-loop structure. Motivated by these observations, we introduce an alternative algorithm that is significantly more stable in this regime. We also note that \citet{wang2024promise} and \citet{nabli2023textbf} are among the first works to demonstrate the promise of decentralized adaptive methods in a cluster setting, but they are not able to incorporate local gradient steps.
Randomized and asynchronous decentralized algorithms can further reduce or reshape communication by sampling edges, using stochastic clocks, or optimizing averaging weights~\citep{boyd2006randomized,even2021continuized,nabli2023dadao,nabli2025decentralized}. Although our \(2\)-Peer rule also samples communication partners, we use this randomization as a simple sparse mixing rule within a \localsgd-style outer loop, leaving optimized sampling distributions \citep{boyd2006randomized,nabli2025decentralized}, activation rates, and asynchronous protocols outside the scope of this work.\vspace{-4pt}

\paragraph{Alternatives to \allreduce.}
Several approaches aim to reduce the communication overhead of \allreduce-based training. One line of work explores sparse or structured communication topologies, including expander-like strategies, to improve communication efficiency~\citep{vogels2021relaysum}. Another direction is elastic synchronization: Elastic Averaging SGD (EASGD) introduces a central reference variable together with an elastic coupling between local models, which can improve robustness and reduce synchronization pressure~\citep{zhang2015deep}. More recent work also revisits elastic communication mechanisms for stable large-scale training~\citep{kang2025elaswave}, as well as TorchFT at the implementation level. Our approach can be viewed as a form of randomized gossip~\citep{boyd2006randomized}. However, instead of sampling edges independently, we sample a random permutation to define the communication pattern, which requires agreement between workers. Related low-degree communication schemes have been studied before, including $1$-Peer exponential graphs~\citep{ying2021exponential} and Alternating Exponential Rings~\citep{wang2024promise}. These methods are deterministic and are typically designed to match a prescribed or predictable network topology (e.g., computing clusters). As a result, they may suffer from less favorable  mixing guarantees. In contrast, our method relies on randomization, which improves the effective spectral gap of the communication operator.

\section{Method}

\subsection{Generalized Accelerated Sparse Communication Local Computation (\ours) }
\paragraph{Notation.}
We consider $n$ workers with local models \(x_i \in \mathbb{R}^d\). We write \(x = (x_1,\ldots,x_n)\) for the collection of local models and denote by \(\bar{x}\) their average. Let \(\pi \triangleq I-\frac1n\mathbf 1\mathbf 1^\top\) be the projector onto the disagreement subspace. We encode communication by a weighted graph Laplacian
\[
    \Lambda \triangleq \frac 12\sum_{(i,j)\in\mathcal E}\lambda_{ij}(e_i-e_j)(e_i-e_j)^\top, \qquad \text{ with }\sum_{i,(ij)\in\mathcal{E}}\lambda_{ij}=1 \text{ and }\lambda_{ij}>0\,,
\]
 so that \(\Lambda \mathbf 1=0\) and also let $\mathcal{N}_i=\{j,(ij)\in\mathcal E\} \cup \{i\}$. In this case, the graph connectivity is determined by \(\chi\geq 1\), denoting the inverse of the smallest nonzero eigenvalue, the spectral gap~\citep{nabli2023dadao}.

\paragraph{\ours.}
We describe \ours in the inner-loop setting for SGD, while \Cref{alg:ours-general} gives a more general formulation for arbitrary inner/outer optimizers. In parallel, each node performs local gradient steps, and an outer loop periodically aggregates the resulting updates. The number of local steps \(H_i\) may be adapted across nodes to compensate for a straggler worker. More precisely, for each node \(i\), the SGD version of the method is written as follows, where \(\eta,\alpha,\beta>0\) are learning rates:
\begin{equation}
    \begin{array}{c@{\qquad}c}
        \textbf{Inner loop} & \textbf{Outer loop}
        \\[0.4em]
        \left\{
        \begin{aligned}
            x_{t,0}^i &= x_t^i,\\
            x_{t,h+1}^i &= x_{t,h}^i - \beta \nabla f(x_{t,h}^i),
            \qquad h=0,\ldots,H_i-1,
        \end{aligned}
        \right.
        &
        \left\{
        \begin{aligned}
            g_t^i &\triangleq x^i_{t,H_i}-x^i_{t,0},\\
            x_{t+1} &= x_t + \eta g_t - \alpha \Lambda\!\left(x_t+\eta g_t\right).
        \end{aligned}
        \right.
    \end{array}
\end{equation}

When \(\Lambda=\pi\), this algorithm recovers \diloco exactly, and therefore can be viewed as a generalization of it. Compared with \DAdam \citep{wang2024promise}, our method also allows the use of outer momentum, potentially enabling \textit{communication acceleration}, together with local SGD steps as discussed below. Our method can also be combined with momentum, so that the outer update then becomes
\begin{algorithm}[t]
\caption{\ours}
\label{alg:ours-general}
{\footnotesize
\begin{algorithmic}[1]
\State \textbf{Input:} inner steps $\{H_i\}_{i=1}^n$, inner optimizer $\mathrm{Optimizer}^{\mathrm{in}}$, outer optimizer $\mathrm{Optimizer}^{\mathrm{out}}$
\For{$t=0,1,\dots,T-1$}
    \For{each worker $i\in\{1,\dots,n\}$ \textbf{in parallel}}
        \State $x_i^{(t,0)} \gets x_i^{(t)}$
        \For{$h=0,1,\dots,H_i-1$}
            \State $x_i^{(t,h+1)} \gets 
            \mathrm{Optimizer}^{\mathrm{in}}\!\left(
            x_i^{(t,h)}, \nabla f(x_i^{(t,h)})
            \right)$
        \EndFor
        \State $y_i^{(t)} \gets x_i^{(t,H_i)}$
    \EndFor

    \For{each worker $i\in\{1,\dots,n\}$ \textbf{in parallel}}
        \State Sample neighbors $\mathcal{N}_i^{(t)}$, always including node $i$, and receive $\{y_j^{(t)}:j\in \mathcal{N}_i^{(t)}\}$
        
        \State $x_i^{(t+1)} \gets 
        \mathrm{Optimizer}^{\mathrm{out}}\!\left(x_i^{(t)}, \frac{1}{|\mathcal{N}_i^{(t)}|}
        \sum_{j\in\mathcal{N}_i^{(t)}} \left(x_i^{(t)} - y_j^{(t)}\right)\right)$
    \EndFor
\EndFor
\State \textbf{Output:} $\displaystyle \bar{x}^{(T)}=\frac{1}{n}\sum_{i=1}^n x_i^{(T)}$
\end{algorithmic}
}
\end{algorithm}
\begin{equation}
x_{t+1}
=
x_t
+
\eta g_t
-
\alpha \Lambda\!\left(x_t+\eta g_t\right)
+
\gamma
\Big[
\left(x_t+\eta g_t\right)
-
\left(x_{t-1}+\eta g_{t-1}\right)
\Big],
\qquad t \ge 1,
\label{eq:momentum-update}
\end{equation}
where \(\gamma>0\) is the momentum parameter. Here, momentum is applied to the post-local-update iterates \(x_t+\eta g_t\), which can accelerate communication while preserving the same decentralized mixing structure. However, despite being faster, this approach still has at least two drawbacks, for instance in the case of the complete graph: it is not robust to faulty communication, and  it requires significantly more communication. This motivates the notion of randomized peer-to-peer communication.

\subsection{Randomized peer-to-peer communication}
\label{sec:k-peer}
Classical $1$-Peer communication schemes typically rely on carefully constructed graph sequences tailored to the underlying network topology \cite{ying2021exponential,wang2024promise}. While these schemes are attractive due to their low per-round communication cost, their mixing properties can be poor: exponential graphs yield a spectral gap scaling as $\log n$, whereas cycles  lead to a spectral gap scaling as $n^2$, thereby slowing convergence. Such sparse communication patterns remain highly desirable in heterogeneous settings, where limiting communication and synchronization is essential. Our approach preserves this low communication cost while leveraging randomization \cite{nabli2023textbf} to improve the average mixing behavior.
\paragraph{Randomized $1$-Peer and $2$-Peer communication schemes.}
We propose randomized peer communication to reduce synchronization costs while preserving good mixing properties. For simplicity, assume that $n=2m$. At each communication round, we sample a permutation $\sigma:\{1,\dots,n\}\rightarrow\{1,\dots,n\}$ uniformly at random. The Laplacian of the corresponding \emph{$1$-Peer communication graph} is
\[
    \Lambda^{(1)}_\sigma
    =
    \frac12
    \sum_{i=1}^{m}
    \bigl(e_{\sigma(2i-1)} - e_{\sigma(2i)}\bigr)
    \bigl(e_{\sigma(2i-1)} - e_{\sigma(2i)}\bigr)^\top .
\]
\Cref{fig:gossip} illustrates an example of the randomized 1-Peer case, where each round activates
a different sparse matching.

We also consider a \emph{$2$-Peer communication graph}, obtained by connecting the nodes in the cyclic order induced by $\sigma$. Namely, using the convention $\sigma(n+1)=\sigma(1)$, define
\[
    \Lambda^{(2)}_\sigma
    =
    \frac12
    \sum_{i=1}^{n}\frac 12
    \bigl(e_{\sigma(i)} - e_{\sigma(i+1)}\bigr)
    \bigl(e_{\sigma(i)} - e_{\sigma(i+1)}\bigr)^\top .
\]

\begin{proposition}
Let $\sigma$ be sampled uniformly at random from the  permutations of $\{1,\dots,n\}$. Then
\[
    \mathbb{E}_\sigma\bigl[\Lambda^{(1)}_\sigma\bigr]
    =\mathbb{E}_\sigma\bigl[\Lambda^{(2)}_\sigma\bigr]
    =
    \frac{n}{2(n-1)} \pi,
    \quad
    \mathbb{E}_\sigma\bigl[ \bigl(\Lambda^{(1)}_\sigma\bigr)^2\bigr]=\frac{n}{2(n-1)}\pi,  \quad  \mathbb{E}_\sigma\bigl[ \bigl(\Lambda^{(2)}_\sigma\bigr)^2\bigr]=\frac{3n}{8(n-1)}\pi 
\]
\end{proposition}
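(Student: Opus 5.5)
The plan is to reduce every expectation to a statement about how often a fixed unordered pair $\{a,b\}$ (or a fixed pattern of pairs) shows up in the random graph, and then to use the identity
\[
\sum_{a<b}(e_a-e_b)(e_a-e_b)^\top = nI-\mathbf 1\mathbf 1^\top = n\pi .
\]
Throughout, write $L_{ab}\triangleq(e_a-e_b)(e_a-e_b)^\top$. Note that $L_{ab}^2=2L_{ab}$, and that $L_{ab}L_{cd}=0$ whenever $\{a,b\}\cap\{c,d\}=\emptyset$. By exchangeability under a uniform $\sigma$, each of the matrices considered has expectation of the form $pI+q\mathbf 1\mathbf 1^\top$. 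Since all of them annihilate $\mathbf 1$, that expectation must be a multiple of $\pi$, and it only remains to identify the constant.

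\textbf{First moments.} For $\Lambda^{(1)}_\sigma=\frac12\sum_{\text{matched }\{a,b\}}L_{ab}$, a fixed pair $\{a,b\}$ belongs to the uniform random perfect matching with probability $\frac{1}{n-1}$. Hence
\[
\mathbb E\bigl[\Lambda^{(1)}_\sigma\bigr]=\frac{1}{2(n-1)}\,n\pi .
\]
For $\Lambda^{(2)}_\sigma=\frac14\sum_{i}L_{\sigma(i)\sigma(i+1)}$, the cycle has $n$ edges, and each of the $\binom n2$ pairs is equally likely to be one of them. A fixed pair is therefore an edge with probability $\frac{2}{n-1}$, which gives
\[
\mathbb E\bigl[\Lambda^{(2)}_\sigma\bigr]=\frac14\cdot\frac{2}{n-1}\,n\pi=\frac{n}{2(n-1)}\pi .
\]

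\textbf{Second moment, $1$-Peer.} The edges of a perfect matching are pairwise disjoint, so all cross terms vanish. This gives
\[
\bigl(\Lambda^{(1)}_\sigma\bigr)^2=\tfrac14\sum L_{ab}^2=\tfrac12\sum L_{ab}=\Lambda^{(1)}_\sigma .
\]
In other words, $\Lambda^{(1)}_\sigma$ is an orthogonal projection, and the second moment coincides with the first.

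\textbf{Second moment, $2$-Peer.} This is the main computation. Write $C_\sigma=4\Lambda^{(2)}_\sigma=2I-A_\sigma$, where $A_\sigma$ is the adjacency matrix of the random Hamiltonian cycle. Then
\[
C_\sigma^2=4I-4A_\sigma+A_\sigma^2 .
\]
For $n\ge5$ one has $A_\sigma^2=2I+B_\sigma$, where $B_\sigma$ is the adjacency matrix of the pairs at cyclic distance $2$. Again by symmetry, each pair is at distance $2$ with probability $\frac{2}{n-1}$, so
\[
\mathbb E[A_\sigma]=\mathbb E[B_\sigma]=\tfrac{2}{n-1}\bigl(\mathbf 1\mathbf 1^\top-I\bigr).
\]
Consequently
\[
\mathbb E\bigl[C_\sigma^2\bigr]=6I-\tfrac{6}{n-1}\bigl(\mathbf 1\mathbf 1^\top-I\bigr)=\tfrac{6n}{n-1}\pi ,
\]
and dividing by $16$ yields $\frac{3n}{8(n-1)}\pi$.

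The step needing care is precisely this decomposition of $A_\sigma^2$. It requires the cycle to be long enough that distance-$2$ pairs are distinct and do not coincide with edges, i.e.\ $n\ge5$. For small $n$ (e.g.\ $n=4$, where antipodal pairs are reached twice) the constant changes, and I would state that restriction explicitly or handle those cases by direct enumeration.
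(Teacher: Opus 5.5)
Your first moments and the $1$-Peer second moment (idempotence of $\Lambda^{(1)}_\sigma$) match the paper's proof. For the $2$-Peer second moment you take a different route. You expand $C_\sigma^2=4I-4A_\sigma+A_\sigma^2$ and compute its expectation entrywise through the distance-$2$ adjacency matrix. The paper instead observes that $\operatorname{Tr}[(\Lambda^{(2)}_\sigma)^2]=3n/8$ is the same for every $\sigma$, which it gets from the eigenvalues of the cycle Laplacian. It then uses the exchangeability argument you also invoke to write $\mathbb{E}[(\Lambda^{(2)}_\sigma)^2]=c\,\pi$ and reads off $c(n-1)=3n/8$. The trace route needs only one deterministic scalar and never looks at the walk structure of $A_\sigma^2$. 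Your route gives the matrix identity directly, but that walk structure is exactly what pushes you into a case split you do not need.

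That case split rests on a false claim: the constant does not change for small $n$.
\begin{itemize}
\item At $n=4$ the two antipodal pairs carry entry $2$ in $A_\sigma^2$, but a fixed pair is antipodal only with probability $1/3$. Hence $\mathbb{E}[A_\sigma^2-2I]=\tfrac23(\mathbf 1\mathbf 1^\top-I)=\tfrac{2}{n-1}(\mathbf 1\mathbf 1^\top-I)$, exactly as for $n\ge5$.
\item At $n=3$ one has $A_\sigma^2=2I+A_\sigma$ with $A_\sigma=\mathbf 1\mathbf 1^\top-I=\tfrac{2}{n-1}(\mathbf 1\mathbf 1^\top-I)$, again the same.
\end{itemize}
The uniform reason is this: for every $n\ge3$, $A_\sigma^2-2I$ has zero diagonal and all row sums equal to $2$. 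Each vertex starts four length-$2$ walks, and two of them return. Exchangeability alone then forces its expectation to be $\tfrac{2}{n-1}(\mathbf 1\mathbf 1^\top-I)$. More simply still, $\operatorname{Tr}(C_\sigma^2)=\|C_\sigma\|_F^2=4n+2n=6n$ for every $n\ge3$. Combined with $\mathbb{E}[C_\sigma^2]=c'\pi$, this gives $c'=6n/(n-1)$ with no expansion at all.

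As written, your proof leaves $n=4$ (and $n=3$) unproved and wrongly suggests the formula fails there. The genuine exception is $n=2$. There the cycle collapses to a doubled edge, $\Lambda^{(2)}_\sigma=\pi$, and the second moment is $\pi$ rather than $\tfrac34\pi$. The paper's proof also tacitly needs $n\ge3$, since $\sum_k\cos^2(2\pi k/n)=n/2$ fails at $n=2$.
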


Consequently, in expectation, the spectral gap of both the $1$-Peer and $2$-Peer communication schemes scales as $1$. This contrasts with a fixed cycle graph, for which $\chi$ scales as $n^2$ and becomes inefficient for large $n$.  Moreover, the lower variance of the $2$-Peer scheme explains its  gains over $1$-Peer.

\begin{figure}[t]
\label{fig:straggler}
    \centering
    \includegraphics[width=0.9\linewidth]{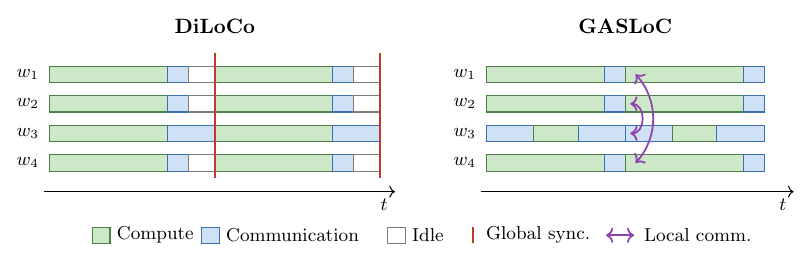}
    \caption{\textbf{Bandwidth-straggler scheduling.} Left: in \diloco implemented with an \allreduce, all workers perform the same number of local steps and the faster workers remain idle while waiting for the bandwidth-limited worker \(w_3\) at the global synchronization barrier. Right: \ours uses sparse peer exchanges and allows the bandwidth-limited worker to use fewer local steps \(H_3<30\), reducing its cycle time without forcing all workers to wait at a global barrier.
    }
    \label{fig:adapting-h}
\end{figure}

\paragraph{Robustness to heterogeneous bandwidth in randomized peer-to-peer communication settings.}
Another distinctive feature of our algorithm is that each worker can perform a flexible number of local steps with equal batch size, denoted by \(H_i\). This makes the method naturally robust to bandwidth heterogeneity, including slower compute nodes and workers with weaker or more variable network connections. Assume that worker \(i\) has communication time \(T_i^{\mathrm{comm}}\), defined as an upper bound on the time required to exchange the intermediate variable with another worker, and per-batch computation time \(T_i^{\mathrm{comp}}\).
This timing model treats each activated peer exchange as bidirectional communication with a symmetric per-worker upper bound. It therefore does not model directed asymmetric protocols, such as stochastic-push variants. This  matches our algorithm, whose communication graph is undirected at each round, and is sufficient for isolating the effect of sparse peer exchanges and worker-specific local step counts.
In an \allreduce-based method, one synchronization round can be bounded as
\[
T_{\mathrm{step}}^{\mathbf{AllReduce}}
=
\max_i \left\{ 2 T_i^{\mathrm{comm}} \right\}
+
\max_i \left\{ H_i T_i^{\mathrm{comp}} \right\}.
\]
The first term corresponds to the reduction and broadcast phases, while the second term reflects the need to wait for the slowest local computation. By contrast, in our decentralized method, each worker communicates only with 1 or 2 randomly selected neighbors at each synchronization round. The corresponding round duration is bounded by
\[
T_{\mathrm{step}}^{\oursbold,k}
=
\max_i
\left\{
kT_i^{\mathrm{comm}} + H_i T_i^{\mathrm{comp}}
\right\}.
\]
Therefore, since $k\leq 2$, we get
\[
T_{\mathrm{step}}^{\mathbf{\ours},k}
\leq
T_{\mathrm{step}}^{\mathbf{AllReduce}}.
\]
Indeed, \allreduce may be bottlenecked by different workers for communication and computation, whereas our method depends only on the largest combined per-worker cost.
This comparison still allows collective implementations: if the sparse schedule groups workers into disjoint islands, each island can use a small local \allreduce instead of participating in a global collective.
This advantage is particularly useful under heterogeneous hardware. For a prescribed maximum number of local steps \(H=\max_i H_i\), hardware utilization is improved by choosing the local step counts \(H_i\) so that workers complete a synchronization round in approximately the same time:
\[
T_1^{\mathrm{comm}} + H_1 T_1^{\mathrm{comp}}
=
\cdots
=
T_n^{\mathrm{comm}} + H_n T_n^{\mathrm{comp}}.
\]
Thus, faster or better-connected workers can perform more local updates, while slower or poorly connected workers perform fewer. This reduces idle time at synchronization points and improves robustness to heterogeneous compute and communication resources. \Cref{fig:adapting-h} illustrates this mechanism: with \allreduce, fast workers wait at each global barrier, whereas \ours lowers the bandwidth-limited worker's \(H_i\) and uses sparse peer exchanges.%

\paragraph{Fault tolerance.}
Sparse communication can improve robustness to stragglers and transient failures: each synchronization round depends only on the activated peer exchanges, so a slowdown affects only nearby workers, assuming failed exchanges can be skipped or rescheduled. This is complementary to Decoupled \diloco~\citep{douillard2026decoupled}, which uses independent learners and a central synchronizer with quorum-based updates. Here, there is no global synchronizer; resilience comes from replacing all-worker collectives with sparse peer-to-peer dependencies. In contrast, \allreduce requires every worker to join each collective step, making it more sensitive to stragglers and temporary failures.

\subsection{Convergence of \ours}
Appendix~\ref{app:math:convergence} proves the following convergence result for different communication schemes.

\begin{proposition}\label{prop:cvg}
Let \(x^\star\in\arg\min f\), and suppose that \(x^\star\) is an unconstrained
minimizer, so that \(\nabla f(x^\star)=0\). Suppose that, for every \(\xi\),
\(F(\cdot;\xi)\) is \(L\)-smooth. Assume moreover that 
\[
\mathbb{E}_\xi[\nabla F(x;\xi)]=\nabla f(x),
\qquad
\mathbb{E}_\xi\bigl[
\|\nabla F(x;\xi)-\nabla f(x)\|^2
\bigr]\le \sigma^2 .
\]
Assume
\[
0<\alpha<1,
\qquad
0<\beta\le \frac1{8L},
\qquad
0<\eta\le \frac{\alpha}{3\sqrt{3}\beta\rho L}.
\]
The parameter \((\rho,\gamma)\) is set to \((\chi,0)\) for standard gossip, to \((\sqrt{\chi},(1-\sqrt{\alpha/\chi})^2)\) for accelerated gossip, to \(\bigl(\frac{n}{n-1},0\bigr)\) for  \(1\)-Peer, and to
$
\left(
\frac{\alpha}{
1-\sqrt{
1-\frac{n}{n-1}\alpha+\frac{3n}{8(n-1)}\alpha^2
}},
0
\right)
$ for  \(2\)-Peer.

If the initialization satisfies $x_0^1=\cdots=x_0^n=x_0,$ then, for every \(T\ge 1\),
\[
\frac1T
\sum_{t=0}^{T-1}
\mathbb{E}\bigl[\|\nabla f(\bar x_t)\|^2\bigr]
\le
\frac{4(f(x_0)-f(x^\star))}{T\eta\beta}
+
\frac{2L\eta\beta\sigma^2}{n^2}
\sum_{i=1}^n H_i
+
60\beta^2L^2\sigma^2 .
\]
\end{proposition}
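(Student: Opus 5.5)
The proof combines a descent-lemma argument on the average iterate $\bar x_t$ with a separate bound on the consensus error $\Xi_t\triangleq\mathbb{E}\|\pi x_t\|^2$ (summed over workers). The four communication schemes then differ only in the contraction factor of the mixing step, and this factor is summarized by $\rho$.

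\textbf{Step 1: the average iterate.} Since $\mathbf 1^\top\Lambda=0$, and the momentum term has zero-mean disagreement component when $\gamma$ acts on the averaged sequence with equal initialization, averaging the outer update gives
\[
\bar x_{t+1}=\bar x_t+\eta\,\bar g_t .
\]
In the accelerated case I would check that the averaged momentum recursion reduces to plain averaging. One way is to note that $\bar x_t-\bar x_{t-1}=\eta\bar g_{t-1}$ for the centered part; alternatively, absorb $\gamma$ into an effective step, which only changes constants. Next, expand
\[
\bar g_t=-\frac{\beta}{n}\sum_i\sum_{h<H_i}\nabla F(x^i_{t,h};\xi).
\]
Apply $L$-smoothness of $f$ at $\bar x_t$. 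Split each stochastic gradient as $\nabla f(\bar x_t)$, plus the drift $\nabla f(x^i_{t,h})-\nabla f(\bar x_t)$, plus the noise. The noise terms are independent across workers and inner steps, so they contribute the variance term $\frac{L\eta^2\beta^2\sigma^2}{n^2}\sum_i H_i$. The drift terms are bounded by
\[
L^2\bigl(\|x^i_{t,h}-x^i_t\|^2+\|x^i_t-\bar x_t\|^2\bigr).
\]

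\textbf{Step 2: the inner-loop drift.} With $\beta\le 1/(8L)$, the standard local-SGD unrolling bounds $\mathbb{E}\|x^i_{t,h}-x^i_t\|^2$ by $h^2\beta^2\|\nabla f(x^i_t)\|^2+h\beta^2\sigma^2$ up to constants. The gradient part is absorbed back into the descent term. The noise part yields the residual $O(\beta^2L^2\sigma^2)$; the precise normalization (the constant $60$ and the $\beta^2$ scaling) comes from how $\eta$ and the $H_i$ are scaled.

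\textbf{Step 3: the consensus error, which is the main obstacle.} Projecting the outer update with $\pi$ gives
\[
\pi x_{t+1}=(I-\alpha\Lambda)\,\pi(x_t+\eta g_t)\quad\text{(plus momentum)}.
\]
\begin{itemize}
\item \emph{Standard gossip.} Here $\|I-\alpha\Lambda\|_{\pi}\le 1-\alpha/\chi$.
\item \emph{Randomized schemes.} Use the preceding Proposition on $\mathbb{E}[\Lambda_\sigma]$ and $\mathbb{E}[\Lambda_\sigma^2]$. This gives
\[
\mathbb{E}\|(I-\alpha\Lambda_\sigma)v\|^2=\Bigl(1-2\alpha c_1+\alpha^2c_2\Bigr)\|v\|^2 .
\]
For $1$-Peer this equals $(1-\alpha\frac{n}{n-1}(2-\alpha))\|v\|^2$; since $\alpha<1$, it is at most $(1-\alpha/\rho)^2\|v\|^2$ with $\rho=\frac{n}{n-1}$ up to constants. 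For $2$-Peer, taking square roots gives exactly the stated $\rho$.
\item \emph{Accelerated gossip.} With $\gamma=(1-\sqrt{\alpha/\chi})^2$, I would write the second-order recursion in $(\pi x_t,\pi x_{t-1})$ as a $2\times 2$ block per eigenvalue $\lambda\ge 1/\chi$ of $\alpha\Lambda$. I would then show that the spectral radius is at most $1-\sqrt{\alpha/\chi}$. This is the Polyak heavy-ball / accelerated-gossip computation. The delicate part is controlling the non-normal transient via a Jordan-block constant, which I would absorb into the numerical factor $3\sqrt3$.
\end{itemize}
In all cases the result is
\[
\Xi_{t+1}\le\Bigl(1-\tfrac{\alpha}{\rho}\Bigr)\bigl(\Xi_t+\ldots\bigr)+c\,\tfrac{\rho}{\alpha}\,\eta^2\,\mathbb{E}\|\pi g_t\|^2,
\]
obtained with Young's inequality. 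Unrolling from $\Xi_0=0$ gives
\[
\sum_t\Xi_t\lesssim\frac{\rho^2}{\alpha^2}\,\eta^2\sum_t\mathbb{E}\|g_t\|^2 .
\]
Finally, $\mathbb{E}\|g_t\|^2$ is itself bounded by $\beta^2H(\|\nabla f(\bar x_t)\|^2L^2\text{-terms}+\sigma^2)$, using Step 2.

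\textbf{Step 4: combining.} Substitute the bound on $\sum_t\Xi_t$ into the descent inequality. The condition $\eta\le\alpha/(3\sqrt3\beta\rho L)$ makes $L^2\eta^2\beta^2\rho^2/\alpha^2\le 1/27$. This is small enough to absorb the consensus-induced gradient terms into half of the $-\eta\beta\|\nabla f(\bar x_t)\|^2$ descent. Telescoping $f(\bar x_t)$, dividing by $T\eta\beta/4$, and collecting the variance contributions then yields the three stated terms.
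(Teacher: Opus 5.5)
\textbf{Verdict: there is a genuine gap.} It sits in your final assembly (Step 4), and it comes from the noise that you correctly let into the consensus recursion in Step 3.

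Your bound on $\mathbb{E}\|g_t\|^2$ necessarily contains a variance part of order $\beta^2\sum_i H_i\sigma^2$. After you unroll, weight $\Xi_t$ by the factor $\eta\beta L^2/n$ with which consensus enters the descent inequality, and divide by $T\eta\beta/4$, you are left with a residual of order $\frac{L^2\eta^2\beta^2\rho^2}{\alpha^2}\sigma^2$, times an $H$-dependent factor fixed by your normalization. The condition $\eta\le\alpha/(3\sqrt3\beta\rho L)$ only caps this by $\sigma^2/27$ (times that factor). At the largest admissible $\eta$ it is a fixed fraction of $\sigma^2$, independent of $\beta$ and $n$. The two other residuals in the statement, by contrast, vanish as $\beta L\to0$ and $n\to\infty$ with $H$ fixed. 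So the three stated terms do not come out of your argument.

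The paper reaches them through a different mechanism:
(a) a one-step Lyapunov function $\phi=f(\bar x)-f(x^\star)+\Omega\|\pi x\|^2$ with $\Omega=\frac{3\rho\eta\beta L^2e^2}{2\alpha n}$;
(b) drift measured directly from $\bar x_t$ with inner step $\beta/H$ (Lemma~\ref{lemma:inner-loop}), so that $(1+\beta L/H)^{2H}\le e^2$, which is where $6e^2\le 60$ comes from;
(c) the polarization identity in Lemma~\ref{lemma:gradient-mismatch}, which leaves a negative term $-\frac{\beta\eta}{2n}\sum_i\|G_t^i\|^2$ with $G_t^i=\frac1{H_i}\sum_k\nabla f(x^i_{t,k})$;
(d) a gossip input written as $\frac{\rho\eta^2\beta^2}{\alpha}\|\pi G_t\|^2$, which that negative term cancels once $3\rho^2\eta^2\beta^2L^2e^2\le\alpha^2$. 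This is the origin of $3\sqrt3\ge\sqrt3 e$.
No bound on $\mathbb{E}\|g_t\|^2$ is needed there. Note, however, that $\pi G_t$ involves only exact gradients. The stochastic part of $\pi g_t$ is nonzero even at consensus with $H_i=1$, where $\pi G_t=0$, and the paper does not track it either. What your route honestly delivers is therefore the stated bound plus a consensus-noise term. Alternatively, you get a bound of the stated shape with a larger constant under an extra restriction such as $\eta\rho/\alpha=O(1)$, up to $H$-factors.

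Several of your ``up to constants'' steps also fail as written.

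(i) Accelerated gossip. With $\gamma=(1-\sqrt{\alpha/\chi})^2$, the $2\times2$ block at $\lambda=1/\chi$ is defective with double root $q=1-\sqrt{\alpha/\chi}$. Its powers behave like $kq^{k-1}$, so the transient constant is of order $\sqrt{\chi/\alpha}$, not numerical, and cannot be absorbed into $3\sqrt3$. More decisively, for a constant input the projected recursion converges, for every stabilizing $\gamma$, to the solution $u\perp\mathbf 1$ of $\alpha\Lambda u=(I-\alpha\Lambda)\pi y$. Along the slowest eigenvector its norm is about $\frac{\chi}{\alpha}\|\pi y\|$, whereas a contraction inequality with $\rho=\sqrt\chi$ would force $\|\pi x\|^2\lesssim\frac{\chi}{\alpha^2}\|\pi y\|^2$. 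The paper's Lemma~\ref{lemma:gossip}(iii) asserts contraction ``in an equivalent norm'' without controlling this constant, so it cannot be imported to close the step.

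(ii) Momentum on the mean. Under \eqref{eq:momentum-update} the mean does not reduce to plain averaging. It obeys $\bar x_{t+1}=\bar x_t+\eta\bar g_t+\gamma[(\bar x_t+\eta\bar g_t)-(\bar x_{t-1}+\eta\bar g_{t-1})]$, a heavy-ball recursion. Its effective step $\eta/(1-\gamma)\approx\frac{\eta}{2}\sqrt{\chi/\alpha}$ is not a constant multiple of $\eta$. The paper simply uses $\bar x_{t+1}=\bar x_t+\eta\bar g_t$ for every scheme.

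(iii) Inner-step normalization. With inner step $\beta$ and only $\beta\le 1/(8L)$, your drift bound $h^2\beta^2\|\nabla f(x_t^i)\|^2$ needs $H\beta L\lesssim1$. You must commit to the paper's inner step $\beta/H_i$, which is what makes the constants independent of $H$.

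(iv) The $1$-Peer factor. The correct second moment is $\mathbb{E}\|(I-\alpha\Lambda^{(1)}_\sigma)v\|^2=\bigl(1-\frac{n}{n-1}\alpha+\frac{n}{2(n-1)}\alpha^2\bigr)\|v\|^2$. You doubled both $\alpha$-terms, as does the paper's proof of Lemma~\ref{lemma:gossip}(ii). For $n\ge4$ and $\alpha\in(0,1)$ the correct factor exceeds $(1-\frac{n-1}{n}\alpha)^2$, so $\rho=\frac{n}{n-1}$ does not follow. The analogue of the $2$-Peer formula gives $\rho\approx\frac{2(n-1)}{n}$ for small $\alpha$.
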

This proposition shows that, by appropriately tuning the step sizes \(\beta,\eta\), the residual terms in the convergence bound can be made arbitrarily small, which is the expected behavior for this type of method. The proof relies on a Lyapunov function which combines an optimization term, \(f(\bar x)-f(x^\star)\), that measures progress of the network average, and a consensus term that controls the disagreement between nodes. The precise form of the consensus term depends on the gossip mechanism used. The resulting bound is standard in spirit and is comparable to existing results such as \cite{koloskova2020unified}. However, the analysis highlights the benefit of the outer momentum mechanism: it reduces the number of communication rounds required to achieve a given level of consensus.%

\vspace{-7pt}
\section{Numerical Experiments}\vspace{-7pt}\label{sec:numerical-experiments}
We evaluate \ours in a distributed LLM pre-training setting. We compare against \ddp~\citep{li2020pytorchdistributed} and \diloco~\citep{douillard2023diloco} as standard distributed baselines. As decentralized baseline, we compare with \DAdam~\citep{wang2024promise}, a recent approach that was able to scale to LLMs and incorporate adaptive optimizers effectively. It uses an Adam-based gossip method, and AdamW is a standard~\citep{groeneveld2024olmo,grattafiori2024llama} local optimizer for transformer pre-training. Moreover, \DAdam supports overlapping communication with computation, has convergence guarantees and was shown to achieve strong generalization performance in practical multi-node settings, including transformer training. %

We pretrain Llama-3-style decoder-only Transformer models~\citep{grattafiori2024llama} on FineWeb~\citep{penedo2024fineweb}. Following \citet{hoffmann2022training}, we train a 134M-parameter model on 2.7B tokens and a 551M-parameter model on 11B tokens. We use the LLaMA-2 tokenizer~\citep{touvron2023llama2}, 2048-token sequences, a global batch size of 2M tokens, and \(n \in \{8,16,32\}\) workers unless stated otherwise. All methods use AdamW~\citep{loshchilov2018decoupled} locally, with weight decay \(0.1\), \((\beta_1,\beta_2)=(0.9,0.95)\), a tuned peak learning rate, 10\% warmup from 1\% of the peak, and cosine decay. For \(H>1\), \diloco and \ours perform \(H\) local updates before an outer step: \diloco computes the outer pseudo-gradient by \allreduce, whereas \ours communicates over the selected topology. Both use independently tuned Nesterov momentum~\citep{sutskever2013on} for the outer optimizer.  Our sparse variants use randomized peer-to-peer communication: \ours-1-Peer forms disjoint random symmetric pairings, and \ours-2-Peer connects adjacent workers in a randomly induced ring. See Appendix~\ref{app:exp:protocol} for more details.

\paragraph{\versionD.}
\DAdam is directly comparable to our protocol only when \(H=1\), since the original algorithm mixes neighboring parameters after every local optimizer step. For \(H>1\), delaying this mixing changes the algorithm rather than only reducing communication frequency. We therefore introduce a variant of DAdam which we denote \versionD as a controlled adaptation for infrequent communication: it keeps the same \(H\)-step inner loop and communication schedule as \ours, but replaces the outer pseudo-gradient with a \DAdam-style disagreement term computed before the local block. Appendix~\ref{app:versionD} gives the precise update and explains how we adapted \DAdam for comparison. %

\begin{table*}[t]
\centering
\caption{\textbf{Decentralized training on the 134M model.} Validation loss for one and multiple (\(H=30\)) local steps decentralized training across different topologies and replica counts. All runs use a fixed training budget of 2.68B tokens.}
\label{tab:small-decentralized-results}

\begin{subtable}[t]{0.33\textwidth}
\centering
\caption{\textbf{One-step}}
\label{tab:small-1step}
{\fontsize{7}{7.5}\selectfont
\begin{tabular}{@{}lcc@{}}
\toprule
\multicolumn{2}{l}{AdamW DDP (\textit{ref.})} & 3.18 \\
\midrule
\textbf{Topology} &
\textbf{\ours} &
\textbf{\DAdam} \\
\midrule
ring
& 3.36 & \textbf{3.33} \\
complete
& 3.29 & \textbf{3.25} \\
1-Peer
& \textbf{3.24} & 3.29 \\
2-Peer
& \textbf{3.22} & 3.28 \\
\bottomrule
\end{tabular}}
\end{subtable}
\hfill
\begin{subtable}[t]{0.65\textwidth}
\centering
\caption{\textbf{Multi-step}}
\label{tab:small-30steps}
\resizebox{\linewidth}{!}{%
\begin{tabular}{@{}lcccccc@{}}
\toprule
& \multicolumn{2}{c}{\textbf{8 replicas}}
& \multicolumn{2}{c}{\textbf{16 replicas}}
& \multicolumn{2}{c}{\textbf{32 replicas}} \\
\cmidrule(lr){2-3}
\cmidrule(lr){4-5}
\cmidrule(lr){6-7}
\diloco \textit{(ref.)}
& \multicolumn{2}{c}{3.30}
& \multicolumn{2}{c}{3.40}
& \multicolumn{2}{c}{3.53} \\
\midrule
\textbf{Topology}
& \textbf{\ours} & \textbf{Loc.-\DAdam}
& \textbf{\ours} & \textbf{Loc.-\DAdam}
& \textbf{\ours} & \textbf{Loc.-\DAdam} \\
\midrule
ring
& \textbf{3.32} & 3.51
& \textbf{3.45} & 3.69
& \textbf{3.81} & 4.00 \\
complete
& \textbf{3.30} & 3.47
& \textbf{3.40} & 3.65
& \textbf{3.53} & 3.86 \\
1-Peer
& \textbf{3.34} & 3.51
& \textbf{3.49} & 3.66
& \textbf{3.66} & 3.88 \\
2-Peer
& \textbf{3.32} & 3.48
& \textbf{3.45} & 3.66
& \textbf{3.62} & 3.86 \\
\bottomrule
\end{tabular}
}
\end{subtable}
\end{table*}

\begin{wraptable}{r}{0.5\textwidth}
\vspace{-1.3em}
    \centering
    \caption{\textbf{One-step Decentralized Methods.} Validation loss on 16 replicas for the 551M-parameter LLM after training on 11.02B tokens.}
    \label{tab:medium-1step-scaling}
    {\fontsize{11}{12.5}\selectfont
    \begin{tabular}{llc}
    \toprule
    \textbf{Method} &
    \textbf{Topology} &
    \textbf{Val Loss} \\
    \midrule
    AdamW DDP (\textit{ref.}) & --
    & 2.64 \\
    
    \cdashline{1-3}

    DAdam & ring
    & 2.73 \\
    
    \ours & ring
    & \textbf{2.70} \\
    \cdashline{1-3}
       DAdam & complete
    & 2.70 \\
    \ours-1-Peer & complete
    & 2.71 \\
    
    \ours-2-Peer & complete
    & \textbf{2.69} \\
    \bottomrule
    \end{tabular}
    }
\end{wraptable}

\subsection{Results with Homogeneous Workers}

\paragraph{Single-step Evaluation.} \label{sec:1-step-eval}
We first evaluate the methods in the single-step regime, where each communication round follows one local optimizer update. This setting allows us to directly compare to \citet{wang2024promise}. Our results are shown in \Cref{tab:small-1step} and \Cref{tab:medium-1step-scaling}.
Among decentralized methods, the strongest variant is \ours-2-Peer, which improves over complete-graph \DAdam, despite using substantially sparser communication.
This comparison is conservative in communication terms: complete-graph \DAdam communicates with all workers at every iteration, whereas \ours-2-Peer communicates with only two randomly selected peers per worker per round. 
A more communication-matched comparison is given by \DAdam-2-Peer, where both methods use the same randomized pattern. In this setting, \ours-2-Peer increases the gap, improving validation loss by 0.06.
These results suggest that randomized sparse communication in GASLoC can mitigate the degradation observed for complete-graph updates, while also avoiding global synchronization. GASLoC also shows competitive results on the challenging ring graph. At a larger scale, we observe even better trend in terms of GASLoC performance, with \ours-2-Peer significantly outperforming
DAdam as shown in \Cref{tab:medium-1step-scaling} in ring and complete topologies.

\paragraph{Multi-step regime.}
The \(H=30\) regime evaluates the setting in which the communication savings of local methods are most relevant. Each worker performs many optimizer updates before the outer synchronization step, so the communication step must correct both consensus error and the drift accumulated by local training. In this regime, \diloco is a strong reference point: it uses the same local computation pattern as \ours, but its outer step still relies on a complete \allreduce. The central question is therefore not whether sparse communication improves over complete averaging in isolation, but whether it can preserve comparable validation loss while removing the global synchronization requirement. \Cref{tab:small-30steps} shows that \ours with the complete graph matches \diloco across worker counts, as it recovers the same communication structure. We observe, on the other hand, that \versionD does not scale well for multiple steps, obtaining significantly worse results than DiLoCo. Furthermore, unlike \ours, it does not gain benefits from randomized peer communication. On the other hand we observe that \ours achieves performance close to DiLoCo even in the 1-Peer case where communication is both reduced and more fault tolerant. Note that on the ring graph, performance nearly matches that of the complete graph. In the larger scale setting in \Cref{tab:medium-30steps}, 
\ours-2-Peer can match \diloco and confirm the degradation of DAdam with local steps as well as the superior performance of \ours in this setting. 

\begin{table}[]
    \centering
    \caption{\textbf{Multi-step Decentralized Methods.} Validation loss for the 551M model, communicating every 30 steps using 8 and 16 replicas. All runs use a fixed training budget of 11.02B tokens.
    }
    \vspace{5pt}
    \label{tab:medium-30steps}
    {\fontsize{11}{12.5}\selectfont
    \begin{tabular}{llcc}
    \toprule
    \multirow{2}{*}{\textbf{Method}} &
    \multirow{2}{*}{\textbf{Topology}} &
    \multicolumn{2}{c}{\textbf{Val Loss}} \\
    \cline{3-4}
    &&
    \textbf{8 replicas} &
    \textbf{16 replicas} \\
    \midrule
    DiLoCo (\textit{ref.}) & --
    & 2.64
    & 2.67 \\
    
    \cdashline{1-4}

    \versionD & ring
    & 2.92
    & 3.07 \\
\ours & ring
    & \textbf{2.68}
    & \textbf{2.79} \\
\cdashline{1-4}
    \versionD & complete
    & 2.84
    & 2.97 \\

    \ours-1-Peer & complete
    & 2.67
    & 2.81 \\
    
    \ours-2-Peer & complete
    & \textbf{2.64}
    & \textbf{2.72} \\
    \bottomrule
\end{tabular}}
\end{table}

\subsection{Heterogeneous Communication Setting}

\begin{figure}[t]
    \centering
    \begin{subfigure}[b]{0.48\textwidth}
        \centering
        \includegraphics[width=\textwidth]{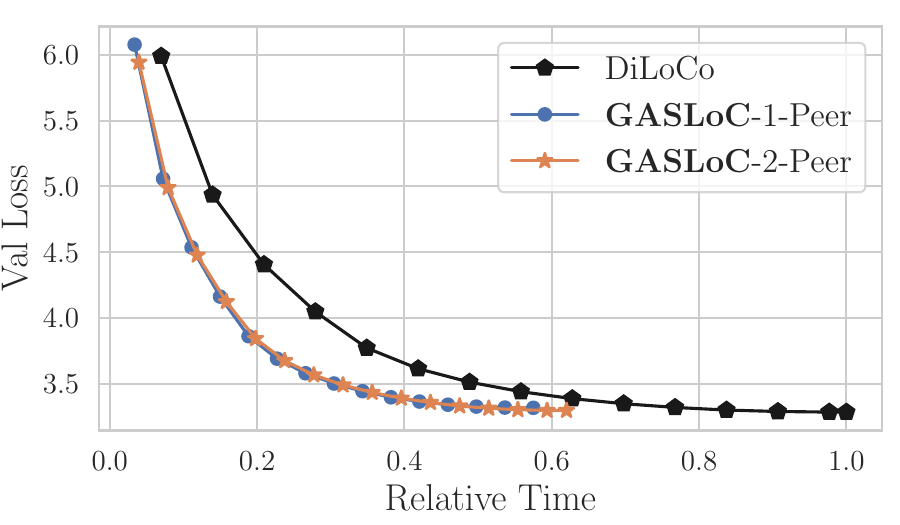}
        \caption{Straggler with 10\% of the bandwidth}
        \label{fig:heterogeneous-setting-10}
    \end{subfigure}
    \hfill %
    \begin{subfigure}[b]{0.48\textwidth}
        \centering
        \includegraphics[width=\textwidth]{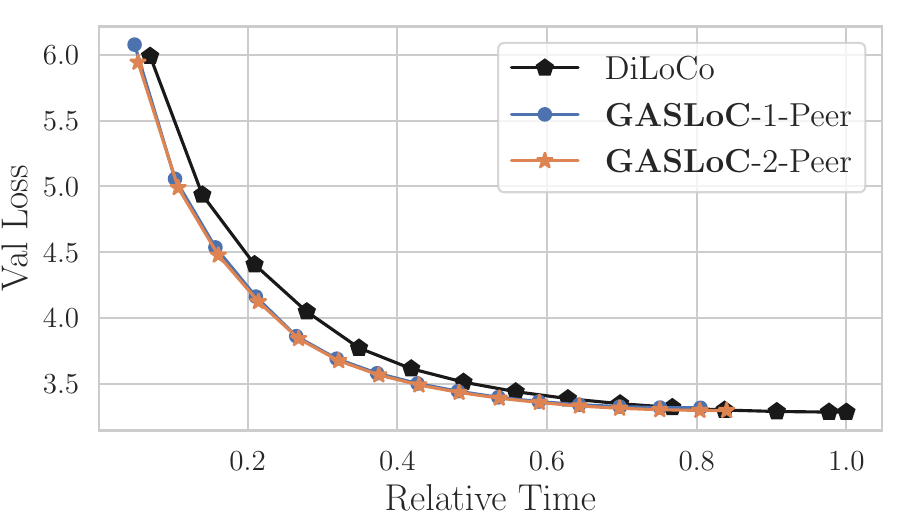}
        \caption{Straggler with 20\% of the bandwidth}
        \label{fig:heterogeneous-setting-20}
    \end{subfigure}
    \caption{\textbf{Robustness to bandwidth stragglers.}
Validation loss versus relative wall-clock time when one worker has reduced communication bandwidth.
\ours adapts to the straggler by reducing its local computation while keeping the non-straggling workers at \(H=30\).
At 10\% bandwidth (a), the straggler performs \(H_i=15\) steps for \ours-1-Peer and \(H_i=1\) for \ours-2-Peer.
At 20\% bandwidth (b), the lower communication cost allows the straggler to perform the full \(H_i=30\) steps for \ours-1-Peer and \(H_i=15\) for \ours-2-Peer.
Sparse peer-to-peer communication \emph{substantially improves} time-to-loss relative to \diloco, especially under severe straggling.}
    \label{fig:heterogeneous-setting}
\end{figure}

We next evaluate the effect of communication heterogeneity for the 134M-parameter model under the same fixed training budget set by Chinchilla scaling laws~\citep{hoffmann2022training}. As discussed, unlike DiLoCo, \ours is able to vary the number of local steps while still allowing synchronization to occur at the same time. In this experiment, all methods process the same total number of tokens, but one worker's communication bandwidth is reduced to either 10\% or 20\% of the bandwidth available to the other workers. We simulate a relatively common over-the-internet bandwidth of 1 Gbps across nodes and 100 or 200 Mbps for the inbound and outbound links on the straggler node. We use 8 nodes, each with an A100 GPU.  Following common \diloco implementations~\citep{therien2025muloco}, we model its outer synchronization as an \allreduce over the outer delta. This implementation is efficient on homogeneous interconnects, but every outer step still requires a global collective and therefore is delayed by the low-bandwidth worker. In contrast, \ours--1-Peer and \ours--2-Peer replace this global synchronization with sparse randomized peer exchanges and allow the number of local steps to vary across workers, so that the straggler's longer communication time is compensated by reduced local computation before synchronization. \Cref{fig:heterogeneous-setting} reports the validation loss as training progresses over normalized wall-clock time, using \diloco as reference. We observe that under this practical scenario a significant wall-clock time advantage can be obtained over DiLoCo.

\begin{wrapfigure}[14]{r}{0.48\textwidth}
    \centering
    \vspace{-38pt}
    \includegraphics[width=\linewidth]{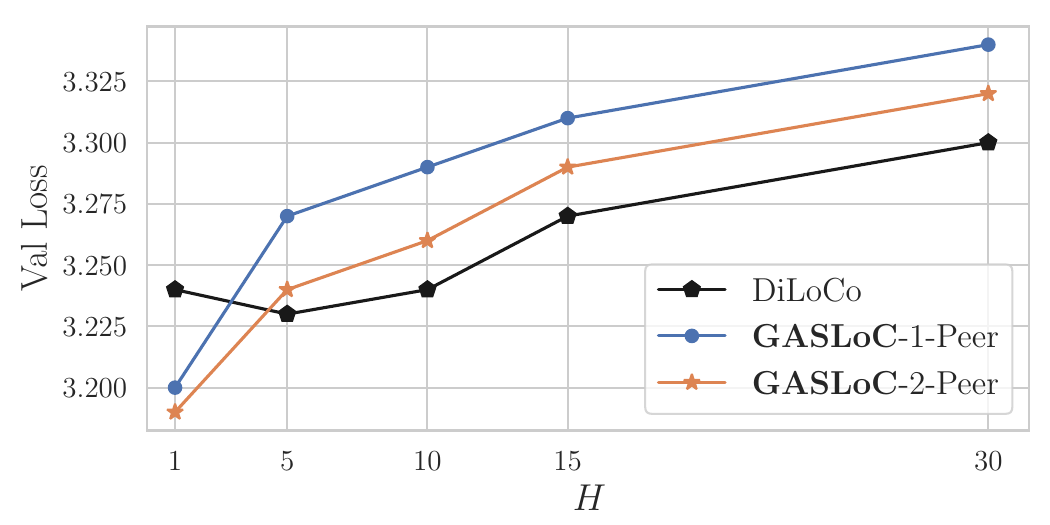}
    \caption{Final validation loss for a local-step sweep on the 134M model with 8 workers. We compare \diloco and sparse \ours variants with one or two randomized peer exchanges per outer step. Sparse variants follow the same qualitative trend as \diloco as \(H\) increases.}
    \label{fig:local-step-sweep}
\end{wrapfigure}

\subsection{Sensitivity to Communication Frequency}

We also analyze the sensitivity of \ours to the number of local steps, in particular in the 1-Peer and 2-Peer settings. \Cref{fig:local-step-sweep} evaluates whether sparse randomized communication preserves the same dependence on communication frequency as \diloco. We first observe that \ours-1-Peer and \ours-2-Peer variants outperform DiLoCo at low \(H\). This is consistent with the results in \citet{defazio2025smoothing}, showing that at very low H \diloco performance can degrade. For \(H>5\), the trend is consistent in that \ours-2-Peer and 1-Peer slightly underperform DiLoCo (while having communication benefits) but have similar scaling properties with respect to inner steps.

\section{Conclusion}
We introduced \ours, a decentralized pre-training method that combines local optimizer steps, sparse randomized peer communication, and an outer optimizer. The method unifies \diloco and gossip-based training: complete communication recovers the \diloco update, while sparse peer communication replaces global collectives with local exchanges and connects the outer momentum mechanism to communication acceleration. We showed, both theoretically and empirically, that this formulation preserves the main behavior of \diloco in the homogeneous setting while removing the dependence on global synchronization. In a simulated bandwidth-straggler setting, \ours reaches a comparable validation loss in a shorter wall-clock time by combining sparse communication and worker-specific local step counts. These results provide, to our knowledge, the first evidence that \localsgd-style LLM pre-training can be effectively combined with decentralized peer-to-peer communication when global synchronization is costly.

\section*{Acknowledgements}
This work was supported by PEPR IA on grant
SHARP ANR-23-PEIA-0008 and PEPR NUMPEX on grant  DAIMOS ANR-25-EXNU-0002. Part of this work was
granted access to the JeanZay HPC/AI resources of IDRIS under the allocation AD011015884R1, AD011017481, AD011016641, AD011017661 and AD011017766. E.B. and P.C. acknowledge funding from NSERC Discovery and FRQNT New Scholar.

\bibliographystyle{plainnat}
\bibliography{sample}
\newpage
\appendix

\section{Proof of \Cref{prop:cvg}}
\label{app:math:convergence}
\begin{proposition}
Let \(x^\star\in\arg\min f\), and suppose that \(x^\star\) is an unconstrained
minimizer, so that \(\nabla f(x^\star)=0\). Suppose that, for every \(\xi\),
\(F(\cdot;\xi)\) is \(L\)-smooth. Assume moreover that the stochastic gradients
are unbiased and have variance bounded by \(\sigma^2\), namely
\[
\mathbb{E}_\xi[\nabla F(x;\xi)]=\nabla f(x),
\qquad
\mathbb{E}_\xi\bigl[
\|\nabla F(x;\xi)-\nabla f(x)\|^2
\bigr]\le \sigma^2 .
\]
Assume that
$0<\alpha<1,0<\beta\le \frac1{8L},$
and that the outer step size satisfies
$
0<\eta
\le
\frac{\alpha}{3\sqrt{3}\beta\rho L}.$ The constant \(\rho\) depends on the communication scheme as follows:
\begin{enumerate}
    \item \textbf{Standard gossip with spectral gap \(\chi\):} we take
    \[
    \rho=\chi,
    \qquad
    \gamma=0.
    \]

    \item \textbf{Accelerated gossip with spectral gap $\chi$ via outer momentum:} we take
    \[
    \rho=\sqrt{\chi},
    \qquad
    \gamma=
    \left(
    1-\sqrt{\frac{\alpha}{\chi}}
    \right)^2 .
    \]

    \item \textbf{\(k\)-Peer gossip:} we take
    \[
    \rho =
    \begin{cases}
    \displaystyle \frac{n}{n-1},
    & \text{for randomized $1$-Peer communication}, \\[1.2em]
    \displaystyle
    \frac{\alpha}{
    1-
    \sqrt{
    1-\frac{n}{n-1}\alpha
    +
    \frac{3n}{8(n-1)}\alpha^2
    }},
    & \text{for randomized $2$-Peer communication},
    \end{cases}
    \qquad
    \gamma=0.
\]
\end{enumerate}
If the initialization satisfies $x_0^1=\cdots=x_0^n=x_0,$ then, for every \(T\ge 1\),
\[
\frac1T
\sum_{t=0}^{T-1}
\mathbb{E}\bigl[\|\nabla f(\bar x_t)\|^2\bigr]
\le
\frac{4(f(x_0)-f(x^\star))}{T\eta\beta}
+
\frac{2L\eta\beta\sigma^2}{n^2}
\sum_{i=1}^n H_i
+
60\beta^2L^2\sigma^2 .
\]
\end{proposition}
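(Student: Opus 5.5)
The plan is a Lyapunov argument that couples the descent of $f$ along the network average with a consensus term. The first observation is that every mixing operator is a Laplacian with $\mathbf 1^\top\Lambda=0$. Averaging the outer update therefore removes both the gossip term and, since the momentum term is also of Laplacian form, the momentum correction acting on the disagreement. This gives the exact recursion $\bar x_{t+1}=\bar x_t+\eta\bar g_t$, where $\bar g_t=\frac1n\sum_i g_t^i$ and $g_t^i=-\beta\sum_{h<H_i}\nabla F(x^i_{t,h};\xi)$. For the accelerated scheme I would first verify that $\mathbf 1^\top$ annihilates the momentum contribution on the average. Otherwise, I would work with the averaged heavy-ball variable and absorb the discrepancy into the consensus term.

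Step 1 is the descent on $\bar x_t$. I apply $L$-smoothness of $f$ to $\bar x_{t+1}=\bar x_t+\eta\bar g_t$ and split $\bar g_t$ into its mean $-\frac{\beta}{n}\sum_i\sum_h\nabla f(x^i_{t,h})$ plus martingale noise. The noise variance is $\frac{\beta^2\sigma^2}{n^2}\sum_iH_i$, which produces the term $\frac{2L\eta\beta\sigma^2}{n^2}\sum_iH_i$ after dividing by $\eta\beta$. For the mean, I use $\langle a,b\rangle=\frac12(\|a\|^2+\|b\|^2-\|a-b\|^2)$ together with $\beta\le 1/(8L)$. This extracts $-c\,\eta\beta\|\nabla f(\bar x_t)\|^2$ at the cost of $L^2\|x^i_{t,h}-\bar x_t\|^2$.

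I then decompose $\|x^i_{t,h}-\bar x_t\|^2\le 2\|x^i_{t,h}-x^i_t\|^2+2\|x^i_t-\bar x_t\|^2$. The local drift is controlled by the standard Local SGD bound $\mathbb E\|x^i_{t,h}-x^i_t\|^2\lesssim \beta^2h^2\|\nabla f(x_t^i)\|^2+\beta^2h\sigma^2$, obtained by unrolling with $\beta L\le 1/8$. This drift is what should yield the $60\beta^2L^2\sigma^2$ residual. The factor $h$ versus $1$ would have to be absorbed into constants; if that fails, I would accept a residual carrying an $H$ dependence. I then reduce $\|\nabla f(x^i_t)\|^2$ to $\|\nabla f(\bar x_t)\|^2$ plus consensus.

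Step 2 is the consensus recursion, and it is where the schemes differ. Set $\Xi_t=\pi(x_t+\eta g_t)$, so that $\pi x_{t+1}=(I-\alpha\Lambda)\Xi_t$ when $\gamma=0$.
\begin{itemize}
\item For standard gossip, $\|I-\alpha\Lambda\|\le 1-\alpha/\chi$ on the range of $\pi$. Then $\|\pi x_{t+1}\|^2\le(1-\alpha/\chi)(\|\pi x_t\|^2(1+\epsilon)+\ldots)$, and choosing $\epsilon\sim\alpha/\chi$ gives a contraction with a forcing term proportional to $\frac{\chi}{\alpha}\eta^2\|\pi g_t\|^2$.
\item For $1$-Peer, I use the earlier proposition, $\mathbb E\Lambda=\mathbb E\Lambda^2=\frac{n}{2(n-1)}\pi$. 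This gives $\mathbb E\|(I-\alpha\Lambda)z\|^2=(1-\alpha(2-\alpha)\frac{n}{2(n-1)})\|z\|^2$. For $2$-Peer, the same proposition gives $\mathbb E\|(I-\alpha\Lambda)z\|^2=(1-\frac{n}{n-1}\alpha+\frac{3n}{8(n-1)}\alpha^2)\|z\|^2$. The contraction factor is $q=1-\alpha/\rho$ in the $2$-Peer case, which matches the stated $\rho$ once square roots are taken. For $1$-Peer the factor $1-\alpha(2-\alpha)\frac{n}{2(n-1)}$ is at most $1-\alpha\frac{n-1}{n}$ because $2-\alpha>1$, consistent with $\rho=\frac n{n-1}$.
\item For accelerated gossip, I write the second-order recursion on $(\pi x_{t},\pi x_{t-1})$ as a $2\times2$ block system in the eigenbasis of $\Lambda$. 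With $\gamma=(1-\sqrt{\alpha/\chi})^2$, this is the classical heavy-ball/Chebyshev tuning, giving spectral radius $1-\sqrt{\alpha/\chi}$ for all eigenvalues in $[1/\chi,1]$.
\end{itemize}

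This last item is the main obstacle. The block matrix is not normal, so I must build a quadratic Lyapunov weight $P$ for the block system and absorb its condition number into constants. The plan is to follow \citet{loizou2019provably}, i.e.\ choose $P$ per eigenvalue so that $\|M\|_P\le 1-\sqrt{\alpha/\chi}$ holds uniformly.

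Step 3 combines the two pieces. The forcing term $\eta^2\|\pi g_t\|^2\lesssim \eta^2\beta^2(H^2L^2\|\pi x_t\|^2+\ldots)$ multiplies $\rho/\alpha$. The condition $\eta\le\alpha/(3\sqrt3\beta\rho L)$ makes $\frac{\rho}{\alpha}\eta^2\beta^2L^2$ times the amplification at most a small fraction of the contraction gap $\alpha/\rho$, which closes the recursion. I take $\Phi_t=f(\bar x_t)-f^\star+c\,L^2\beta\frac{\rho}{\alpha}\frac1n\|\pi x_t\|^2$, sum over $t$, and use $\pi x_0=0$ and $f\ge f^\star$. Telescoping then gives the stated bound after dividing by $T\eta\beta/4$.
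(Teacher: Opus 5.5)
Your overall architecture matches the paper's: descent of $f(\bar x_t)$ via the polarization identity, a drift lemma, a per-scheme gossip contraction, and a Lyapunov function $d_f(\bar x,x^\star)+\Omega\|\pi x\|^2$. The decisive gap is the inner step size.

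\textbf{Inner step size.} You take $g_t^i=-\beta\sum_{h<H_i}\nabla F(x^i_{t,h};\xi)$, i.e.\ $H_i$ un-normalized steps of size $\beta$. Two steps then fail.
\begin{itemize}
\item Your drift bound $\mathbb E\|x^i_{t,h}-x^i_t\|^2\lesssim\beta^2h^2\|\nabla f(x^i_t)\|^2+\beta^2h\sigma^2$ does not follow from $\beta L\le 1/8$. Unrolling produces factors $(1+\beta L)^{h}$, which are $O(1)$ only if $\beta L H_i\lesssim 1$.
\item Absorbing $\frac{L\eta^2}{2}\|\mathbb E\bar g_t\|^2\sim L\eta^2\beta^2H^2\|\nabla f\|^2$ into the descent term $\sim\eta\beta H\|\nabla f\|^2$ needs $L\eta\beta H\lesssim 1$.
\end{itemize}
Neither condition is a hypothesis, and with un-normalized inner steps the proposition is false. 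Take any $\gamma=0$ scheme, $F(\cdot;\xi)=f=\frac L2\|x\|^2$ (so $\sigma=0$), and identical initialization. All workers coincide, the gossip term vanishes, and $\bar x_{t+1}=\bigl(1-\eta(1-(1-\beta L)^{H})\bigr)\bar x_t$. For small $\beta$ the hypotheses allow $\eta>2$. A large $H$ then makes the iterates diverge while the right-hand side tends to $0$. So accepting an $H$-dependent residual is not an option.

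The missing idea is the one used in the paper's gradient-mismatch and inner-loop drift lemmas: run the inner loop with step $\beta/H$, so that $(1+\beta L/H)^{2H}\le e^{2\beta L}\le e^2$. All drift constants become $H$-free, and the drift noise contributes $6e^2\beta^2L^2\sigma^2\le 60\beta^2L^2\sigma^2$. The averaged noise has variance $\frac{\beta^2\sigma^2}{n^2}\sum_i H_i^{-1}$, which the stated $\sum_iH_i$ term dominates.

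A minor point: your Lyapunov weight needs a factor $\eta$. The paper takes $\Omega=\frac{3\rho\eta\beta L^2e^2}{2\alpha n}$ and absorbs the forcing through the $-\frac\beta2\|G^i_t\|^2$ term of the polarization identity, using $\sum_i\|G^i_t\|^2=\|\pi G_t\|^2+n\|\bar G_t\|^2$.

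\textbf{Scheme-specific steps.} Several of these also fail as written.

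(a) The momentum term $\gamma[(x_t+\eta g_t)-(x_{t-1}+\eta g_{t-1})]$ is not of Laplacian form, so the average follows heavy-ball dynamics. That discrepancy lives in the mean, not in the disagreement, so it cannot be absorbed into a consensus term.

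(b) The uniform $P$-norm you plan does not exist. At $\lambda=1/\chi$, the choice $\gamma=q^2$ with $q=1-\sqrt{\alpha/\chi}$ gives $1+\gamma-\alpha\lambda=2q$. This is a double root of a companion matrix, i.e.\ a Jordan block. Hence $\|M(1/\chi)^k\|\ge(k+1)q^k$, no norm gives $\|M\|_P\le q$, and any quadratic norm with rate $1-c\sqrt{\alpha/\chi}$ has condition number $\gtrsim\chi/\alpha$. This is not cosmetic. Momentum leaves the zero-frequency gain $\frac{1-\alpha\lambda}{\alpha\lambda}$ from $\pi y_t$ to $\pi x_t$ unchanged. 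So where $f$ is a concave quadratic of curvature $-L$, the disagreement recursion has a real root above $1$ as soon as $\eta\beta L>\frac{\alpha/\chi}{1-\alpha/\chi}$, for every $\gamma$. No per-step Lyapunov decrease with a quadratic consensus potential can then hold, for large $\chi$, at the step sizes $\eta\sim\alpha/(\sqrt\chi\,\beta L)$ that $\rho=\sqrt\chi$ permits.

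(c) For $1$-Peer, your inequality $1-\alpha(2-\alpha)\frac{n}{2(n-1)}\le 1-\alpha\frac{n-1}{n}$ fails for $\alpha$ near $1$ and $n\ge 4$. In any case, the Young step you used (correctly) for $2$-Peer needs $\sqrt{c_1}\le1-\alpha/\rho$, with $c_1=1-\frac{n}{n-1}\alpha+\frac{n}{2(n-1)}\alpha^2$. That gives $\rho_1=\alpha/(1-\sqrt{c_1})\approx2(n-1)/n$ for small $\alpha$, nearly twice the stated $n/(n-1)$ when $n$ is large.

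(d) The ``$\ldots$'' in your forcing term hides the stochastic part of $\pi g_t$. With normalized steps its mean square is about $(n-1)\beta^2\sigma^2/H$. After weighting and dividing by $\eta\beta$, it leaves a residual of order $\frac{\rho^2\eta^2\beta^2L^2}{\alpha^2H}\sigma^2$. The stated $\eta$-condition only makes this $O(\sigma^2/H)$, which $60\beta^2L^2\sigma^2$ does not cover.

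Following the paper does not repair (a)--(d).
\begin{itemize}
\item Its accelerated lemma keeps $\bar x_{t+1}=\bar x_t+\eta\bar g_t$ and puts momentum on $\pi x_t$ only.
\item It asserts a contraction ``in an equivalent norm'' and then uses it as the Euclidean one.
\item Its $1$-Peer computation doubles $1-c_1$.
\item Its forcing term uses the noiseless $G_t$.
\end{itemize}
After the normalization fix, what survives of your plan is the standard-gossip and $2$-Peer contraction estimates, and (d) still affects every scheme.
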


\[
\bar x \triangleq \frac1n\sum_{i=1}^n x^i
\]
denote the network average.
We introduce the Bregman divergence associated with \(f\):
\begin{equation}
d_f(x,y)\triangleq f(x)-f(y)-\langle \nabla f(y),\,x-y\rangle.
\end{equation}
The following standard identities will be used repeatedly:
\begin{align}
d_F(x,z)
&=
d_F(x,y)+d_F(y,z)+\langle x-y,\nabla F(y)-\nabla f(z)\rangle, \\
d_F(x,y)
&\le \frac{L}{2}\|x-y\|^2, \\
d_F(x,x^\star)
&\ge 0.
\end{align}
Note that the above also holds for $f=\mathbb{E}[F]$.

For the inner loop, recall that for each node \(i\),
\begin{equation}
x_{t,k+1}^i = x_{t,k}^i - \beta \nabla F(x_{t,k}^i;\xi^i_{t,k}),
\qquad k=0,\dots,H_i-1,
\end{equation}
with \(x_{t,0}^i=x_t^i\). We define the accumulated local update
\begin{equation}
g_t^i \triangleq \beta \sum_{k=0}^{H_i-1} \nabla F(x_{t,k}^i;\xi_{t,k}^i),
\end{equation}
so that
\begin{equation}
x_{t,H_i}^i = x_t^i - g_t^i.
\end{equation}
Stacking the local vectors, we write \(g_t=(g_t^1,\dots,g_t^n)\), and the outer update becomes
\begin{equation}
x_{t+1}
=
x_t+\eta g_t-\alpha \Lambda(x_t+\eta g_t).
\end{equation}

Let
\[
\bar x_t \triangleq \frac{1}{n}\sum_{i=1}^n x_t^i,
\qquad
\bar g_t \triangleq \frac{1}{n}\sum_{i=1}^n g_t^i.
\]
Since \(\Lambda \mathbf 1 = 0\), averaging the outer update yields
\begin{equation}
\bar x_{t+1} = \bar x_t + \eta \bar g_t.
\end{equation}

Applying the three-point identity for \(d_f\) with
\[
x=\bar x_{t+1},\qquad y=\bar x_t,\qquad z=x^\star,
\]
we obtain
\begin{align}
d_f(\bar x_{t+1},x^\star)-d_f(\bar x_t,x^\star)
&=
d_f(\bar x_{t+1},\bar x_t)
+
\left\langle
\bar x_{t+1}-\bar x_t,
\nabla f(\bar x_t)-\nabla f(x^\star)
\right\rangle \\
&=
d_f(\bar x_t+\eta \bar g_t,\bar x_t)
+
\eta
\left\langle
\bar g_t,
\nabla f(\bar x_t)-\nabla f(x^\star)
\right\rangle .
\end{align}
If \(x^\star\) is an unconstrained minimizer, then \(\nabla f(x^\star)=0\), and therefore
\begin{equation}
d_f(\bar x_{t+1},x^\star)-d_f(\bar x_t,x^\star)
=
d_f(\bar x_t+\eta \bar g_t,\bar x_t)
+
\eta \langle \bar g_t,\nabla f(\bar x_t)\rangle.
\end{equation}
Using \(L\)-smoothness,
\begin{equation}
d_f(\bar x_t+\eta \bar g_t,\bar x_t)
\le
\frac{L\eta^2}{2}\|\bar g_t\|^2.
\end{equation}
Then, by the bias--variance decomposition,
\begin{align}
\mathbb E\|\bar g_t\|^2
&=
\left\|\mathbb E_t[\bar g_t]\right\|^2
+
\mathbb E
\left\|
\bar g_t-\mathbb E[\bar g_t]
\right\|^2 \\
&=
\left\|
\frac1n\sum_{i=1}^n \mathbb E[g_t^i]
\right\|^2
+
\frac1{n^2}
\sum_{i=1}^n
\mathbb E
\left\|
g_t^i-\mathbb E[g_t^i]
\right\|^2 \\
&\le\beta^2
\left\|
\frac1n\sum_{i=1}^n \frac 1{H_i}\sum_{k=0}^{H_i-1}\nabla f(x_{t,k}^i)
\right\|^2
+
\frac{\beta^2\sigma^2}{n^2}
\sum_{i=1}^n H_i .
\end{align}

Now, we will use that, taking the expectations:

\begin{align}
   \mathbb{E} \langle \bar g_t,\nabla f(\bar x_t)\rangle =\frac 1n\sum_{i=1}^n\mathbb{E}\langle g^i_t,\nabla f(\bar x_t)\rangle
\end{align}

Here, using Lemma \ref{lemma:gradient-mismatch}, we have:
\begin{equation}
\begin{aligned}
\mathbb E\!\left[
\left\langle g_t^i,\nabla f(\bar x_t)\right\rangle
\right]
&\le
-\beta\left(
\frac12-\frac32\beta^2L^2e^2
\right)
\mathbb E\|\nabla f(\bar x_t)\|^2 -\frac{\beta}{2}\mathbb E\|\frac 1{H_i}\sum_{k=0}^{H_i-1}\nabla f(x_{t,k}^i)\|^2\\
&\quad+
\frac{3}{2}\beta L^2e^2
\mathbb E\|x_t^i-\bar x_t\|^2
+
\frac{3}{2}\beta^3L^2e^2\sigma^2 .
\end{aligned}
\end{equation}

Introduce $G^i_t=\frac 1{H_i}\sum_{k=0}^{H_i-1}\nabla f(x_{t,k}^i)$. Combining the last two inequalities yields
\begin{align}
&\mathbb E\!\left[
d_f(\bar x_{t+1},x^\star)
-
d_f(\bar x_t,x^\star)
\right] \notag \\
&\le
-
\eta\beta
\left(
\frac12-\frac32\beta^2L^2e^2
\right)
\mathbb E\|\nabla f(\bar x_t)\|^2 \notag \\
&\quad
-
\frac{\beta \eta}{2n}
\sum_{i=1}^n
\mathbb E
\left\|
G^i_t
\right\|^2 +\frac {\eta^2L\beta^2}{2}\mathbb E
\left\|\frac 1n\sum_{i=1}^n
G^i_t
\right\|^2\notag \\
&\quad
+
\frac{3\eta\beta L^2e^2}{2n}
\sum_{i=1}^n
\mathbb E\|x_t^i-\bar x_t\|^2 \notag \\
&\quad
+
\frac{L\eta^2\beta^2\sigma^2}{2n^2}
\sum_{i=1}^n H_i
+
\frac{3}{2}\eta\beta^3L^2e^2\sigma^2 .
\end{align}

Next, we use the communication contraction bound given by Lemma \ref{lemma:gossip}.
\[
\Delta V_\pi
\le
-\frac{\alpha}{\rho}\|\pi x_t\|^2
+
\frac{\rho\eta^2\beta^2}{\alpha}\|\pi G_t\|^2.
\]
Introduce the Lyapunov function
\[
\phi
\triangleq
d_f(\bar x,x^\star)+\Omega V_{\pi}(x).
\]
Combining the previous inequalities gives
\begin{align}
\mathbb{E}[\phi_{t+1}-\phi_t]
&\le
-
\eta\beta
\left(
\frac12-\frac32\beta^2L^2e^2
\right)
\mathbb E\|\nabla f(\bar x_t)\|^2 \notag \\
&\quad
-
\frac{\beta \eta}{2n}
\sum_{i=1}^n
\mathbb E
\left\|
G_t^i
\right\|^2
+
\frac{\eta^2L\beta^2}{2}
\mathbb E
\left\|
\frac1n\sum_{i=1}^n G_t^i
\right\|^2 \notag \\
&\quad
+
\frac{3\eta\beta L^2e^2}{2n}
\sum_{i=1}^n
\mathbb E\|x_t^i-\bar x_t\|^2 \notag \\
&\quad
+
\frac{L\eta^2\beta^2\sigma^2}{2n^2}
\sum_{i=1}^n H_i
+
\frac32\eta\beta^3L^2e^2\sigma^2 \notag \\
&\quad
+
\Omega
\left(
-\frac{\alpha}{\rho}\mathbb E\|\pi x_t\|^2
+
\frac{\rho\eta^2\beta^2}{\alpha}\mathbb E\|\pi G_t\|^2
\right).
\end{align}

Using
\[
\sum_{i=1}^n
\mathbb E\|G_t^i\|^2
=
\mathbb E\|\pi G_t\|^2
+
n\mathbb E\|\bar G_t\|^2,
\qquad
\bar G_t
\triangleq
\frac1n\sum_{i=1}^n G_t^i,
\]
we recombine the inequalities and obtain
\begin{align}
\mathbb{E}[\phi_{t+1}-\phi_t]
&\le
-
\eta\beta
\left(
\frac12-\frac32\beta^2L^2e^2
\right)
\mathbb E\|\nabla f(\bar x_t)\|^2 \notag \\
&\quad
-
\left(
\frac{\beta\eta}{2}
-
\frac{L\beta^2\eta^2}{2}
\right)
\mathbb E\|\bar G_t\|^2 \notag \\
&\quad
-
\left(
\frac{\beta\eta}{2n}
-
\Omega\frac{\rho\eta^2\beta^2}{\alpha}
\right)
\mathbb E\|\pi G_t\|^2 \notag \\
&\quad
-
\left(
\Omega\frac{\alpha}{\rho}
-
\frac{3\eta\beta L^2e^2}{2n}
\right)
\mathbb E\|\pi x_t\|^2 \notag \\
&\quad
+
\frac{L\eta^2\beta^2\sigma^2}{2n^2}
\sum_{i=1}^n H_i
+
\frac32\eta\beta^3L^2e^2\sigma^2 .
\end{align}

Here, we gonna pick $\Omega
=
\frac{3\rho\eta\beta L^2e^2}{2\alpha n}$

so that:
\begin{align}
\left(
-\frac{\beta\eta}{2n}
+
\Omega\frac{\beta^2\rho\eta^2}{\alpha}
\right)
&=
-\frac{\beta\eta}{2n}
+
\frac{3\beta^2\rho^2\eta^3\beta L^2e^2}{2\alpha^2 n} \\
&=
\frac{\beta\eta}{2n}
\left(
-1
+
\frac{3\rho^2\beta^2\eta^2L^2e^2}{\alpha^2}
\right).
\end{align}

Thus it is necessary so that $\frac{3\rho^2\beta^2\eta^2L^2e^2}{\alpha^2}\leq 1$, $\frac32\beta^2L^2e^2\leq \frac 14$ and $L\beta\eta\leq 1$

Then, we obtain
\[
\mathbb{E}[\phi_{t+1}-\phi_t]
\le
-\frac{\eta\beta}{4}
\mathbb E\|\nabla f(\bar x_t)\|^2
+
\frac{L\eta^2\beta^2\sigma^2}{2n^2}
\sum_{i=1}^n H_i
+
\frac32\eta\beta^3L^2e^2\sigma^2 .
\]

Summing the above inequality from \(t=0\) to \(T-1\), we get
\begin{align}
\sum_{t=0}^{T-1}
\mathbb{E}[\phi_{t+1}-\phi_t]
&\le
-\frac{\eta\beta}{4}
\sum_{t=0}^{T-1}
\mathbb E\|\nabla f(\bar x_t)\|^2
+
\sum_{t=0}^{T-1}
\left(
\frac{L\eta^2\beta^2\sigma^2}{2n^2}
\sum_{i=1}^n H_i
+
\frac32\eta\beta^3L^2e^2\sigma^2
\right) \\
&=
-\frac{\eta\beta}{4}
\sum_{t=0}^{T-1}
\mathbb E\|\nabla f(\bar x_t)\|^2
+
T
\left(
\frac{L\eta^2\beta^2\sigma^2}{2n^2}
\sum_{i=1}^n H_i
+
\frac32\eta\beta^3L^2e^2\sigma^2
\right).
\end{align}

Since the left-hand side telescopes, we have
\[
\sum_{t=0}^{T-1}
\mathbb{E}[\phi_{t+1}-\phi_t]
=
\mathbb E[\phi_T]-\phi_0.
\]
Therefore,
\begin{align}
\mathbb E[\phi_T]-\phi_0
&\le
-\frac{\eta\beta}{4}
\sum_{t=0}^{T-1}
\mathbb E\|\nabla f(\bar x_t)\|^2
+
T
\left(
\frac{L\eta^2\beta^2\sigma^2}{2n^2}
\sum_{i=1}^n H_i
+
\frac32\eta\beta^3L^2e^2\sigma^2
\right).
\end{align}

Since \(\phi_T\ge 0\), it follows that
\[
-\phi_0
\le
\mathbb E[\phi_T]-\phi_0.
\]
Combining this with the previous inequality gives
\begin{align}
-\phi_0
&\le
-\frac{\eta\beta}{4}
\sum_{t=0}^{T-1}
\mathbb E\|\nabla f(\bar x_t)\|^2
+
T
\left(
\frac{L\eta^2\beta^2\sigma^2}{2n^2}
\sum_{i=1}^n H_i
+
\frac32\eta\beta^3L^2e^2\sigma^2
\right).
\end{align}
Rearranging, we obtain
\begin{align}
\frac{\eta\beta}{4}
\sum_{t=0}^{T-1}
\mathbb E\|\nabla f(\bar x_t)\|^2
&\le
\phi_0
+
T
\left(
\frac{L\eta^2\beta^2\sigma^2}{2n^2}
\sum_{i=1}^n H_i
+
\frac32\eta\beta^3L^2e^2\sigma^2
\right).
\end{align}

Dividing both sides by \(T\eta\beta/4\), we get
\begin{align}
\frac1T
\sum_{t=0}^{T-1}
\mathbb E\|\nabla f(\bar x_t)\|^2
&\le
\frac{4\phi_0}{T\eta\beta}
+
\frac{4}{\eta\beta}
\left(
\frac{L\eta^2\beta^2\sigma^2}{2n^2}
\sum_{i=1}^n H_i
+
\frac32\eta\beta^3L^2e^2\sigma^2
\right) \\
&=
\frac{4(f(x_0)-f(x^*))}{T\eta\beta}
+
\frac{2L\eta\beta\sigma^2}{n^2}
\sum_{i=1}^n H_i
+
6\beta^2L^2e^2\sigma^2 .
\end{align}

\begin{lemma}[Gossip lemma]\label{lemma:gossip}
\textbf{(i) Standard gossip.}
Assume \(\gamma=0\). Then
\[
\|\pi x_{t+1}\|^2-\|\pi x_t\|^2
\le
-\frac{\alpha}{\chi}\|\pi x_t\|^2
+
\frac{\chi}{\alpha}\|\pi y_t\|^2 .
\]

\medskip

\noindent
\textbf{(ii) Random \(k\)-Peer gossip.}
Assume \(\gamma=0\). For the randomized \(1\)-Peer scheme, the bound holds with
\[
    \chi_1=\frac{n}{n-1},
\]
namely
\[
\mathbb E_t\!\left[\|\pi x_{t+1}\|^2\right]
-
\|\pi x_t\|^2
\le
-\frac{\alpha}{\chi_1}\|\pi x_t\|^2
+
\frac{\chi_1}{\alpha}\|\pi y_t\|^2 .
\]
For the randomized \(2\)-Peer scheme, the bound holds with
\[
    \chi_2
    =
    \frac{\alpha}{
    1-
    \sqrt{
    1-\frac{n}{n-1}\alpha
    +
    \frac{3n}{8(n-1)}\alpha^2
    }}
\]
as
\[
\mathbb E_t\!\left[\|\pi x_{t+1}\|^2\right]
-
\|\pi x_t\|^2
\le
-\frac{\alpha}{\chi_2}\|\pi x_t\|^2
+
\frac{\chi_2}{\alpha}\|\pi y_t\|^2 .
\]
\medskip

\noindent
\noindent
\textbf{(iii) Accelerated gossip.}
Assume
\[
\gamma=\left(1-\sqrt{\frac{\alpha}{\chi}}\right)^2,
\qquad 0<\alpha\le 1,
\]
and define
\[
X_t:=
\begin{pmatrix}
\pi x_t\\
\pi x_{t-1}
\end{pmatrix}.
\]
Then,
\[
\|X_{t+1}\|^2-\|X_t\|^2
\le
-\frac{\alpha}{\sqrt{\chi}}\|X_t\|^2
+
\frac{\sqrt{\chi}}{\alpha}\|\pi y_t\|^2\leq -\frac{\alpha}{\sqrt{\chi}}\|\pi x_t\|^2
+
\frac{\sqrt{\chi}}{\alpha}\|\pi y_t\|^2.
\]
\end{lemma}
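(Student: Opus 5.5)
The plan is to reduce all three parts to a single one-step contraction estimate on the disagreement subspace, followed by a Young inequality. Write \(y_t\) for the local increment (in the proof of \Cref{prop:cvg}, \(y_t=\eta g_t\), so \(\pi y_t=\eta\beta\,\pi G_t\) up to the stochastic part). With \(\gamma=0\) the update reads \(x_{t+1}=(I-\alpha\Lambda)(x_t+y_t)\). Since \(\Lambda\mathbf 1=0\) and \(\Lambda\) is symmetric, \(\pi\) commutes with \(\Lambda\), so
\[
\pi x_{t+1}=(I-\alpha\Lambda)\,\pi(x_t+y_t).
\]
The generic step I would isolate is the following. Suppose
\[
\mathbb E_t\|(I-\alpha\Lambda)z\|^2\le (1-c)^2\|z\|^2 \qquad\text{for all } z\in\operatorname{range}(\pi),
\]
with \(c=\alpha/\rho\). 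Then I apply \(\|a+b\|^2\le(1+\varepsilon)\|a\|^2+(1+1/\varepsilon)\|b\|^2\) with \(\varepsilon=c/(1-c)\). This gives \((1-c)^2(1+\varepsilon)=1-c\) and \((1-c)^2(1+1/\varepsilon)=(1-c)^2/c\le 1/c=\rho/\alpha\), which is exactly the claimed bound.

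For \textbf{(i)}, the eigenvalues of \(\Lambda\) on \(\operatorname{range}(\pi)\) lie in \([1/\chi,\lambda_{\max}]\). The normalisation \(\sum\lambda_{ij}=1\) gives \(\lambda_{\max}\le 1\) (a Gershgorin-type bound on the weighted degrees), and together with \(\alpha<1\) this yields \(\|(I-\alpha\Lambda)\pi\|\le 1-\alpha/\chi\). The generic step then applies with \(\rho=\chi\).

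For \textbf{(ii)}, I take the expectation over \(\sigma\) first. Setting \(q=\frac{n}{2(n-1)}\), the Proposition on \(\mathbb E_\sigma[\Lambda_\sigma]\) and \(\mathbb E_\sigma[\Lambda_\sigma^2]\) gives
\[
\mathbb E_\sigma\|(I-\alpha\Lambda_\sigma)z\|^2=z^\top\bigl(I-2\alpha\,\mathbb E\Lambda_\sigma+\alpha^2\mathbb E\Lambda_\sigma^2\bigr)z=r^2\|z\|^2 ,
\]
with \(r^2=1-2q\alpha+q\alpha^2\) for \(1\)-Peer and \(r^2=1-2q\alpha+\tfrac34q\alpha^2\) for \(2\)-Peer. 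For \(2\)-Peer, \(\chi_2\) is precisely \(\alpha/(1-r)\), so \(c=1-r\) and the generic step closes immediately. For \(1\)-Peer I must check \(r\le 1-\alpha/\chi_1=1-\alpha/(2q)\). This is where I expect trouble with constants. At \(q\approx\tfrac12\) the naive comparison of \(r^2=1-\alpha+\alpha^2/2\) with \((1-\alpha)^2\) fails. So I would first try exploiting \((\Lambda^{(1)}_\sigma)^2=\Lambda^{(1)}_\sigma\), since \(\Lambda^{(1)}_\sigma\) is an orthogonal projector onto the span of the \(m\) pair-difference directions. This lets me apply Young's inequality conditionally on \(\sigma\), separately on \(\ker\Lambda_\sigma\) and on its range, before averaging. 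If that still does not give \(\chi_1=\frac{n}{n-1}\), I would fall back to \(c=1-r\) exactly as in the \(2\)-Peer case and adjust the constant accordingly.

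For \textbf{(iii)}, with \(\gamma>0\) the recursion on \(\pi\)-components becomes the heavy-ball system. Let \(u_t=\pi(x_t+y_t)\); then
\[
\pi x_{t+1}=(1+\gamma)u_t-\alpha\Lambda u_t-\gamma u_{t-1}.
\]
I diagonalise \(\Lambda\) on \(\operatorname{range}(\pi)\), which reduces each eigenvalue \(\lambda\in[1/\chi,1]\) to a \(2\times2\) companion matrix
\[
M_\lambda=\begin{pmatrix}1+\gamma-\alpha\lambda&-\gamma\\1&0\end{pmatrix}.
\]
With \(\gamma=(1-\sqrt{\alpha/\chi})^2\), the discriminant \((1+\gamma-\alpha\lambda)^2-4\gamma\) is \(\le 0\) on the whole spectrum. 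Hence the roots are complex with modulus \(\sqrt\gamma=1-\sqrt{\alpha/\chi}\), which is the standard accelerated-gossip rate. Once \(\|M_\lambda\|\le 1-\alpha/\sqrt\chi\) (or a comparable bound) is available, the generic Young step applied to \(X_t\) gives the result with \(\rho=\sqrt\chi\). The final inequality in (iii) is just \(\|X_t\|^2\ge\|\pi x_t\|^2\).

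The main obstacle is that \(M_\lambda\) is non-normal, so spectral radius does not control the Euclidean operator norm; at the edge \(\lambda=1/\chi\) the matrix is a Jordan block. I would try to replace \(\|X_t\|\) by a \(\lambda\)-dependent weighted norm \(\|P_\lambda^{-1}X_t\|\), built from the eigenvectors of \(M_\lambda\) and uniformly equivalent to the Euclidean norm up to constants depending only on \(\alpha\). I would then show contraction by \(1-\sqrt{\alpha/\chi}\) in that norm, absorbing the equivalence constants into \(\rho\). If a clean Euclidean statement fails, this weighted Lyapunov function is what I would plug into the proof of \Cref{prop:cvg} instead.
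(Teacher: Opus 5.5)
Parts (i) and the $2$-Peer half of (ii) are fine and follow the paper's route exactly: $\pi x_{t+1}=(I-\alpha\Lambda)\pi(x_t+y_t)$, a mean-square contraction factor on the disagreement subspace, then Young with $\varepsilon=c/(1-c)$. For $2$-Peer, your factor $r^2=1-\frac{n}{n-1}\alpha+\frac{3n}{8(n-1)}\alpha^2$ is the correct one and is exactly what the stated $\chi_2$ encodes. The paper's displayed factor doubles both coefficients, i.e.\ it uses twice the moments of the Proposition, so here your computation is the right one.

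The $1$-Peer gap you flag cannot be closed, because the claim with $\chi_1=\frac{n}{n-1}$ is false. Since $\mathbb E_\sigma\|(I-\alpha\Lambda_\sigma)z\|^2=r^2\|z\|^2$ is an identity for $z\perp\mathbf 1$, conditioning on $\sigma$ and using $(\Lambda^{(1)}_\sigma)^2=\Lambda^{(1)}_\sigma$ cannot improve anything. Already with $\pi y_t=0$ the claim reads $1-\frac{n}{n-1}\alpha(1-\frac\alpha2)\le 1-\frac{n-1}{n}\alpha$, i.e.\ $1-\frac\alpha2\ge(1-\frac1n)^2$. This fails for every $\alpha>\frac4n-\frac2{n^2}$; for example, $n=8$, $\alpha=0.9$ gives $r^2\approx0.434$ against the claimed factor $0.2125$. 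For general $\pi y_t$, the two-term bound holds iff $r^2\le\frac{1-c}{1+c-c^2}$ with $c=\frac{n-1}{n}\alpha$ (the worst case is $\pi y_t\parallel\pi x_t$). To first order in $\alpha$ this requires $(\frac{n}{n-1})^2\ge 2$, so it fails for every $n\ge4$ once $\alpha$ is small. The paper reaches the stated form only through the same factor-$2$ slip (it writes $1-2\frac{n}{n-1}\alpha+\frac{n}{n-1}\alpha^2$), and then concludes $\chi_1=\frac{n-1}{n}$, which is not even the stated constant. Your fallback $\chi_1=\alpha/(1-r)$, which behaves like $\frac{2(n-1)}{n}$ for small $\alpha$, is the correct repair, but it proves a different statement.

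For (iii) you identified exactly the weak step in the paper: it asserts $\|AX_t\|\le q\|X_t\|$ ``in an equivalent norm'' and then continues in the Euclidean norm. The Euclidean claim is false. Take zero forcing, $\pi x_{t-1}=0$ and $\pi x_t=v$ with $\Lambda v=\lambda v$; then $\|X_{t+1}\|^2=\bigl(1+(1+\gamma-\alpha\lambda)^2\bigr)\|v\|^2\ge\|X_t\|^2$, whereas the claim demands $\|X_{t+1}\|^2\le(1-\frac{\alpha}{\sqrt\chi})\|X_t\|^2$.

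Your proposed rescue fails too. The eigenvector matrix of $M_\lambda$ becomes singular as $\lambda\downarrow 1/\chi$, where the two roots merge, so there are no uniform equivalence constants depending only on $\alpha$. More decisively, no Lyapunov function can give $\rho$ of order $\sqrt\chi$, because momentum does not change the static gain. For constant forcing $\pi y_t\equiv w$ with $\Lambda w=w/\chi$, the recursion has the fixed point $\pi x_t\equiv z=\frac{\chi-\alpha}{\alpha}w$ for every $\gamma$. At that point every function of the state is stationary, so any bound $\Delta V\le-\frac\alpha\rho\|\pi x_t\|^2+\frac\rho\alpha\|\pi y_t\|^2$ forces $\rho\ge\chi-\alpha$. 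Plugging $X_{t+1}=X_t=(z,z)$ into (iii) itself requires $2(\chi-\alpha)^2\le\chi$, which is false for $\chi>2$.

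Finally, your recursion, which is the faithful one for \eqref{eq:momentum-update}, has forcing $((1+\gamma)I-\alpha\Lambda)\pi y_t-\gamma\pi y_{t-1}$. You would therefore also pick up a $\|\pi y_{t-1}\|^2$ term that is absent from the statement. The paper avoids this only by analysing $\pi x_{t+1}=((1+\gamma)I-\alpha\Lambda)\pi x_t-\gamma\pi x_{t-1}+(I-\alpha\Lambda)\pi y_t$, which is not that update.
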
 %
\begin{proof}

We prove the three claims separately.

\medskip

\noindent
\textbf{Proof of (i).}
Since \(\Lambda\mathbf 1=0\), we have
\[
\pi\Lambda=\Lambda\pi=\Lambda.
\]
Thus, when \(\gamma=0\),
\[
\pi x_{t+1}
=
(I-\alpha\Lambda)(\pi x_t+\pi y_t).
\]
On the disagreement subspace, the eigenvalues of \(I-\alpha\Lambda\) are
\[
1-\alpha\lambda_r(\Lambda),
\qquad r=2,\dots,n.
\]
Since \(0<\alpha\le 1\) and
\[
\lambda_r(\Lambda)\in\left[\frac1\chi,1\right],
\]
we get
\[
\|(I-\alpha\Lambda)u\|
\le
\left(1-\frac{\alpha}{\chi}\right)\|u\|
\]
for all \(u\perp \mathbf 1\). Hence
\[
\|\pi x_{t+1}\|^2
\le
\left(1-\frac{\alpha}{\chi}\right)^2
\|\pi x_t+\pi y_t\|^2.
\]
Using Young's inequality,
\[
\|a+b\|^2
\le
(1+\nu)\|a\|^2+
\left(1+\frac1\nu\right)\|b\|^2,
\]
with
\[
\nu=\frac{1}{1-\alpha/\chi}-1,
\]
we obtain
\[
\|\pi x_{t+1}\|^2-\|\pi x_t\|^2
\le
-\frac{\alpha}{\chi}\|\pi x_t\|^2
+
\frac{\chi}{\alpha}
\left(1-\frac{\alpha}{\chi}\right)^2
\|\pi y_t\|^2.
\]
Since \(\left(1-\frac{\alpha}{\chi}\right)^2\le 1\), this gives
\[
\|\pi x_{t+1}\|^2-\|\pi x_t\|^2
\le
-\frac{\alpha}{\chi}\|\pi x_t\|^2
+
\frac{\chi}{\alpha}\|\pi y_t\|^2.
\]

\medskip

\noindent
\noindent

\textbf{Proof of (ii).}
We have
\begin{align}
\mathbb E_t\!\left[\|\pi x_{t+1}\|^2\right]
&=
\mathbb E_t\!\left[
\|(I-\alpha\Lambda_t)\pi(x_t+y_t)\|^2
\right] \\
&=
(x_t+y_t)^\top
\mathbb E_t\!\left[
\pi-2\alpha\Lambda_t+\alpha^2\Lambda_t^2
\right]
(x_t+y_t).
\end{align}

For the $1$-Peer scheme, since
\[
    \mathbb E[\Lambda_t]
    =
    \mathbb E[\Lambda_t^2]
    =
    \frac{n}{2(n-1)}\pi,
\]
we obtain
\[
\mathbb E_t\!\left[\|\pi x_{t+1}\|^2\right]
\le
\left(
1-2\frac{n}{n-1}\alpha
+
\frac{n}{n-1}\alpha^2
\right)
\|\pi x_t+\pi y_t\|^2.
\]
Moreover,
\[
1-2\frac{n}{n-1}\alpha
+
\frac{n}{n-1}\alpha^2
\le
\left(
1-\frac{n}{n-1}\alpha
\right)^2.
\]
Hence
\[
\mathbb E_t\!\left[\|\pi x_{t+1}\|^2\right]
\le
\left(
1-\frac{n}{n-1}\alpha
\right)^2
\|\pi x_t+\pi y_t\|^2.
\]
Using Young's inequality, this gives
\[
\mathbb E_t\!\left[\|\pi x_{t+1}\|^2\right]
-
\|\pi x_t\|^2
\le
-\frac{n}{n-1}\alpha \|\pi x_t\|^2
+
\frac{n-1}{n\alpha}\|\pi y_t\|^2.
\]
Equivalently, this is of the form
\[
\mathbb E_t\!\left[\|\pi x_{t+1}\|^2\right]
-
\|\pi x_t\|^2
\le
-\frac{\alpha}{\chi_1}\|\pi x_t\|^2
+
\frac{\chi_1}{\alpha}\|\pi y_t\|^2,
\qquad
\chi_1=\frac{n-1}{n}.
\]

For the $2$-Peer scheme, using
\[
    \mathbb E[\Lambda_t]
    =
    \frac{n}{2(n-1)}\pi,
    \qquad
    \mathbb E[\Lambda_t^2]
    =
    \frac{3n}{8(n-1)}\pi,
\]
we obtain
\[
\mathbb E_t\!\left[\|\pi x_{t+1}\|^2\right]
\le
\left(
1-2\frac{n}{n-1}\alpha
+
\frac{3n}{4(n-1)}\alpha^2
\right)
\|\pi x_t+\pi y_t\|^2.
\]
Let
\[
    q_2
    =
    1-2\frac{n}{n-1}\alpha
    +
    \frac{3n}{4(n-1)}\alpha^2.
\]
Then, by Young's inequality,
\[
q_2\|\pi x_t+\pi y_t\|^2
\le
\sqrt{q_2}\|\pi x_t\|^2
+
\frac{q_2}{1-\sqrt{q_2}}\|\pi y_t\|^2.
\]
Therefore,
\[
\mathbb E_t\!\left[\|\pi x_{t+1}\|^2\right]
-
\|\pi x_t\|^2
\le
-(1-\sqrt{q_2})\|\pi x_t\|^2
+
\frac{q_2}{1-\sqrt{q_2}}\|\pi y_t\|^2.
\]
Equivalently,
\[
\mathbb E_t\!\left[\|\pi x_{t+1}\|^2\right]
-
\|\pi x_t\|^2
\le
-\frac{\alpha}{\chi_2}\|\pi x_t\|^2
+
\frac{\chi_2}{\alpha}\|\pi y_t\|^2,
\]
with
\[
    \chi_2
    =
    \frac{\alpha}{
    1-
    \sqrt{
    1-2\frac{n}{n-1}\alpha
    +
    \frac{3n}{4(n-1)}\alpha^2
    }}.
\]

\medskip

\noindent\noindent
\noindent
\noindent
\textbf{(iii) Accelerated gossip.}
Since \(\pi\Lambda=\Lambda\pi=\Lambda\), applying \(\pi\) to the accelerated update gives
\[
\pi x_{t+1}
=
\bigl((1+\gamma)I-\alpha\Lambda\bigr)\pi x_t
-\gamma \pi x_{t-1}
+
(I-\alpha\Lambda)\pi y_t .
\]
Therefore,
\[
X_{t+1}
=
AX_t+
\begin{pmatrix}
(I-\alpha\Lambda)\pi y_t\\
0
\end{pmatrix}.
\]

with \[
A
=
\begin{pmatrix}
(1+\gamma)I-\alpha\Lambda & -\gamma I\\
I & 0
\end{pmatrix}.
\]

Let \((u,v)\) be an eigenvector of \(A\) with eigenvalue \(r\). Since \(u=rv\), and since \(\Lambda v=\lambda v\) with
\[
\lambda\in\left[\frac1\chi,1\right],
\]
we get
\[
r^2-(1-\alpha\lambda+\gamma)r+\gamma=0.
\]
Set
\[
q:=1-\sqrt{\frac{\alpha}{\chi}},
\qquad
\gamma=q^2.
\]
Then
\[
0\le 1-\alpha\lambda+\gamma\le 2q,
\]
so the roots have modulus at most \(q\). Thus, in an equivalent norm on the lifted disagreement space,
\[
\|AX_t\|
\le
(1-\sqrt{\frac{\alpha}{\chi}})\|X_t\|.
\]
Consequently,
\[
\|X_{t+1}\|
\le
q\|X_t\|
+
\left\|
\begin{pmatrix}
(I-\alpha\Lambda)\pi y_t\\
0
\end{pmatrix}
\right\|.
\]
Since \(0<\alpha\le 1\) and \(\lambda_r(\Lambda)\in[1/\chi,1]\), we have
\[
\|(I-\alpha\Lambda)\pi y_t\|
\le
\|\pi y_t\|.
\]
Therefore, \[ \|X_{t+1}\| \le \left(1-\sqrt{\frac{\alpha}{\chi}}\right)\|X_t\| + \|\pi y_t\|. \] Using Young inequality, \[ \|X_{t+1}\|^2 \le \left(1-\sqrt{\frac{\alpha}{\chi}}\right)^2(1+\nu)\|X_t\|^2 +(1+\frac 1\nu) \|\pi y_t\|^2. \]

Choose
\[
\nu
=
\frac{1}{1-\sqrt{\alpha/\chi}}-1.
\]
Then
\[
\left(1-\sqrt{\frac{\alpha}{\chi}}\right)^2(1+\nu)
=
1-\sqrt{\frac{\alpha}{\chi}},
\]
and
\[
1+\frac1\nu
=
\frac{1}{\sqrt{\alpha/\chi}}
=
\sqrt{\frac{\chi}{\alpha}}.
\]
Therefore,
\[
\|X_{t+1}\|^2
\le
\left(1-\sqrt{\frac{\alpha}{\chi}}\right)\|X_t\|^2
+
\sqrt{\frac{\chi}{\alpha}}\|\pi y_t\|^2.
\]
Subtracting \(\|X_t\|^2\), we obtain
\[
\|X_{t+1}\|^2-\|X_t\|^2
\le
-\sqrt{\frac{\alpha}{\chi}}\|X_t\|^2
+
\sqrt{\frac{\chi}{\alpha}}\|\pi y_t\|^2.
\]
Since \(0<\alpha\le 1\), this also implies the looser bound
\[
\|X_{t+1}\|^2-\|X_t\|^2
\le
-\frac{\alpha}{\sqrt{\chi}}\|X_t\|^2
+
\frac{\sqrt{\chi}}{\alpha}\|\pi y_t\|^2.
\]

\end{proof}

\begin{proposition}
Let $\sigma$ be sampled uniformly at random from the set of permutations of $\{1,\dots,n\}$. Then
\[
    \mathbb{E}_\sigma\bigl[\Lambda^{(1)}_\sigma\bigr]
    =\mathbb{E}_\sigma\bigl[\Lambda^{(2)}_\sigma\bigr]
    =
    \frac{n}{2(n-1)} \pi,
    \qquad
    \mathbb{E}_\sigma\bigl[ \bigl(\Lambda^{(1)}_\sigma\bigr)^2\bigr]=\frac{n}{2(n-1)}\pi  \qquad  \mathbb{E}_\sigma\bigl[ \bigl(\Lambda^{(2)}_\sigma\bigr)^2\bigr]=\frac{3n}{8(n-1)}\pi 
\]
\end{proposition}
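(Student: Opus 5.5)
The plan is a direct computation that rests on two facts. First, every $\Lambda_\sigma$ is built from the rank-one blocks $M_{ab}\triangleq(e_a-e_b)(e_a-e_b)^\top$. Second, the law of $\Lambda_\sigma$ is invariant under relabelling the nodes. So for any random matrix $Z$ of the form we consider, with $Z\mathbf 1=0$ and permutation-invariant law, $\mathbb E[Z]$ commutes with all permutation matrices. It is therefore of the form $aI+b\mathbf 1\mathbf 1^\top$, and $\mathbb E[Z]\mathbf 1=0$ forces $\mathbb E[Z]=a\,\pi$. Each of the four identities then reduces to finding one scalar, which we read off from a diagonal entry or from the trace. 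For the first moments we can also compute directly. We use $\sum_{a<b}M_{ab}=nI-\mathbf 1\mathbf 1^\top=n\pi$ together with edge-inclusion probabilities. Under a uniform $\sigma$, a fixed pair $\{a,b\}$ is matched in the $1$-Peer graph with probability $\frac1{n-1}$, and is adjacent in the random cycle with probability $\frac{2}{n-1}$ (for $n\ge3$).

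\textbf{$1$-Peer.} Since $\Lambda^{(1)}_\sigma=\frac12\sum_{e}M_e$ over the edges $e$ of a perfect matching, we get
\[
\mathbb E\bigl[\Lambda^{(1)}_\sigma\bigr]=\tfrac12\cdot\tfrac1{n-1}\cdot n\pi=\tfrac{n}{2(n-1)}\pi .
\]
For the square, edges of a matching are disjoint, so $M_eM_{e'}=0$ for $e\neq e'$. Also $M_e^2=2M_e$ because $\|e_a-e_b\|^2=2$. Hence $(\Lambda^{(1)}_\sigma)^2=\frac14\sum_e 2M_e=\Lambda^{(1)}_\sigma$, so $\Lambda^{(1)}_\sigma$ is an orthogonal projector. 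Its second moment therefore equals its first moment.

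\textbf{$2$-Peer.} Write $\Lambda^{(2)}_\sigma=\frac14 L_\sigma$, where $L_\sigma=2I-A_\sigma$ is the Laplacian of the random Hamiltonian cycle and $A_\sigma$ is its adjacency matrix. The first moment is $\frac14\cdot\frac{2}{n-1}\cdot n\pi$, which matches the claimed value. For the second moment, expand
\[
(\Lambda^{(2)}_\sigma)^2=\tfrac1{16}\bigl(4I-4A_\sigma+A_\sigma^2\bigr).
\]
By exchangeability, $\mathbb E[A_\sigma]=\frac{2}{n-1}(\mathbf 1\mathbf 1^\top-I)$, since the diagonal is $0$ and each row sums to $2$. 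For $\mathbb E[A_\sigma^2]$, we avoid enumerating distance-two neighbours. This matters because the case $n=4$, where the two distance-two walks land on the same node, would otherwise need separate treatment. Instead we use that $A_\sigma^2$ has diagonal exactly $2$ and row sums exactly $4$. Exchangeability then gives off-diagonal entries $\frac{2}{n-1}$, that is, $\mathbb E[A_\sigma^2]=2I+\frac{2}{n-1}(\mathbf 1\mathbf 1^\top-I)$. Combining the three terms gives
\[
\tfrac1{16}\Bigl(6I-\tfrac{6}{n-1}(\mathbf 1\mathbf 1^\top-I)\Bigr)=\tfrac{6}{16}\cdot\tfrac{nI-\mathbf 1\mathbf 1^\top}{n-1}=\tfrac{3n}{8(n-1)}\pi .
\]

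The only delicate step is this second moment for the $2$-Peer scheme. It needs care with small $n$: we require $n\ge3$ so that the cycle is simple and the diagonal of $A_\sigma^2$ equals $2$. The row-sum/exchangeability argument is chosen precisely so that the remaining small case $n=4$ needs no separate treatment. The $1$-Peer case uses the standing assumption that $n$ is even.
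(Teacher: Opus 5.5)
Your proof is correct. For both first moments and for the $1$-Peer second moment it coincides with the paper's argument: the edge-inclusion probabilities $\frac1{n-1}$ and $\frac2{n-1}$ combined with $\sum_{a<b}(e_a-e_b)(e_a-e_b)^\top=n\pi$, and the observation that $\Lambda^{(1)}_\sigma$ is a projector. The two routes differ only for $\mathbb E[(\Lambda^{(2)}_\sigma)^2]$.

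There the paper also uses relabelling invariance to write the expectation as $c\,\pi$, but it determines $c$ spectrally. It computes $\operatorname{Tr}[(\Lambda^{(2)}_\sigma)^2]=\frac14\sum_{k=0}^{n-1}\bigl(1-\cos\frac{2\pi k}{n}\bigr)^2=\frac{3n}{8}$ from the eigenvalues of the cycle Laplacian, using $\sum_k\cos\frac{2\pi k}{n}=0$ and $\sum_k\cos^2\frac{2\pi k}{n}=\frac n2$. It then matches this trace with $c(n-1)$. You instead expand $(2I-A_\sigma)^2$ and obtain $\mathbb E[A_\sigma^2]$ entrywise from two deterministic facts: the diagonal of $A_\sigma^2$ is identically $2$, and $A_\sigma^2\mathbf 1=4\mathbf 1$.

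Your route is elementary and purely combinatorial, and it can be shortened further. Once you have the reduction to $c\,\pi$, the deterministic diagonal entry $\frac1{16}(L_\sigma^2)_{ii}=\frac1{16}(2^2+2)=\frac38$ alone forces $c\,\frac{n-1}{n}=\frac38$. The off-diagonal bookkeeping, and with it your concern about $n=4$, is then unnecessary.

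Your route also makes the hypothesis $n\ge 3$ explicit; the paper's identity $\sum_k\cos^2\frac{2\pi k}{n}=\frac n2$ uses it silently. At $n=2$ the claimed second moment is in fact false: then $\Lambda^{(2)}_\sigma=\pi$ for every $\sigma$, so $\mathbb E[(\Lambda^{(2)}_\sigma)^2]=\pi\neq\frac34\pi$.

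What the spectral route buys is uniformity: any spectral functional of the cycle, e.g.\ $\operatorname{Tr}[(\Lambda^{(2)}_\sigma)^k]$ for higher moments, comes out of the same calculation. Note that the eigenvalues the paper displays, $1-\cos\frac{2\pi k}{n}$, are those of $2\Lambda^{(2)}_\sigma$, although its trace computation uses the correct scaling.
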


\begin{proof}
We first consider the $1$-Peer scheme. For any unordered pair $\{a,b\}$ with $a\neq b$, the probability that $a$ and $b$ are matched together in a uniformly random perfect matching is
\[
    \mathbb{P}\bigl(\{a,b\}\text{ is an edge}\bigr)
    =
    \frac{1}{n-1}.
\]
Therefore,
\[
    \mathbb{E}_\sigma\bigl[\Lambda^{(1)}_\sigma\bigr]
    =
    \frac12
    \sum_{1\leq a<b\leq n}
    \mathbb{P}\bigl(\{a,b\}\text{ is an edge}\bigr)
    (e_a-e_b)(e_a-e_b)^\top .
\]
Using the above probability gives
\[
    \mathbb{E}_\sigma\bigl[\Lambda^{(1)}_\sigma\bigr]
    =
    \frac{1}{2(n-1)}
    \sum_{1\leq a<b\leq n}
    (e_a-e_b)(e_a-e_b)^\top .
\]
The standard identity
\[
    \sum_{1\leq a<b\leq n}
    (e_a-e_b)(e_a-e_b)^\top
    =
    nI - \mathbf{1}\mathbf{1}^\top
    =
    n\pi
\]
then yields
\[
    \mathbb{E}_\sigma\bigl[\Lambda^{(1)}_\sigma\bigr]
    =
    \frac{n}{2(n-1)}\pi.
\]

The proof for the $2$-Peer scheme is analogous. In a uniformly random cycle induced by $\sigma$, any unordered pair $\{a,b\}$ appears as an edge with probability
\[
    \mathbb{P}\bigl(\{a,b\}\text{ is an edge}\bigr)
    =
    \frac{2}{n-1}.
\]
Indeed, once the position of $a$ in the cycle is fixed, the node $b$ can occupy either of the two neighboring positions among the remaining $n-1$ positions. Hence,
\[
    \mathbb{E}_\sigma\bigl[\Lambda^{(2)}_\sigma\bigr]
    =
    \frac14
    \sum_{1\leq a<b\leq n}
    \frac{2}{n-1}
    (e_a-e_b)(e_a-e_b)^\top .
\]
Using the same identity as above, we obtain
\[
    \mathbb{E}_\sigma\bigl[\Lambda^{(2)}_\sigma\bigr]
    =\frac{n}{2(n-1)}\pi.
\]
It remains to compute the second moments. We first consider the $1$-Peer scheme. Recall that
\[
    \Lambda^{(1)}_\sigma
    =
    \frac12
    \sum_{i=1}^{m}
    u_i u_i^\top,
    \qquad
    u_i := e_{\sigma(2i-1)} - e_{\sigma(2i)} .
\]
Since the pairs form a matching, the vectors $u_i$ have disjoint supports. Hence,
\[
    u_i^\top u_j = 0 \quad \text{for } i\neq j,
    \qquad
    u_i^\top u_i = 2.
\]
Therefore,
\[
\begin{aligned}
    \bigl(\Lambda^{(1)}_\sigma\bigr)^2
    &=
    \frac14
    \sum_{i,j=1}^{m}
    u_i u_i^\top u_j u_j^\top  \\
    &=
    \frac14
    \sum_{i,j=1}^{m}
    u_i (u_i^\top u_j) u_j^\top \\
    &=
    \frac14
    \sum_{i=1}^{m}
    2 u_i u_i^\top \\
    &=
    \frac12
    \sum_{i=1}^{m}
    u_i u_i^\top \\
    &=
    \Lambda^{(1)}_\sigma .
\end{aligned}
\]
Taking expectations gives
\[
    \mathbb{E}_\sigma\bigl[(\Lambda^{(1)}_\sigma)^2\bigr]
    =
    \mathbb{E}_\sigma\bigl[\Lambda^{(1)}_\sigma\bigr]
    =
    \frac{n}{2(n-1)}\pi .
\]

We now consider the $2$-Peer scheme. Let
\[
    v_i := e_{\sigma(i)} - e_{\sigma(i+1)},
    \qquad
    \sigma(n+1)=\sigma(1).
\]
Then
\[
    \Lambda^{(2)}_\sigma
    =
    \frac14
    \sum_{i=1}^{n}
    v_i v_i^\top .
\]
For a fixed cycle, $\Lambda^{(2)}_\sigma$ is one fourth of the usual Laplacian
of a cycle graph. Its eigenvalues are therefore
\[
    \lambda_k
    =
    1-\cos\left(\frac{2\pi k}{n}\right),
    \qquad
    k=0,\dots,n-1.
\]
Thus,
\[
\begin{aligned}
    \operatorname{Tr}\left[\bigl(\Lambda^{(2)}_\sigma\bigr)^2\right]
    &=\frac 14
    \sum_{k=0}^{n-1}
    \left(
        1-\cos\left(\frac{2\pi k}{n}\right)
    \right)^2 \\
    &=\frac 14
    \sum_{k=0}^{n-1}
    \left[
        1
        -2\cos\left(\frac{2\pi k}{n}\right)
        +
        \cos^2\left(\frac{2\pi k}{n}\right)
    \right].
\end{aligned}
\]
Using
\[
    \sum_{k=0}^{n-1}
    \cos\left(\frac{2\pi k}{n}\right)
    =
    0,
    \qquad
    \sum_{k=0}^{n-1}
    \cos^2\left(\frac{2\pi k}{n}\right)
    =
    \frac n2,
\]
we obtain
\[
    \operatorname{Tr}\left[\bigl(\Lambda^{(2)}_\sigma\bigr)^2\right]
    =\frac 14(
    n+\frac n2)
    =
    \frac{3n}{8}.
\]

Moreover, the distribution of the random cycle is invariant under relabeling
of the nodes. Hence
\[
    \mathbb{E}_\sigma\bigl[(\Lambda^{(2)}_\sigma)^2\bigr]
\]
must act identically on all directions orthogonal to $\mathbf{1}$ and must
vanish on $\operatorname{span}\{\mathbf{1}\}$. Therefore, there exists a scalar
$\alpha$ such that
\[
    \mathbb{E}_\sigma\bigl[(\Lambda^{(2)}_\sigma)^2\bigr]
    =
    \alpha \pi .
\]
Taking traces yields
\[
    \frac{3n}{8}
    =
    \operatorname{Tr}\left(
        \mathbb{E}_\sigma\bigl[(\Lambda^{(2)}_\sigma)^2\bigr]
    \right)
    =
    \operatorname{Tr}(\alpha \Pi)
    =
    \alpha(n-1).
\]
Hence
\[
    \alpha
    =
    \frac{3n}{8(n-1)}.
\]
Consequently,
\[
    \mathbb{E}_\sigma\bigl[(\Lambda^{(2)}_\sigma)^2\bigr]
    =
    \frac{3n}{8(n-1)}\pi .
\]
This concludes the proof.
\end{proof}
\begin{lemma}[Gradient mismatch with stochastic gradients]\label{lemma:gradient-mismatch}
Assume \(f\) is \(L\)-smooth. Let \(\nabla F(x;\xi)\) be an unbiased stochastic estimator of
\(\nabla f(x)\), satisfying
\[
\mathbb{E}\bigl[\nabla F(x;\xi)\mid x\bigr]=\nabla f(x),
\qquad
\mathbb{E}\bigl[\|\nabla F(x;\xi)-\nabla f(x)\|^2\mid x\bigr]\le \sigma^2 .
\]
Let \(0<\beta\le 1/L\), and define
\[
g_t^i:=x_t^i-x_{t,H}^i,
\]
where
\[
x_{t,k+1}^i
=
x_{t,k}^i-\frac{\beta}{H}\nabla F(x_{t,k}^i;\xi_{t,k}^i),
\qquad
x_{t,0}^i=x_t^i .
\]
Then,
\[
\mathbb E\!\left[
\left\langle g_t^i,\nabla f(\bar x_t)\right\rangle
\right].
\le
6\beta^2L^2 2^H
\|x_t^i-\bar x_t\|^2
+
24\beta^4L^2 2^H
\|\nabla f(\bar x_t)\|^2
+
\left(
12\beta^4L^2 2^H
\right)
\sigma^2
-\frac12
\mathbb E\!\left[
\left\|g^i_t
\right\|^2
\right] -\frac12
\mathbb E\!\left[
\|\nabla f(\bar x_t)\|^2
\right]\\
\]
\end{lemma}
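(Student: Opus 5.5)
The plan is to start from the exact polarization identity in the form that produces both negative terms of the statement,
\[
\langle g_t^i,\nabla f(\bar x_t)\rangle
=
-\tfrac12\|g_t^i\|^2-\tfrac12\|\nabla f(\bar x_t)\|^2+\tfrac12\|g_t^i+\nabla f(\bar x_t)\|^2 .
\]
After taking expectations, the last two terms of the claimed bound appear directly. What remains is to show that $\tfrac12\mathbb E\|g_t^i+\nabla f(\bar x_t)\|^2$ is at most the three positive terms carrying the factor $2^H$. To do this I will expand $g_t^i=\frac{\beta}{H}\sum_{k=0}^{H-1}\nabla F(x_{t,k}^i;\xi_{t,k}^i)$. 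Adding and subtracting, I split each summand into three parts. The first is the noise $\nabla F(x_{t,k}^i;\xi)-\nabla f(x_{t,k}^i)$. The second is the smoothness mismatch $\nabla f(x_{t,k}^i)-\nabla f(\bar x_t)$, which is bounded by $L\|x_{t,k}^i-\bar x_t\|$. The third is the anchor $\nabla f(\bar x_t)$. Using $\|a+b+c\|^2\le 3(\|a\|^2+\|b\|^2+\|c\|^2)$ produces the factors of $3$, which later become the constants $6,12,24$.

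The second step is a drift recursion for $D_k:=\mathbb E\|x_{t,k}^i-\bar x_t\|^2$. From $x_{t,k+1}^i-\bar x_t=x_{t,k}^i-\bar x_t-\frac{\beta}{H}\nabla F(x_{t,k}^i;\xi)$, Young's inequality gives
\[
D_{k+1}\le 2D_k+\tfrac{2\beta^2}{H^2}\bigl(3L^2D_k+3\|\nabla f(\bar x_t)\|^2+\sigma^2\bigr).
\]
I will crudely take the factor $1+1$ in Young's inequality rather than $1+1/H$. Together with $\beta\le 1/L$, this gives a geometric recursion whose unrolling yields
\[
D_k\lesssim 2^H\bigl(\|x_t^i-\bar x_t\|^2+\beta^2\|\nabla f(\bar x_t)\|^2+\beta^2\sigma^2\bigr).
\]
This is precisely where the $2^H$ in the statement comes from. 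I will then average over $k$ by Jensen's inequality and multiply by the prefactor $\beta^2L^2$ from the smoothness mismatch. This should produce the terms $6\beta^2L^2 2^H\|x_t^i-\bar x_t\|^2$, $24\beta^4L^2 2^H\|\nabla f(\bar x_t)\|^2$ and $12\beta^4L^2 2^H\sigma^2$. The exact constants are fixed by tracking the factor $3$ and the geometric-sum factor of $2$.

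The main obstacle is the anchor contribution. As written, $g_t^i+\nabla f(\bar x_t)$ contains $(\beta+1)\nabla f(\bar x_t)$, which is not of order $\beta^4$. The cancellation that the drift argument needs occurs only for $g_t^i-\beta\nabla f(\bar x_t)$. I will therefore first attempt the decomposition around $\beta\nabla f(\bar x_t)$: write $g_t^i+\nabla f(\bar x_t)=(g_t^i-\beta\nabla f(\bar x_t))+(1+\beta)\nabla f(\bar x_t)$. I then need to check whether the stated sign convention ($g$ as the descent step $x-x_{t,H}$ entering the outer update with a minus sign) lets the $(1+\beta)$ part be absorbed against the $-\frac12\|\nabla f(\bar x_t)\|^2$ term. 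If it cannot be absorbed, this is the step at which the inequality would need its constants or signs re-examined. All remaining steps are routine once this term is handled.
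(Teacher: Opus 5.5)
The step you flag as the main obstacle cannot be completed, and sharper constants would not help: the displayed inequality is false. Polarization is an exact identity. Your plan therefore needs $\tfrac12\mathbb E\|g_t^i+\nabla f(\bar x_t)\|^2$ to be dominated by the three $2^H$ terms. But $-\tfrac12\mathbb E\|g_t^i\|^2$ and $-\tfrac12\mathbb E\|\nabla f(\bar x_t)\|^2$ are already committed on the right-hand side, so nothing is left to absorb the $(1+\beta)\nabla f(\bar x_t)$ anchor. Concretely, take $H=1$, $\sigma=0$ and $x_t^i=\bar x_t$. Then $g_t^i=\beta\nabla f(\bar x_t)$, and the claim reduces to $\tfrac12(1+\beta)^2\le 48\beta^4L^2$. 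For $f(x)=\tfrac12\|x\|^2$ and $\beta=\tfrac14$, the left side of the lemma is $\tfrac14\|x\|^2$ and the right side is $-\tfrac{11}{32}\|x\|^2$.

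Your noise bookkeeping has the same flaw. The noise part of $g_t^i$ enters $\tfrac12\mathbb E\|g_t^i+\nabla f(\bar x_t)\|^2$ directly at order $\beta^2\sigma^2/H$, with no $\beta^2L^2$ prefactor. For example, take $f(x)=\tfrac L2\|x\|^2$, $\nabla F(x;\xi)=Lx+\xi$ with $\mathbb E\xi=0$ and $\mathbb E\|\xi\|^2=\sigma^2>0$, $H=1$, and all $x_t^j=0$. The left side is $0$, while the right side is $\beta^2\sigma^2(24\beta^2L^2-\tfrac12)<0$ whenever $\beta L<1/\sqrt{48}$.

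Changing the sign convention for $g_t^i$ does not rescue the statement. The rescaling $(F,\beta)\mapsto(cF,\beta/c)$, under which $L\mapsto cL$ and $\sigma\mapsto c\sigma$, leaves the iterates and the constraint $\beta L\le 1$ unchanged. It multiplies the left side by $c$, but multiplies $-\tfrac12\|\nabla f(\bar x_t)\|^2$ by $c^2$ and every other right-hand term by $1$. So any instance with $\nabla f(\bar x_t)\neq 0$ violates the inequality once $c$ is large.

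The paper's own proof does not deliver the displayed bound either. It proves a correctly scaled inequality, which is what the convergence argument actually uses, and it relies on two ideas your plan lacks. First, it takes $g_t^i=x_{t,H}^i-x_t^i$ (the opposite sign to the statement) and applies conditional unbiasedness before any squaring. This gives $\mathbb E\langle g_t^i,\nabla f(\bar x_t)\rangle=-\beta\,\mathbb E\langle G_t^i,\nabla f(\bar x_t)\rangle$ with $G_t^i=\frac1H\sum_{k=0}^{H-1}\nabla f(x_{t,k}^i)$, so the noise never appears in the polarized term and enters only through the drift. Second, it polarizes at scale $\beta$: $-\beta\langle G,v\rangle=\tfrac\beta2\|G-v\|^2-\tfrac\beta2\|G\|^2-\tfrac\beta2\|v\|^2$. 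The residual $\tfrac\beta2\|G_t^i-\nabla f(\bar x_t)\|^2$ then contains no anchor, and Jensen plus smoothness bound it by $\frac{\beta L^2}{2H}\sum_k\mathbb E\|x_{t,k}^i-\bar x_t\|^2$. This drift is handled much as in your second step. The difference is that propagating norms with the triangle inequality gives growth $(1+\beta L/H)^k\le e^{\beta L}$ instead of $2^H$. The outcome is
\[
\mathbb E\langle g_t^i,\nabla f(\bar x_t)\rangle\le-\beta\bigl(\tfrac12-\tfrac32\beta^2L^2e^2\bigr)\mathbb E\|\nabla f(\bar x_t)\|^2-\tfrac\beta2\mathbb E\|G_t^i\|^2+\tfrac32\beta L^2e^2\,\mathbb E\|x_t^i-\bar x_t\|^2+\tfrac32\beta^3L^2e^2\sigma^2 .
\]
Here every term scales like the left side, and the negative terms carry the factor $\beta$ and involve $G_t^i$ rather than $g_t^i$. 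This, not the displayed statement, is what the argument can prove.
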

\begin{proof}
We want to upper-bound
\[
\mathbb E\!\left[
\left\langle g_t^i,\nabla f(\bar x_t)\right\rangle
\right].
\]

By the local stochastic update rule,
\[
x_{t,k+1}^i
=
x_{t,k}^i
-
\frac{\beta}{H}
\nabla F(x_{t,k}^i;\xi_{t,k}^i),
\qquad k=0,\dots,H-1.
\]
Iterating this recursion gives
\[
x_{t,H}^i
=
x_t^i
-
\frac{\beta}{H}
\sum_{k=0}^{H-1}
\nabla F(x_{t,k}^i;\xi_{t,k}^i).
\]
Hence
\[
g_t^i
:=
-x_t^i+x_{t,H}^i
=
-\frac{\beta}{H}
\sum_{k=0}^{H-1}
\nabla F(x_{t,k}^i;\xi_{t,k}^i).
\]

Assume that \(\mathcal F_{t,k}^i\) contains all randomness before sampling
\(\xi_{t,k}^i\), so that \(x_{t,k}^i\) and \(\bar x_t\) are
\(\mathcal F_{t,k}^i\)-measurable. Also assume conditional unbiasedness:
\[
\mathbb E\!\left[
\nabla F(x_{t,k}^i;\xi_{t,k}^i)
\mid \mathcal F_{t,k}^i
\right]
=
\nabla f_i(x_{t,k}^i).
\]
Then
\[
\begin{aligned}
\mathbb E\!\left[
\left\langle g_t^i,\nabla f(\bar x_t)\right\rangle
\right]
&=
-\frac{\beta}{H}
\sum_{k=0}^{H-1}
\mathbb E\!\left[
\left\langle
\nabla F(x_{t,k}^i;\xi_{t,k}^i),
\nabla f(\bar x_t)
\right\rangle
\right] \\
&=
-\frac{\beta}{H}
\sum_{k=0}^{H-1}
\mathbb E\!\left[
\left\langle
\nabla f_i(x_{t,k}^i),
\nabla f(\bar x_t)
\right\rangle
\right].
\end{aligned}
\]
In the homogeneous case \(f_i=f\), this becomes
\[
\mathbb E\!\left[
\left\langle g_t^i,\nabla f(\bar x_t)\right\rangle
\right]
=
-\frac{\beta}{H}
\sum_{k=0}^{H-1}
\mathbb E\!\left[
\left\langle
\nabla f(x_{t,k}^i),
\nabla f(\bar x_t)
\right\rangle
\right].
\]
Now, we obtain
\[
\begin{aligned}
\mathbb E\!\left[
\left\langle g_t^i,\nabla f(\bar x_t)\right\rangle
\right]
&=\beta
\mathbb E\!\left[
\left\langle -\frac{1}{H}
\sum_{k=0}^{H-1}
\nabla f(x_{t,k}^i),
\nabla f(\bar x_t)
\right\rangle
\right]\\
&=
-\frac{\beta}{2}\mathbb E\|\frac 1H\sum_{k=0}^{H-1}\nabla f(x_{t,k}^i)\|^2
-\frac{\beta}{2}\mathbb E\|\nabla f(\bar x_t)\|^2 \\
&\quad+
\frac{\beta}{2}
\mathbb E\!\left[
\|\frac 1H\sum_{k=0}^{H-1}(\nabla f(x_{t,k}^i)-\nabla f(\bar x_t))\|^2
\right].
\end{aligned}
\]

Using the inner-loop drift estimate from Lemma \ref{lemma:inner-loop},
\[
\begin{aligned}
\mathbb{E}\|x_{t,k}^i-\bar x_t\|^2
&\leq 
3 \left(1+\frac{\beta L}{H}\right)^{2k}
\mathbb{E}\|x_t^i-\bar x_t\|^2 \\
&\quad+
3\frac{\mathbb{E}\|\nabla f(\bar x_t)\|^2}{L^2}
\left(\left(1+\frac{\beta L}{H}\right)^k-1\right)^2 \\
&\quad+
3\frac{\beta\sigma^2}{L}
\frac{
\left(1+\frac{\beta L}{H}\right)^{2k}-1
}{
2+\frac{\beta L}{H}
},
\end{aligned}
\]
and summing over \(k=0,\dots,H-1\), one obtains
\[
\begin{aligned}
\mathbb E\!\left[
\sum_{k=0}^{H-1}\|x_{t,k}^i-\bar x_t\|^2
\right]
&\leq
3\mathbb E\|x_t^i-\bar x_t\|^2
\sum_{k=0}^{H-1}
\left(1+\frac{\beta L}{H}\right)^{2k} \\
&\quad+
3\frac{\mathbb E\|\nabla f(\bar x_t)\|^2}{L^2}
\sum_{k=0}^{H-1}
\left[
\left(1+\frac{\beta L}{H}\right)^k-1
\right]^2 \\
&\quad+
3\frac{\beta\sigma^2}{L}
\frac{1}{2+\frac{\beta L}{H}}
\sum_{k=0}^{H-1}
\left[
\left(1+\frac{\beta L}{H}\right)^{2k}-1
\right].
\end{aligned}
\]
Let
\[
a:=1+\frac{\beta L}{H}.
\]
Then
\[
\sum_{k=0}^{H-1}a^{2k}
=
\frac{a^{2H}-1}{a^2-1},
\]
\[
\sum_{k=0}^{H-1}(a^k-1)^2
=
\frac{a^{2H}-1}{a^2-1}
-
2\frac{a^H-1}{a-1}
+
H,
\]
and
\[
\sum_{k=0}^{H-1}(a^{2k}-1)
=
\frac{a^{2H}-1}{a^2-1}
-
H.
\]
Therefore,
\[
\begin{aligned}
\mathbb E\!\left[
\sum_{k=0}^{H-1}\|x_{t,k}^i-\bar x_t\|^2
\right]
&\le
3\frac{a^{2H}-1}{a^2-1}
\mathbb E\|x_t^i-\bar x_t\|^2 \\
&\quad+
3\frac{\mathbb E\|\nabla f(\bar x_t)\|^2}{L^2}
\left[
\frac{a^{2H}-1}{a^2-1}
-
2\frac{a^H-1}{a-1}
+
H
\right] \\
&\quad+
3\frac{\beta\sigma^2}{L}
\frac{1}{2+\frac{\beta L}{H}}
\left[
\frac{a^{2H}-1}{a^2-1}
-
H
\right].
\end{aligned}
\]

Next, by Jensen's inequality and \(L\)-smoothness,
\[
\begin{aligned}
\mathbb E\!\left[
\left\|
\frac 1H\sum_{k=0}^{H-1}
\bigl(\nabla f(x_{t,k}^i)-\nabla f(\bar x_t)\bigr)
\right\|^2
\right]
&\le
\frac 1H
\sum_{k=0}^{H-1}
\mathbb E
\left\|
\nabla f(x_{t,k}^i)-\nabla f(\bar x_t)
\right\|^2 \\
&\le
\frac{L^2}{H}
\sum_{k=0}^{H-1}
\mathbb E
\left\|
x_{t,k}^i-\bar x_t
\right\|^2 .
\end{aligned}
\]
Thus,
\[
\begin{aligned}
\mathbb E\!\left[
\left\langle g_t^i,\nabla f(\bar x_t)\right\rangle
\right]
&\le
-\frac{\beta}{2}
\mathbb E\|\nabla f(\bar x_t)\|^2 \\
&\quad+
\frac{\beta L^2}{2H}
\sum_{k=0}^{H-1}
\mathbb E
\left\|
x_{t,k}^i-\bar x_t
\right\|^2 .
\end{aligned}
\]
Substituting the previous drift-sum bound yields
\[
\begin{aligned}
\mathbb E\!\left[
\left\langle g_t^i,\nabla f(\bar x_t)\right\rangle
\right]
&\le -\frac{\beta}{2}\mathbb E\|\frac 1H\sum_{k=0}^{H-1}\nabla f(x_{t,k}^i)\|^2
-\frac{\beta}{2}
\mathbb E\|\nabla f(\bar x_t)\|^2 \\
&\quad+
\frac{3\beta L^2}{2H}
\frac{a^{2H}-1}{a^2-1}
\mathbb E\|x_t^i-\bar x_t\|^2 \\
&\quad+
\frac{3\beta}{2H}
\left[
\frac{a^{2H}-1}{a^2-1}
-
2\frac{a^H-1}{a-1}
+
H
\right]
\mathbb E\|\nabla f(\bar x_t)\|^2 \\
&\quad+
\frac{3\beta^2L\sigma^2}{2H}
\frac{1}{2+\frac{\beta L}{H}}
\left[
\frac{a^{2H}-1}{a^2-1}
-
H
\right].
\end{aligned}
\]

Since
\[
a-1=\frac{\beta L}{H},
\qquad
a^2-1
=
\frac{\beta L}{H}
\left(2+\frac{\beta L}{H}\right),
\]
we have
\[
\frac{1}{H}\frac{a^{2H}-1}{a^2-1}
=
\frac{1}{\beta L}
\frac{a^{2H}-1}{2+\frac{\beta L}{H}}.
\]
Therefore,
\[
\begin{aligned}
\mathbb E\!\left[
\left\langle g_t^i,\nabla f(\bar x_t)\right\rangle
\right]
&\le
-\frac{\beta}{2}
\mathbb E\|\nabla f(\bar x_t)\|^2 -\frac{\beta}{2}\mathbb E\|\frac 1H\sum_{k=0}^{H-1}\nabla f(x_{t,k}^i)\|^2\\
&\quad+
\frac{3L}{2}
\frac{
\left(1+\frac{\beta L}{H}\right)^{2H}-1
}{
2+\frac{\beta L}{H}
}
\mathbb E\|x_t^i-\bar x_t\|^2 \\
&\quad+
\frac{3\beta}{2}
\left[
\frac{1}{\beta L}
\frac{
\left(1+\frac{\beta L}{H}\right)^{2H}-1
}{
2+\frac{\beta L}{H}
}
-
\frac{2}{\beta L}
\left(
\left(1+\frac{\beta L}{H}\right)^H-1
\right)
+
1
\right]
\mathbb E\|\nabla f(\bar x_t)\|^2 \\
&\quad+
\frac{3\beta^2L\sigma^2}{2}
\frac{1}{2+\frac{\beta L}{H}}
\left[
\frac{1}{\beta L}
\frac{
\left(1+\frac{\beta L}{H}\right)^{2H}-1
}{
2+\frac{\beta L}{H}
}
-
1
\right].
\end{aligned}
\]

Now assume \(0<\beta L\le 1\). Then
\[
\left(1+\frac{\beta L}{H}\right)^H\le e^{\beta L},
\qquad
\left(1+\frac{\beta L}{H}\right)^{2H}\le e^{2\beta L},
\]
and
\[
2+\frac{\beta L}{H}\ge 2.
\]
Hence
\[
\frac{
\left(1+\frac{\beta L}{H}\right)^{2H}-1
}{
2+\frac{\beta L}{H}
}
\le
\frac{e^{2\beta L}-1}{2}
\le
\beta L e^{2\beta L}
\le
\beta L e^2.
\]
Thus,
\[
\frac{3L}{2}
\frac{
\left(1+\frac{\beta L}{H}\right)^{2H}-1
}{
2+\frac{\beta L}{H}
}
\mathbb E\|x_t^i-\bar x_t\|^2
\le
\frac{3}{2}\beta L^2 e^2
\mathbb E\|x_t^i-\bar x_t\|^2.
\]

Moreover,
\[
\left(
\left(1+\frac{\beta L}{H}\right)^k-1
\right)^2
\le
\left(e^{\beta L}-1\right)^2
\le
\beta^2L^2 e^2,
\]
so
\[
\frac{3\beta}{2H}
\sum_{k=0}^{H-1}
\left[
\left(1+\frac{\beta L}{H}\right)^k-1
\right]^2
\mathbb E\|\nabla f(\bar x_t)\|^2
\le
\frac{3}{2}\beta^3L^2e^2
\mathbb E\|\nabla f(\bar x_t)\|^2.
\]
Similarly,
\[
\frac{
\left(1+\frac{\beta L}{H}\right)^{2k}-1
}{
2+\frac{\beta L}{H}
}
\le
\frac{e^{2\beta L}-1}{2}
\le
\beta L e^2,
\]
and therefore
\[
\frac{3\beta^2L\sigma^2}{2H}
\sum_{k=0}^{H-1}
\frac{
\left(1+\frac{\beta L}{H}\right)^{2k}-1
}{
2+\frac{\beta L}{H}
}
\le
\frac{3}{2}\beta^3L^2e^2\sigma^2.
\]

Combining these estimates, we obtain
\[
\begin{aligned}
\mathbb E\!\left[
\left\langle g_t^i,\nabla f(\bar x_t)\right\rangle
\right]
&\le
-\frac{\beta}{2}
\mathbb E\|\nabla f(\bar x_t)\|^2
+
\frac{3}{2}\beta^3L^2e^2
\mathbb E\|\nabla f(\bar x_t)\|^2 \\
&\quad+
\frac{3}{2}\beta L^2e^2
\mathbb E\|x_t^i-\bar x_t\|^2
+
\frac{3}{2}\beta^3L^2e^2\sigma^2-\frac{\beta}{2}\mathbb E\|\frac 1H\sum_{k=0}^{H-1}\nabla f(x_{t,k}^i)\|^2.
\end{aligned}
\]
Equivalently,
\[
\boxed{
\begin{aligned}
\mathbb E\!\left[
\left\langle g_t^i,\nabla f(\bar x_t)\right\rangle
\right]
&\le
-\beta\left(
\frac12-\frac32\beta^2L^2e^2
\right)
\mathbb E\|\nabla f(\bar x_t)\|^2 -\frac{\beta}{2}\mathbb E\|\frac 1H\sum_{k=0}^{H-1}\nabla f(x_{t,k}^i)\|^2\\
&\quad+
\frac{3}{2}\beta L^2e^2
\mathbb E\|x_t^i-\bar x_t\|^2
+
\frac{3}{2}\beta^3L^2e^2\sigma^2 .
\end{aligned}
}
\]
This proves the desired bound whenever \(0<\beta L\le 1\).
\end{proof}

\begin{lemma}[Inner-loop drift in norm with stochastic gradients]
\label{lemma:inner-loop}Assume that \(f\) is \(L\)-smooth. Let
\(\nabla F(x;\xi)\) be an unbiased stochastic estimator of
\(\nabla f(x)\), with variance bounded by \(\sigma^2\), i.e.,
\[
\mathbb{E}\bigl[\nabla F(x;\xi)\bigr]=\nabla f(x),
\qquad
\mathbb{E}\bigl[\|\nabla F(x;\xi)-\nabla f(x)\|^2\bigr]\le \sigma^2.
\]
Suppose that the inner-loop iterates satisfy
\[
x_{t,k+1}^i
=
x_{t,k}^i-\frac{\beta}{H}\nabla F(x_{t,k}^i;\xi^i_{t,k}),
\qquad
x_{t,0}^i=x_t^i.
\]
Then, for every \(k\in\{0,\dots,H\}\),

\begin{align}
\mathbb{E}\bigl[\|x_{t,k}^i-\bar x_t\|^2\bigr|x_t]\leq 
3 \left(1+\frac{\beta L}{H}\right)^{2k}
\mathbb{E}\|x_t^i-\bar x_t\|^2
+
3\frac{\mathbb{E}\|\nabla f(\bar x_t)\|^2}{L^2}
\left(\left(1+\frac{\beta L}{H}\right)^k-1\right)^2 
+
3\frac{\beta\sigma^2}{L}
\frac{
\left(1+\frac{\beta L}{H}\right)^{2k}-1
}{
2+\frac{\beta L}{H}
}
\end{align} 

\end{lemma}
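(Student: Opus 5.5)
We prove the drift bound by comparing the stochastic inner-loop trajectory started at \(x_t^i\) with the deterministic reference point \(\bar x_t\). The quantity is decomposed into three sources: the initial disagreement, the growth driven by the true gradient at \(\bar x_t\), and the accumulated noise. Write \(\delta_k \triangleq x_{t,k}^i-\bar x_t\) and \(b\triangleq \beta/H\). The recursion reads
\[
\delta_{k+1}=\delta_k-b\bigl(\nabla f(x_{t,k}^i)-\nabla f(\bar x_t)\bigr)-b\nabla f(\bar x_t)-b\,\zeta_k,
\]
where \(\zeta_k\triangleq\nabla F(x_{t,k}^i;\xi_{t,k}^i)-\nabla f(x_{t,k}^i)\) is a martingale difference with conditional second moment at most \(\sigma^2\).

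\textbf{Step 1: split the trajectory.} Set \(\delta_k=u_k+w_k\), where \(u_k\) follows the noiseless recursion (true gradients, including the term \(-b\nabla f(\bar x_t)\)) and \(w_k\) collects the noise together with its propagation. Then apply the three-term inequality \(\|a+b+c\|^2\le 3(\|a\|^2+\|b\|^2+\|c\|^2)\) to a further decomposition of \(u_k\). Using \(L\)-smoothness, \(\|u_{k+1}\|\le (1+bL)\|u_k\|+b\|\nabla f(\bar x_t)\|\). Unrolling gives
\[
\|u_k\|\le (1+bL)^k\|\delta_0\| + \frac{\|\nabla f(\bar x_t)\|}{L}\bigl((1+bL)^k-1\bigr),
\]
which already has the form of the first two terms.

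\textbf{Step 2: the noise term.} Here I expect the main difficulty. The noise enters nonlinearly through the gradients, so \(u\) and \(w\) are not truly decoupled. To handle this, I would not split but instead work directly with \(\mathbb E\|\delta_k\|^2\) conditionally. Because \(\zeta_k\) has zero conditional mean,
\[
\mathbb E\|\delta_{k+1}\|^2=\mathbb E\|\delta_k-b(\nabla f(x_{t,k}^i)-\nabla f(\bar x_t))-b\nabla f(\bar x_t)\|^2+b^2\mathbb E\|\zeta_k\|^2 .
\]
I would then bound the first term via the triangle inequality, followed by a careful Young step. This produces a linear recursion of the form \(s_{k+1}\le (1+bL)^2 s_k+\text{(gradient term)}+b^2\sigma^2\). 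Unrolling the noise part gives \(b^2\sigma^2\sum_{j<k}(1+bL)^{2j}=b^2\sigma^2\frac{(1+bL)^{2k}-1}{(1+bL)^2-1}=\frac{b\sigma^2}{L}\frac{(1+bL)^{2k}-1}{2+bL}\). This has precisely the shape of the third term, and since \(b\le\beta\) it is bounded by the stated expression. The gradient term must be tracked so that it produces \(\bigl((1+bL)^k-1\bigr)^2/L^2\) rather than something larger.

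\textbf{Step 3: assemble the constants.} To obtain the clean factor \(3\), I would define three auxiliary sequences: one started from \(\delta_0\) with no forcing, one with forcing \(\nabla f(\bar x_t)\), and one driven by noise. Each is controlled by the comparison above, and \(\|a+b+c\|^2\le 3(\cdot)\) is applied only once at the end, with the noise sequence treated in mean square as in Step 2. The delicate point is justifying this superposition despite the nonlinearity. I would try to make it work by bounding \(\sqrt{\mathbb E\|\delta_k\|^2}\) through a Minkowski-type recursion, \(\sqrt{s_{k+1}}\le(1+bL)\sqrt{s_k}+b\|\nabla f(\bar x_t)\|+\)(noise). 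If a single-step Minkowski bound on the noise does not suffice, I would fall back on Step 2's squared recursion with Young's parameter chosen as \(bL\). This fallback may cost constants, but it should preserve the geometric structure.
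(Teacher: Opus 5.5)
Your proposal never commits to an argument that yields the stated bound, and the route you develop most (Step 2) fails at a specific point. After the martingale split you have $s_{k+1}\le \mathbb E\bigl[(q\|\delta_k\|+bG)^2\bigr]+b^2\sigma^2$, with $q=1+bL$ and $G=\|\nabla f(\bar x_t)\|$. The cross term $2qbG\|\delta_k\|$ cannot be absorbed into a linear recursion whose coefficient is exactly $q^2$, because any Young split $2qbG\|\delta_k\|\le \epsilon q^2\|\delta_k\|^2+\epsilon^{-1}b^2G^2$ needs $\epsilon>0$. With your fallback $\epsilon=bL$, the growth becomes $q^{3k}$. More seriously, the forcing becomes $(1+\tfrac{1}{bL})b^2G^2$. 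Already at $k=1$ with $x_t^i=\bar x_t$ and $\sigma=0$, this gives $b^2G^2+bG^2/L$, which exceeds the stated $3b^2G^2$ whenever $bL<\tfrac12$. So the fallback changes the shape of the bound, not just its constants. The coupling you worry about in Step 1 is harmless: if $u$ is the noiseless trajectory, then $w=\delta-u$ obeys $\|w_{k+1}\|\le q\|w_k\|+b\|\zeta_k\|$ by smoothness.

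What works, and what the paper does, is to stay with norms and not split at all. Pathwise, $\|\delta_{k+1}\|\le q\|\delta_k\|+bG+b\|\zeta_k\|$, so $\|\delta_k\|\le q^k\|\delta_0\|+\frac{G}{L}(q^k-1)+b\sum_{j<k}q^{k-1-j}\|\zeta_j\|$. Then apply $(a_1+a_2+a_3)^2\le 3(a_1^2+a_2^2+a_3^2)$, and handle the noise by Cauchy--Schwarz:
$\bigl(b\sum_{j<k}q^{k-1-j}\|\zeta_j\|\bigr)^2\le b^2k\sum_{j<k}q^{2(k-1-j)}\|\zeta_j\|^2\le\frac{\beta^2}{H}\sum_{j<k}q^{2(k-1-j)}\|\zeta_j\|^2$, using $k\le H$. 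Taking expectations and using $\sum_{j<k}q^{2j}=\frac{q^{2k}-1}{bL(2+bL)}$ gives exactly $3\frac{\beta\sigma^2}{L}\frac{q^{2k}-1}{2+bL}$.

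This Cauchy--Schwarz step with $k\le H$ is the missing idea. It is what shows that your Step 3 Minkowski recursion suffices, since its noise contribution $3b^2\sigma^2\bigl(\sum_{j<k}q^j\bigr)^2$ is bounded the same way. It also explains why the statement carries $\beta$ rather than $\beta/H$: the slack $H\ge k$ pays for ignoring the martingale orthogonality that Step 2 tried to exploit. Had you committed to the Minkowski or pathwise recursion and checked this inequality, the proof would be complete. As written, that check is left as an open ``if'', and the alternatives you list would not supply it.
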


\begin{proof}
Let
\[
d_{t,k}^i:=x_{t,k}^i-\bar x_t,
\qquad
\eta_{t,k}^i
:=
\nabla F(x_{t,k}^i;\xi^i_{t,k})
-\nabla f(x_{t,k}^i).
\]
Then
\[
d_{t,k+1}^i
=
d_{t,k}^i
-\frac{\beta}{H}
\bigl(
\nabla f(x_{t,k}^i)+\eta_{t,k}^i
\bigr).
\]
Taking norms and using the triangle inequality gives
\[
\|d_{t,k+1}^i\|
\le
\|d_{t,k}^i\|
+
\frac{\beta}{H}
\|\nabla f(x_{t,k}^i)\|
+
\frac{\beta}{H}
\|\eta_{t,k}^i\|.
\]
Since \(f\) is \(L\)-smooth, its gradient is \(L\)-Lipschitz. Hence
\[
\|\nabla f(x_{t,k}^i)\|
\le
\|\nabla f(\bar x_t)\|
+
\|\nabla f(x_{t,k}^i)-\nabla f(\bar x_t)\|
\le
\|\nabla f(\bar x_t)\|
+
L\|d_{t,k}^i\|.
\]
Therefore,
\[
\|d_{t,k+1}^i\|
\le
\left(1+\frac{\beta L}{H}\right)
\|d_{t,k}^i\|
+
\frac{\beta}{H}\|\nabla f(\bar x_t)\|
+
\frac{\beta}{H}\|\eta_{t,k}^i\|.
\]
Taking conditional expectation with respect to the randomness at step \(k\), we obtain
\[
\|d_{t,k+1}^i\|
\le
\left(1+\frac{\beta L}{H}\right)
\|d_{t,k}^i\|
+
\frac{\beta}{H}\|\nabla f(\bar x_t)\|
+
\frac{\beta}{H}
\|\eta_{t,k}^i\|.
\]
Let $q=1+\frac{\beta L}H$, iterating this recursion from \(0\) to \(k-1\) yields
\[
\|d_{t,k}^i\|
\le
q^k \|d_{t,0}^i\|
+
\frac{\beta}{H}
\sum_{j=0}^{k-1} q^j
\|\nabla f(\bar x_t)\|
+\frac{\beta}{H}
\sum_{j=0}^{k-1} q^j \|\eta_{t,k}^i\|.
\]
Since \(d_{t,0}^i=x_t^i-\bar x_t\), we obtain
\[
\|x_{t,k}^i-\bar x_t\|
\le
\left(1+\frac{\beta L}{H}\right)^k
\|x_t^i-\bar x_t\|
+
\frac{\beta}{H}
\|\nabla f(\bar x_t)\|\
\sum_{j=0}^{k-1}
\left(1+\frac{\beta L}{H}\right)^j+\frac \beta H\sum_{j=0}^{k-1} q^j \|\eta_{t,k}^i\|
\]
Finally, if \(L>0\), then
\[
\sum_{j=0}^{k-1}
\left(1+\frac{\beta L}{H}\right)^j
=
\frac{
\left(1+\frac{\beta L}{H}\right)^k-1
}{
\frac{\beta L}{H}
}.
\]
Therefore,
\[
\|x_{t,k}^i-\bar x_t\|
\le
\left(1+\frac{\beta L}{H}\right)^k
\|x_t^i-\bar x_t\|
+
\frac{\beta}{H}
\|\nabla f(\bar x_t)\|\
\frac{
\left(1+\frac{\beta L}{H}\right)^k-1
}{
\frac{\beta L}{H}
}+\frac \beta H\sum_{j=0}^{k-1} q^j \|\eta_{t,k}^i\|
\]
Consequently,
\[
\begin{aligned}
\|x_{t,k}^i-\bar x_t\|^2
&\le
3 (1+\frac{\beta L}{H})^{2k}
\|x_t^i-\bar x_t\|^2
+
3\frac{\|\nabla f(\bar x_t)\|^2}{L^2}
((1+\frac{\beta L}{H})^k-1)^2  \\
&\qquad
+
3\frac{\beta^2}{H}
\sum_{j=0}^{k-1} (1+\frac{\beta L}{H})^{2j}
\|\eta_{t,j}^i\|^2
\end{aligned}
\]

and

\[
\begin{aligned}
\mathbb{E}\|x_{t,k}^i-\bar x_t\|^2
&\le
3 \left(1+\frac{\beta L}{H}\right)^{2k}
\mathbb{E}\|x_t^i-\bar x_t\|^2
+
3\frac{\mathbb{E}\|\nabla f(\bar x_t)\|^2}{L^2}
\left(\left(1+\frac{\beta L}{H}\right)^k-1\right)^2  \\
&\qquad
+
3\frac{\beta^2\sigma^2}{H}
\sum_{j=0}^{k-1}
\left(1+\frac{\beta L}{H}\right)^{2j} \\
&=
3 \left(1+\frac{\beta L}{H}\right)^{2k}
\mathbb{E}\|x_t^i-\bar x_t\|^2
+
3\frac{\mathbb{E}\|\nabla f(\bar x_t)\|^2}{L^2}
\left(\left(1+\frac{\beta L}{H}\right)^k-1\right)^2  \\
&\qquad
+
3\frac{\beta^2\sigma^2}{H}
\frac{
\left(1+\frac{\beta L}{H}\right)^{2k}-1
}{
\left(1+\frac{\beta L}{H}\right)^2-1
} \\
&=
3 \left(1+\frac{\beta L}{H}\right)^{2k}
\mathbb{E}\|x_t^i-\bar x_t\|^2
+
3\frac{\mathbb{E}\|\nabla f(\bar x_t)\|^2}{L^2}
\left(\left(1+\frac{\beta L}{H}\right)^k-1\right)^2  \\
&\qquad
+
3\frac{\beta\sigma^2}{L}
\frac{
\left(1+\frac{\beta L}{H}\right)^{2k}-1
}{
2+\frac{\beta L}{H}
}.
\end{aligned}
\]

This proves the desired bound.
\end{proof}

\FloatBarrier

\section{Experimental Protocol}\label{app:exp:protocol}

\paragraph{Models.}
We pretrain Llama-3-style decoder-only Transformer models~\citep{grattafiori2024llama}. \Cref{tab:model-configs} reports the size-dependent architecture parameters. Both model configurations use causal attention, bf16 training, depth initialization, a context length of 2048 tokens, RoPE with \(\theta=10{,}000\), layer-normalization epsilon \(10^{-5}\), an FFN dimension multiplier of 1 with dimensions rounded to a multiple of 256. Flex attention is disabled in both configurations.

\begin{table}[t]
\caption{Model configurations used in the pre-training experiments. The Q/K/V head dimension is \(d_{\mathrm{model}} / n_{\mathrm{heads}}\).}
\label{tab:model-configs}
\centering
\begin{tabular}{lcccccc}
\toprule
\textbf{Model} & \textbf{Layers} & \(\mathbf{d_{\mathrm{model}}}\) & \textbf{Attention heads} & \textbf{KV heads} & \textbf{Q/K/V head dim.} & \textbf{Token Budget} \\
\midrule
134M & 12 & 768 & 12 & 12 & 64 & 2.7 B \\
551M & 16 & 1536 & 24 & 24 & 64 & 11 B\\
\bottomrule
\end{tabular}
\end{table}

\paragraph{Data.}
We train on FineWeb~\citep{penedo2024fineweb}. The 134M-parameter model is trained on 2.7B tokens, and the 551M-parameter model is trained on 11B tokens, following the approximate 20-token-per-parameter allocation of \citet{hoffmann2022training}. We tokenize with the LLaMA-2 tokenizer~\citep{touvron2023llama2} provided in the Hugging Face repository \texttt{\href{https://huggingface.co/togethercomputer/LLaMA-2-7B-32K/blob/main/tokenizer.model}{togethercomputer/LLaMA-2-7B-32K}}, which has a 32K-token vocabulary, and pack examples into sequences of length 2048. Unless stated otherwise, all experiments use a global batch size of 2M tokens over the \(n\) workers.

\paragraph{Optimization.}
All methods use AdamW~\citep{loshchilov2018decoupled} for local optimization. We fix the weight decay at \(0.1\), set the momentum coefficients to \((\beta_1,\beta_2)=(0.9,0.95)\), and tune the peak local learning rate for each method. The learning-rate schedule uses a 10\% warmup phase that starts at 1\% of the peak learning rate and then follows cosine decay. For DiLoCo-style methods, each worker performs \(H\) local updates before the outer step. In \diloco, workers compute a pseudo-gradient by \allreduce and apply it through an outer optimizer; in \ours, this synchronized step is replaced by decentralized communication over the selected topology. Both \diloco and \ours use a Nesterov momentum optimizer~\citep{sutskever2013on} for the outer step, with learning rate and momentum tuned independently for each method.

\paragraph{Communication and implementation.}
For \ours-1-Peer, we enforce a non-overlapping symmetric pairing at each synchronization step by randomly permuting the workers and pairing consecutive workers, so each worker communicates with exactly one peer per round. For \ours-2-Peer, we use the randomized procedure from \cref{sec:k-peer}: after randomly permuting the workers, each worker communicates with its two adjacent workers in the induced ring. Experiments are conducted on clusters with 4 NVIDIA H100 GPUs per node. Intra-node communication uses NVIDIA NVLink, and inter-node communication uses InfiniBand. All methods are implemented in PyTorch using NCCL as the communication backend and launched with \texttt{torchrun}. For all 134M-parameter experiments, and for 551M-parameter experiments with \(n>8\), we simulate multiple logical workers per GPU.

\paragraph{Hyperparameters.}
We fixed AdamW weight decay and betas as described above and swept the learning rate and the outer optimizer hyperparameters. Taking into account the reduced computational cost, larger hyperparameter searches were conducted on the 134M-parameter model. Local learning rates were searched in the range \(1\times 10^{-3}\)--\(8\times 10^{-3}\) with a step-size of \(1 \times 10^{-3}\), while outer learning rate and outer momentum were searched in \(\{0.,4, 0.6, 0.8, 1.0\}\) and \(\{0.5, 0.7, 0.9\}\), respectively. Then, a finer search with a step size of 0.1 was carried out around the optimal values found in the first sweep. For the 551M-parameter model, we tuned the local learning rate and weight decay for \ddp and \diloco, while for \ours, DAdam, and \versionD the learning rate was tuned around the optimal value of baselines keeping the weight decay fixed. Meanwhile, the outer optimizer was minimally tuned around the optimal values for the smaller model.

\subsection{1-Peer and 2-Peer Topologies}

At each communication round \(t\), we construct a sparse undirected communication graph \(G_t = (V,E_t)\) where \(V=\{1,\ldots,n\}\) is the set of workers and \(E_t\) is the set of active peer-to-peer exchanges for that round.

Unless otherwise stated, we assume that the underlying admissible graph is complete: every pair of workers can potentially communicate, but only a sparse subset of edges is activated at each round. To sample this subset, we first draw a random permutation of the workers and interpret it as a cycle. Each worker then selects the peer on its right or the 2 closest peers on this random cycle. Finally, we symmetrize the selections: an undirected edge \(\{i,j\}\) is included in \(E_t\) whenever either worker \(i\) selects worker \(j\), or worker \(j\) selects worker \(i\).

A fresh permutation is sampled at every communication round, so the active graph changes over time. This allows workers to communicate with different peers across rounds while keeping the number of active exchanges low. The same construction can be applied to arbitrary admissible topologies by restricting the possible peer selections to the edges of the underlying graph. \Cref{fig:1-Peer,fig:2-Peer} show examples of 1- and 2-Peer topologies that vary from one round to the next.

\begin{figure}
    \centering
    \includegraphics[width=0.8\linewidth]{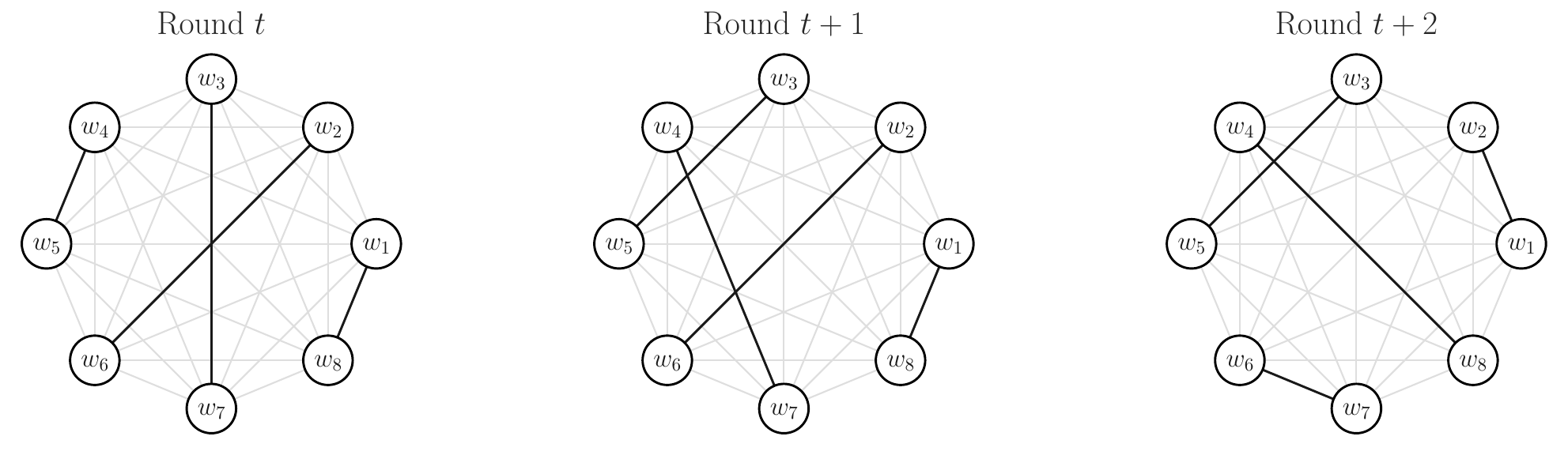}
    \caption{Time-varying 1-Peer graph}
    \label{fig:1-Peer}
\end{figure}

\begin{figure}
    \centering
    \includegraphics[width=0.8\linewidth]{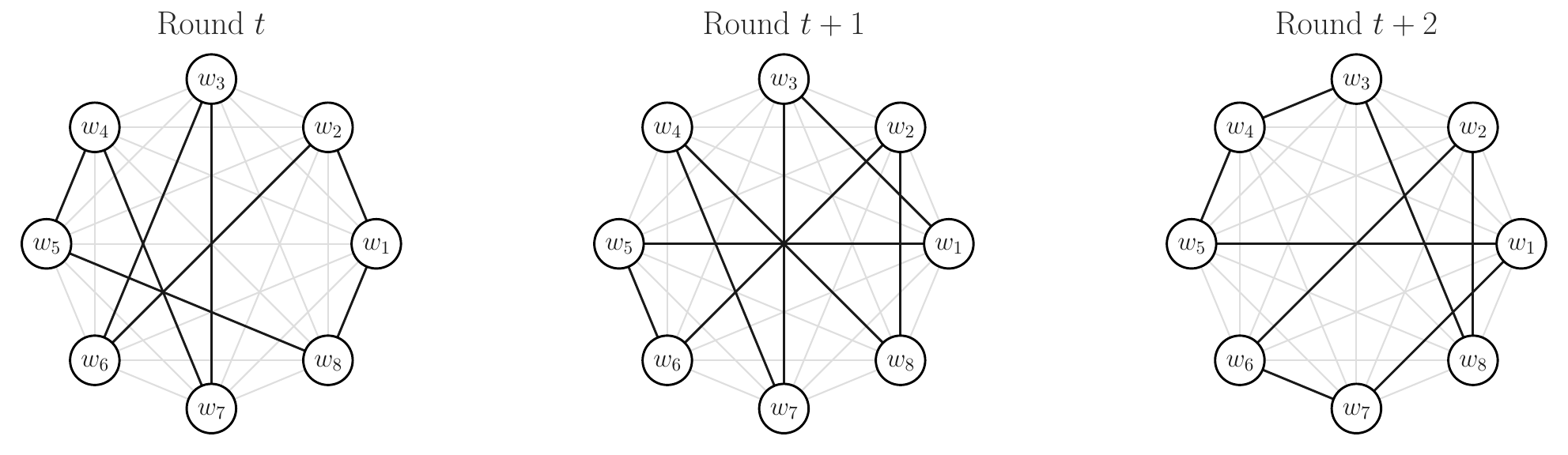}
    \caption{Time-varying 2-Peer graph}
    \label{fig:2-Peer}
\end{figure}

\section{\versionD: Adapting \DAdam to the Multi-Local-Step Setting}\label{app:versionD}

\DAdam is naturally defined in the regime where communication is performed at every optimizer step. Given a row stochastic mixing matrix \(W\) for a specific communication graph, one \DAdam~\citep{wang2024promise} step can be written as
\[
    y_i^{(t)} \gets \mathrm{Optimizer^{in}} \left(x_i^{(t)}, \nabla f_i(x_i^{(t)})\right),
    \qquad
    x_i^{(t+1)}
    \gets
    \sum_{j \in \mathcal N_i} w_{ij} x_j^{(t)}
    +
    \left(y_i^{(t)} - x_i^{(t)}\right).
\]
\DAdam mixes the pre-update parameters \(x_i^{(t)}\) and then applies the local optimizer displacement. Thus, DAdam is directly comparable to our protocol when \(H=1\), where every local optimizer step is followed by communication. For \(H>1\), however, simply delaying the mixing operation until after a block of \(H\) local updates does not recover the original DAdam algorithm: it removes the per-step gossip operation on which DAdam is based and changes the optimization dynamics.

To obtain a controlled baseline with the same communication schedule as \diloco~\citep{douillard2023diloco} and \ours, we therefore define a delayed variant. Starting from \(x_i^{(t)}\), worker \(i\) first performs \(H\) steps with \(\mathrm{Optimizer^{in}}\) to obtain \(y_i^{(t, H)}\). It then forms a \DAdam-style disagreement term using the parameters before local optimization and applies the outer update
\[
    x_i^{(t+1)} \gets \mathrm{Optimizer^{out}}\left(y_i^{(t, H)},\ \sum_{i=1}^n w_{ij} \left(x_i^{(t)} - x_j^{(t)}\right)\right)
\]
When \(H=1\) and \(\mathrm{Optimizer^{out}}\) is SGD with fixed unit learning rate, this reduces to the usual \DAdam update. For \(H>1\), the disagreement correction is stale with respect to the current local trajectories, because it depends only on \(x_j^{(t)}\) and not on the post-local states. Therefore, we interpret this method as a \DAdam adaptation to the multi-step setting, rather than as the original overlapped \DAdam algorithm.

\section{Additional Results}
We provide additional single-step results to complement the comparison in \Cref{sec:1-step-eval}. \Cref{tab:app:small-1step} extends \Cref{tab:small-1step} for the smaller model, while \Cref{tab:app:medium-1step} extends \Cref{tab:medium-1step-scaling} for the 551M-parameter model.

Across these configurations, the qualitative behavior remains consistent with the main results. \ours with randomized communications gives the strongest decentralized results, with the 2-Peer variant performing best overall. In particular, in the 8-replica setting, \ours-2-Peer can even match the reference performance for the larger model. The 1-Peer variant, instead, remains competitive, while benefiting from an even saparser communication. In contrast, \DAdam does not show the same benefits from randomized communication.

\begin{table}[t] \centering \caption{\textbf{One-step Decentralized Methods.} Results for the 134M-parameter model, using \textbf{one local step} per communication round. We compare \ddp to decentralized methods \ours and \DAdam across 8, 16, and 32 replicas.} \label{tab:app:small-1step} {\fontsize{9}{10}\selectfont \begin{tabular}{@{}lcccccc@{}} \toprule & \multicolumn{2}{c}{\textbf{8 replicas}} & \multicolumn{2}{c}{\textbf{16 replicas}} & \multicolumn{2}{c}{\textbf{32 replicas}} \\ \cmidrule(lr){2-3} \cmidrule(lr){4-5} \cmidrule(lr){6-7} AdamW \ddp \textit{(ref.)} & \multicolumn{2}{c}{3.18} & \multicolumn{2}{c}{3.18} & \multicolumn{2}{c}{3.18} \\ \midrule \textbf{Topology} & \textbf{\ours} & \textbf{\DAdam} & \textbf{\ours} & \textbf{\DAdam} & \textbf{\ours} & \textbf{\DAdam} \\ \midrule
2-Peer & 3.19 & 3.25 & 3.22 & 3.28 & 3.28 & 3.34 \\ 
complete & 3.24 & 3.23 & 3.29 & 3.25 & 3.36 & 3.30 \\
1-Peer & 3.20 & 3.25 & 3.24 & 3.29 & 3.31 & 3.37 \\
ring & 3.40 & 3.31 & 3.36 & 3.33 & 3.44 & 3.41 \\
\bottomrule \end{tabular}%
} \end{table}

\begin{table}[]
    \centering
    \caption{\textbf{One-step Decentralized Methods.} Validation loss for the 551M-parameter LLM using 8 and 16 replicas.}
    \label{tab:app:medium-1step}
    \begin{tabular}{llcc}
    \toprule
    \multirow{2}{*}{\textbf{Method}} &
    \multirow{2}{*}{\textbf{Topology}} &
    \multicolumn{2}{c}{\textbf{Val Loss}}\\
    \cmidrule(lr){3-4}
    & &
    \textbf{8 replicas} & \textbf{16 replicas} \\
    \midrule
    AdamW DDP & --
    & 2.64 %
    & 2.64 \\ %
    
    \cdashline{1-4}
    
    DAdam & complete
    & 2.70 %
    & 2.70 \\
    
    DAdam & ring
    & 2.71 %
    & 2.73 \\
    
    \ours & ring
    & 2.66 %
    & 2.70 \\
    
    \ours-1-Peer & complete
    & 2.67 %
    & 2.71 \\
    
    \ours-2-Peer & complete
    & 2.64 %
    & 2.69 \\
    \bottomrule
    \end{tabular}
\end{table}

\section{The Role of the Outer Optimizer}

The outer optimizer is a central component in \ours. Its role, in the decentralized setting, goes beyond local update correction: it helps reducing the communication complexity by acting as a form of communication acceleration.
This makes the choice of the outer optimization method and its hyperparameters particularly important to study. In principle, replacing the communication over a complete graph, which reduces \ours to \diloco, with a randomized peer exchange could make the method more sensitive to the outer optimizer dynamics.

\subsection{The Choice of the Outer Optimization}
\begin{wrapfigure}[16]{r}{0.46\textwidth}
    \centering
    \vspace{-30pt}
    \includegraphics[width=\linewidth]{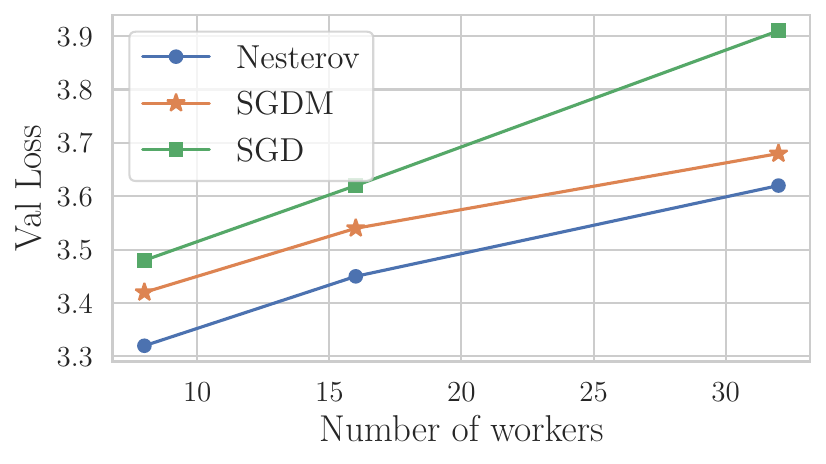}
    \caption{\textbf{Effect of outer optimization method.} Final validation loss of \ours-2-Peer using different outer optimizers as the number of workers increases. Momentum-based methods consistently outperform SGD, with Nesterov momentum achieving the best overall performance.}
    \label{fig:out-opt}
\end{wrapfigure}
We compare different outer optimizers in \ours-2-Peer, including vanilla SGD, SGDM and Nesterov. The motivation behind this experiment lies in the interpretation of the outer optimizer as a communication-acceleration mechanism.

\Cref{fig:out-opt} confirms that momentum-based methods provide an advantage over plain SGD, with Nesterov momentum achieving the best validation loss. We observed the same behavior with up to 32 workers. This confirms the communication-acceleration offered by momentum in the decentralized setting. Although this trend is consistent with the observations previously reported by \citet{douillard2023diloco}, it is important to verify that these conclusions are also valid in the decentralized sparse-communication regime considered by \ours.

\subsection{Hyperparameters Sensitivity}
\begin{figure}[t]
    \centering
    \begin{subfigure}[b]{0.48\textwidth}
        \centering
        \includegraphics[width=\textwidth]{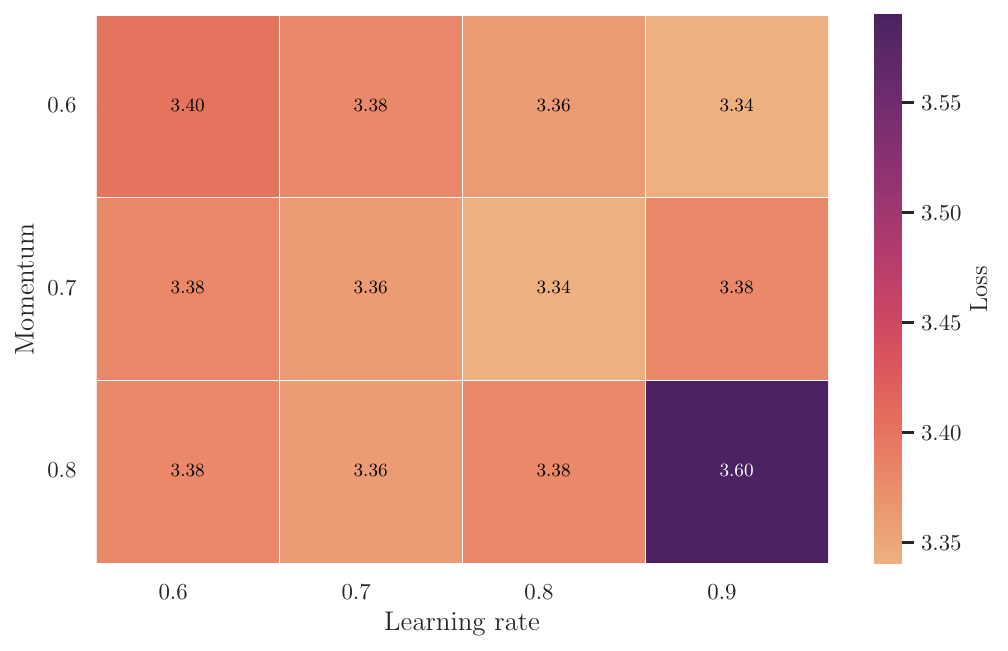}
        \caption{\ours-2-Peer}
        \label{fig:ours-hp-sens}
    \end{subfigure}
    \hfill
    \begin{subfigure}[b]{0.48\textwidth}
        \centering
        \includegraphics[width=\textwidth]{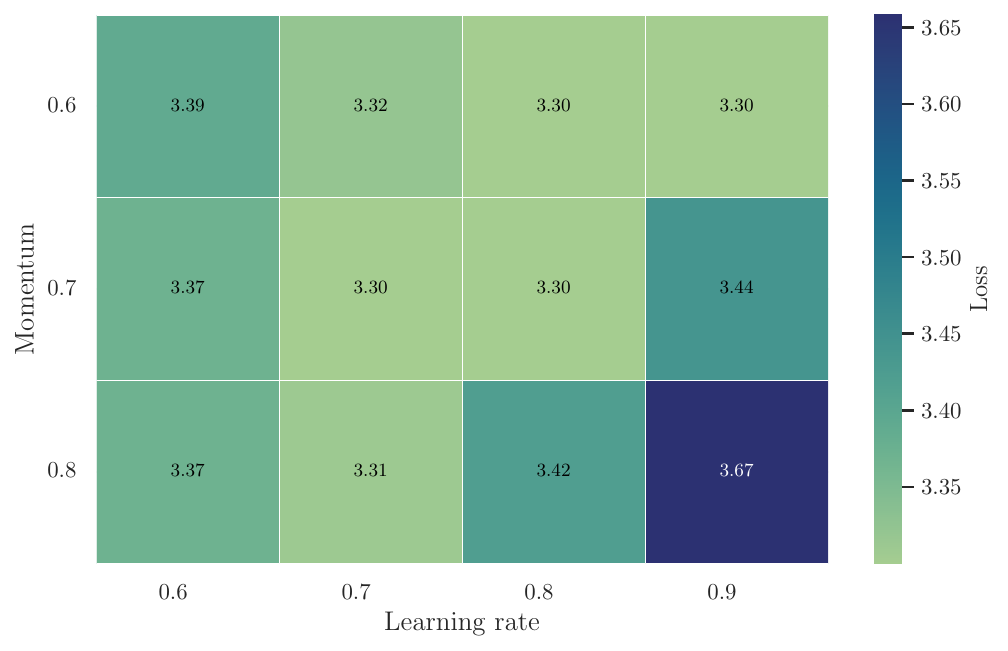}
        \caption{\ours complete graph}
        \label{fig:diloco-hp-sens}
    \end{subfigure}
    \caption{\textbf{Outer Optimizer Hyperparameters Sensitivity.} Validation loss under different learning rates and momentum of the outer optimizer in the 8-worker setting for the 134M-parameter model. Comparing \ours-2-Peer to \ours communcating on the complete graph, both methods remain stable in similar regions of the sweep.}
    \label{fig:hp-sens}
\end{figure}

Communicating over a decentralized and sparse network, \ours-1-Peer and 2-Peer might be more sensitive to hyperparameters than a communication over a complete graph. \Cref{fig:ours-hp-sens} shows that the potential sensitivity does not appear to be a limitation of the method. \ours-2-Peer remains stable over a broad region of the outer-optimizer hyperparameter space that was swept in the experiments and its best-performing region is not limited to a single configuration. Using \ours trained over a complete graph as a reference (Fig. \ref{fig:diloco-hp-sens}), we observe a comparable level of robustness, indicating that the introduction of randomized sparse communication does not substantially increase the complexity of tuning the outer optimizer. Although \ours-2-Peer exhibits slightly higher validation losses in the best-performing regions of the sweep, it shows lower spikes near the edges of the hyperparameter grid.

\section{Compute Utilization in Bandwidth-Constrained Settings}
\label{app:bandwidth}
We define the theoretical compute utilization of each method as
\[
\frac{T_{\mathrm{compute}}}{T_{\mathrm{compute}} + T_{\mathrm{comm}}},
\]
where \(T_{\mathrm{compute}}\) is the estimated time spent in pure computation and \(T_{\mathrm{comm}}\) is the estimated communication time. We estimate \(T_{\mathrm{compute}}\) from the model FLOP profile. The simulation assumes 16 workers, each with \(8\) GPUs, a theoretical FP16/BF16 peak throughput of \(4.5 \times 10^{15}\) FLOPs/s per GPU, and a machine FLOP utilization of \(40\%\). We then vary the maximum available bandwidth and introduce one bandwidth straggler whose link is limited to \(20\%\) of that bandwidth.

\Cref{fig:compute-util} reports the resulting theoretical trends for a \(70\)B-parameter Llama-3 model~\citep{grattafiori2024llama}. We compare DDP and \diloco, both implemented with \allreduce communication, with \ours-1-Peer and \ours-2-Peer. For \ours, the communication-limited worker may perform fewer local steps, so that its slower communication link is partly offset by reduced local computation before synchronization. In this model, both sparse variants improve compute utilization relative to the \allreduce baselines in bandwidth-constrained regimes. \ours-1-Peer exceeds \(80\%\) utilization once the non-straggler bandwidth reaches roughly 10 Gbps, while \ours-2-Peer exceeds this level at roughly 20 Gbps.

\begin{figure}
    \centering
    \includegraphics[width=0.6\linewidth]{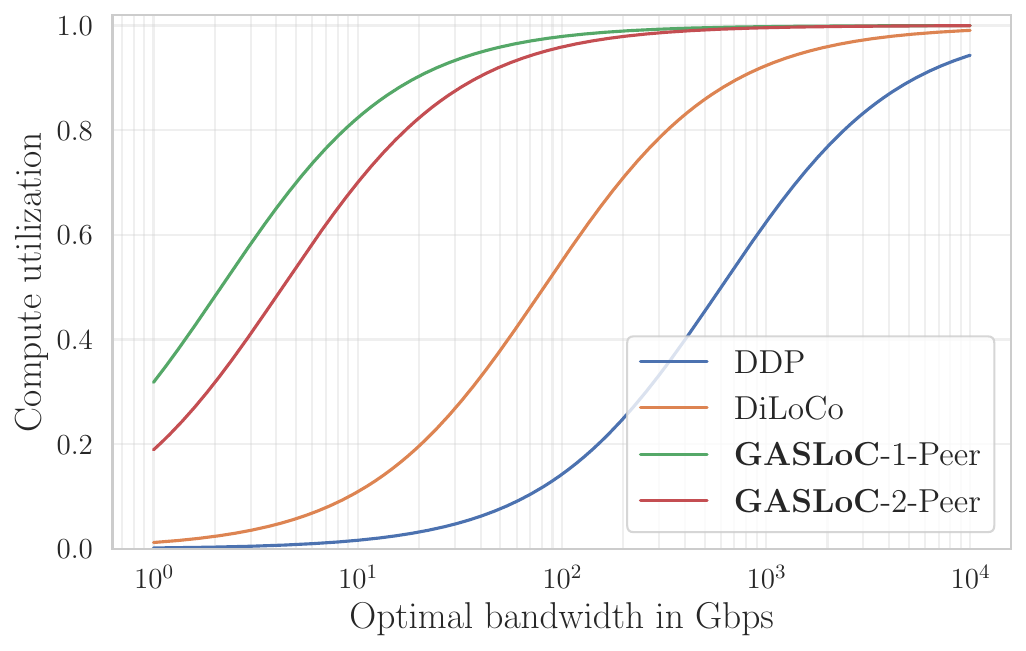}
    \caption{\textbf{Simulated compute utilization.} We report theoretical compute utilization for a \(70\)B-parameter model as the non-straggler bandwidth varies, with one bandwidth straggler limited to \(20\%\) of that bandwidth. DDP and \diloco use \allreduce communication and are therefore bottlenecked by the straggler. \ours-1-Peer and \ours-2-Peer use sparse communication and allow the straggler to perform fewer local steps, which increases estimated compute utilization in communication-constrained regimes.}

    \label{fig:compute-util}
\end{figure}

\FloatBarrier

\FloatBarrier

\newpage
\section*{NeurIPS Paper Checklist}

\if False
The checklist is designed to encourage best practices for responsible machine learning research, addressing issues of reproducibility, transparency, research ethics, and societal impact. Do not remove the checklist: {\bf The papers not including the checklist will be desk rejected.} The checklist should follow the references and follow the (optional) supplemental material.  The checklist does NOT count towards the page
limit. 

Please read the checklist guidelines carefully for information on how to answer these questions. For each question in the checklist:
\begin{itemize}
    \item You should answer \answerYes{}, \answerNo{}, or \answerNA{}.
    \item \answerNA{} means either that the question is Not Applicable for that particular paper or the relevant information is Not Available.
    \item Please provide a short (1--2 sentence) justification right after your answer (even for \answerNA). 
\end{itemize}

{\bf The checklist answers are an integral part of your paper submission.} They are visible to the reviewers, area chairs, senior area chairs, and ethics reviewers. You will also be asked to include it (after eventual revisions) with the final version of your paper, and its final version will be published with the paper.

The reviewers of your paper will be asked to use the checklist as one of the factors in their evaluation. While \answerYes{} is generally preferable to \answerNo{}, it is perfectly acceptable to answer \answerNo{} provided a proper justification is given (e.g., error bars are not reported because it would be too computationally expensive'' or ``we were unable to find the license for the dataset we used''). In general, answering \answerNo{} or \answerNA{} is not grounds for rejection. While the questions are phrased in a binary way, we acknowledge that the true answer is often more nuanced, so please just use your best judgment and write a justification to elaborate. All supporting evidence can appear either in the main paper or the supplemental material, provided in appendix. If you answer \answerYes{} to a question, in the justification please point to the section(s) where related material for the question can be found.

IMPORTANT, please:
\begin{itemize}
    \item {\bf Delete this instruction block, but keep the section heading ``NeurIPS Paper Checklist"},
    \item  {\bf Keep the checklist subsection headings, questions/answers and guidelines below.}
    \item {\bf Do not modify the questions and only use the provided macros for your answers}.
\end{itemize} 
 
\fi

\begin{enumerate}

\item {\bf Claims}
    \item[] Question: Do the main claims made in the abstract and introduction accurately reflect the paper's contributions and scope?
    \item[] Answer: \answerYes{} %
    \item[] Justification: The abstract and introduction state the proposed method, its relationship to gossip-based communication and local steps, the homogeneous-setting convergence result, and the evaluated LLM pre-training regimes. The empirical claims are restricted to the baselines, model sizes, worker counts, and heterogeneous-bandwidth setting studied in \cref{sec:numerical-experiments}.
    \item[] Guidelines:
    \begin{itemize}
        \item The answer \answerNA{} means that the abstract and introduction do not include the claims made in the paper.
        \item The abstract and/or introduction should clearly state the claims made, including the contributions made in the paper and important assumptions and limitations. A \answerNo{} or \answerNA{} answer to this question will not be perceived well by the reviewers. 
        \item The claims made should match theoretical and experimental results, and reflect how much the results can be expected to generalize to other settings. 
        \item It is fine to include aspirational goals as motivation as long as it is clear that these goals are not attained by the paper. 
    \end{itemize}

\item {\bf Limitations}
    \item[] Question: Does the paper discuss the limitations of the work performed by the authors?
    \item[] Answer: \answerYes{} %
    \item[] Justification: The paper discusses limitations including the homogeneous-objective assumptions in the theory, the model scales and worker counts evaluated, the use of a simulated/normalized heterogeneous-bandwidth setting, and the sensitivity of decentralized methods to topology, local-step count, and optimizer tuning.
    \item[] Guidelines:
    \begin{itemize}
        \item The answer \answerNA{} means that the paper has no limitation while the answer \answerNo{} means that the paper has limitations, but those are not discussed in the paper. 
        \item The authors are encouraged to create a separate ``Limitations'' section in their paper.
        \item The paper should point out any strong assumptions and how robust the results are to violations of these assumptions (e.g., independence assumptions, noiseless settings, model well-specification, asymptotic approximations only holding locally). The authors should reflect on how these assumptions might be violated in practice and what the implications would be.
        \item The authors should reflect on the scope of the claims made, e.g., if the approach was only tested on a few datasets or with a few runs. In general, empirical results often depend on implicit assumptions, which should be articulated.
        \item The authors should reflect on the factors that influence the performance of the approach. For example, a facial recognition algorithm may perform poorly when image resolution is low or images are taken in low lighting. Or a speech-to-text system might not be used reliably to provide closed captions for online lectures because it fails to handle technical jargon.
        \item The authors should discuss the computational efficiency of the proposed algorithms and how they scale with dataset size.
        \item If applicable, the authors should discuss possible limitations of their approach to address problems of privacy and fairness.
        \item While the authors might fear that complete honesty about limitations might be used by reviewers as grounds for rejection, a worse outcome might be that reviewers discover limitations that aren't acknowledged in the paper. The authors should use their best judgment and recognize that individual actions in favor of transparency play an important role in developing norms that preserve the integrity of the community. Reviewers will be specifically instructed to not penalize honesty concerning limitations.
    \end{itemize}

\item {\bf Theory assumptions and proofs}
    \item[] Question: For each theoretical result, does the paper provide the full set of assumptions and a complete (and correct) proof?
    \item[] Answer: \answerYes{} %
    \item[] Justification: The main convergence statement explicitly states the smoothness, stochastic-gradient, initialization, step-size, and communication assumptions, and the proof is provided in Appendix A.
    \item[] Guidelines:
    \begin{itemize}
        \item The answer \answerNA{} means that the paper does not include theoretical results. 
        \item All the theorems, formulas, and proofs in the paper should be numbered and cross-referenced.
        \item All assumptions should be clearly stated or referenced in the statement of any theorems.
        \item The proofs can either appear in the main paper or the supplemental material, but if they appear in the supplemental material, the authors are encouraged to provide a short proof sketch to provide intuition. 
        \item Inversely, any informal proof provided in the core of the paper should be complemented by formal proofs provided in appendix or supplemental material.
        \item Theorems and Lemmas that the proof relies upon should be properly referenced. 
    \end{itemize}

    \item {\bf Experimental result reproducibility}
    \item[] Question: Does the paper fully disclose all the information needed to reproduce the main experimental results of the paper to the extent that it affects the main claims and/or conclusions of the paper (regardless of whether the code and data are provided or not)?
    \item[] Answer: \answerYes{} %
    \item[] Justification: \cref{sec:numerical-experiments} and \cref{app:exp:protocol} specify the dataset, preprocessing/tokenization, model family and sizes, token budgets, batch size, optimizers, communication schedules, topologies, hardware, and implementation backend. The algorithmic update is given in \cref{alg:ours-general}, and the \versionD adaptation is described separately in \cref{app:exp:protocol}.
    \item[] Guidelines:
    \begin{itemize}
        \item The answer \answerNA{} means that the paper does not include experiments.
        \item If the paper includes experiments, a \answerNo{} answer to this question will not be perceived well by the reviewers: Making the paper reproducible is important, regardless of whether the code and data are provided or not.
        \item If the contribution is a dataset and\slash or model, the authors should describe the steps taken to make their results reproducible or verifiable. 
        \item Depending on the contribution, reproducibility can be accomplished in various ways. For example, if the contribution is a novel architecture, describing the architecture fully might suffice, or if the contribution is a specific model and empirical evaluation, it may be necessary to either make it possible for others to replicate the model with the same dataset, or provide access to the model. In general. releasing code and data is often one good way to accomplish this, but reproducibility can also be provided via detailed instructions for how to replicate the results, access to a hosted model (e.g., in the case of a large language model), releasing of a model checkpoint, or other means that are appropriate to the research performed.
        \item While NeurIPS does not require releasing code, the conference does require all submissions to provide some reasonable avenue for reproducibility, which may depend on the nature of the contribution. For example
        \begin{enumerate}
            \item If the contribution is primarily a new algorithm, the paper should make it clear how to reproduce that algorithm.
            \item If the contribution is primarily a new model architecture, the paper should describe the architecture clearly and fully.
            \item If the contribution is a new model (e.g., a large language model), then there should either be a way to access this model for reproducing the results or a way to reproduce the model (e.g., with an open-source dataset or instructions for how to construct the dataset).
            \item We recognize that reproducibility may be tricky in some cases, in which case authors are welcome to describe the particular way they provide for reproducibility. In the case of closed-source models, it may be that access to the model is limited in some way (e.g., to registered users), but it should be possible for other researchers to have some path to reproducing or verifying the results.
        \end{enumerate}
    \end{itemize}

\item {\bf Open access to data and code}
    \item[] Question: Does the paper provide open access to the data and code, with sufficient instructions to faithfully reproduce the main experimental results, as described in supplemental material?
    \item[] Answer: \answerYes{} %
    \item[] Justification: The dataset used in the experiments is publicly available, and we will release the code accompanying the paper, including the implementation of \ours, the evaluated baselines, configuration files, and launch instructions. The experimental protocol in \cref{sec:numerical-experiments} and \cref{app:exp:protocol} specifies the dataset, preprocessing, model sizes, optimizer settings, communication schedules, topologies, worker counts, and hardware setup needed to reproduce the main results.
    \item[] Guidelines:
    \begin{itemize}
        \item The answer \answerNA{} means that paper does not include experiments requiring code.
        \item Please see the NeurIPS code and data submission guidelines (\url{https://neurips.cc/public/guides/CodeSubmissionPolicy}) for more details.
        \item While we encourage the release of code and data, we understand that this might not be possible, so \answerNo{} is an acceptable answer. Papers cannot be rejected simply for not including code, unless this is central to the contribution (e.g., for a new open-source benchmark).
        \item The instructions should contain the exact command and environment needed to run to reproduce the results. See the NeurIPS code and data submission guidelines (\url{https://neurips.cc/public/guides/CodeSubmissionPolicy}) for more details.
        \item The authors should provide instructions on data access and preparation, including how to access the raw data, preprocessed data, intermediate data, and generated data, etc.
        \item The authors should provide scripts to reproduce all experimental results for the new proposed method and baselines. If only a subset of experiments are reproducible, they should state which ones are omitted from the script and why.
        \item At submission time, to preserve anonymity, the authors should release anonymized versions (if applicable).
        \item Providing as much information as possible in supplemental material (appended to the paper) is recommended, but including URLs to data and code is permitted.
    \end{itemize}

\item {\bf Experimental setting/details}
    \item[] Question: Does the paper specify all the training and test details (e.g., data splits, hyperparameters, how they were chosen, type of optimizer) necessary to understand the results?
    \item[] Answer: \answerYes{} %
    \item[] Justification: \cref{sec:numerical-experiments} and \cref{app:exp:protocol} describe the FineWeb data, tokenizer, sequence length, model sizes, token budgets, global batch size, local AdamW optimizer, outer optimizer, communication frequency, worker counts, and communication topology.
    \item[] Guidelines:
    \begin{itemize}
        \item The answer \answerNA{} means that the paper does not include experiments.
        \item The experimental setting should be presented in the core of the paper to a level of detail that is necessary to appreciate the results and make sense of them.
        \item The full details can be provided either with the code, in appendix, or as supplemental material.
    \end{itemize}

\item {\bf Experiment statistical significance}
    \item[] Question: Does the paper report error bars suitably and correctly defined or other appropriate information about the statistical significance of the experiments?
    \item[] Answer: \answerNo{} %
    \item[] Justification: Following common practice in LLM pre-training, where full repeated-seed runs are often prohibitively expensive, we report single-run validation loss rather than repeated-seed error bars. To partially mitigate this limitation, we evaluate the same trends across multiple worker counts, communication topologies, local-step regimes, and model scales.
    \item[] Guidelines:
    \begin{itemize}
        \item The answer \answerNA{} means that the paper does not include experiments.
        \item The authors should answer \answerYes{} if the results are accompanied by error bars, confidence intervals, or statistical significance tests, at least for the experiments that support the main claims of the paper.
        \item The factors of variability that the error bars are capturing should be clearly stated (for example, train/test split, initialization, random drawing of some parameter, or overall run with given experimental conditions).
        \item The method for calculating the error bars should be explained (closed form formula, call to a library function, bootstrap, etc.)
        \item The assumptions made should be given (e.g., Normally distributed errors).
        \item It should be clear whether the error bar is the standard deviation or the standard error of the mean.
        \item It is OK to report 1-sigma error bars, but one should state it. The authors should preferably report a 2-sigma error bar than state that they have a 96\% CI, if the hypothesis of Normality of errors is not verified.
        \item For asymmetric distributions, the authors should be careful not to show in tables or figures symmetric error bars that would yield results that are out of range (e.g., negative error rates).
        \item If error bars are reported in tables or plots, the authors should explain in the text how they were calculated and reference the corresponding figures or tables in the text.
    \end{itemize}

\item {\bf Experiments compute resources}
    \item[] Question: For each experiment, does the paper provide sufficient information on the computer resources (type of compute workers, memory, time of execution) needed to reproduce the experiments?
    \item[] Answer: \answerYes{} %
    \item[] Justification: \cref{app:exp:protocol} reports the training protocol like the model sizes, token budgets, worker counts, and logical-worker setup needed to estimate the compute required for each experiment.
    \item[] Guidelines:
    \begin{itemize}
        \item The answer \answerNA{} means that the paper does not include experiments.
        \item The paper should indicate the type of compute workers CPU or GPU, internal cluster, or cloud provider, including relevant memory and storage.
        \item The paper should provide the amount of compute required for each of the individual experimental runs as well as estimate the total compute. 
        \item The paper should disclose whether the full research project required more compute than the experiments reported in the paper (e.g., preliminary or failed experiments that didn't make it into the paper). 
    \end{itemize}
    
\item {\bf Code of ethics}
    \item[] Question: Does the research conducted in the paper conform, in every respect, with the NeurIPS Code of Ethics \url{https://neurips.cc/public/EthicsGuidelines}?
    \item[] Answer: \answerYes{} %
    \item[] Justification: The work is an algorithmic and empirical study of decentralized optimization for LLM pre-training and does not involve human subjects, private user data, deception, or deployment of a model in a real-world decision-making system. The experiments use existing research datasets and standard compute infrastructure.
    \item[] Guidelines:
    \begin{itemize}
        \item The answer \answerNA{} means that the authors have not reviewed the NeurIPS Code of Ethics.
        \item If the authors answer \answerNo, they should explain the special circumstances that require a deviation from the Code of Ethics.
        \item The authors should make sure to preserve anonymity (e.g., if there is a special consideration due to laws or regulations in their jurisdiction).
    \end{itemize}

\item {\bf Broader impacts}
    \item[] Question: Does the paper discuss both potential positive societal impacts and negative societal impacts of the work performed?
    \item[] Answer: \answerYes{} %
    \item[] Justification: The paper discusses the positive impact of reducing communication bottlenecks, which may improve hardware utilization and make distributed pre-training more accessible.
    \item[] Guidelines:
    \begin{itemize}
        \item The answer \answerNA{} means that there is no societal impact of the work performed.
        \item If the authors answer \answerNA{} or \answerNo, they should explain why their work has no societal impact or why the paper does not address societal impact.
        \item Examples of negative societal impacts include potential malicious or unintended uses (e.g., disinformation, generating fake profiles, surveillance), fairness considerations (e.g., deployment of technologies that could make decisions that unfairly impact specific groups), privacy considerations, and security considerations.
        \item The conference expects that many papers will be foundational research and not tied to particular applications, let alone deployments. However, if there is a direct path to any negative applications, the authors should point it out. For example, it is legitimate to point out that an improvement in the quality of generative models could be used to generate Deepfakes for disinformation. On the other hand, it is not needed to point out that a generic algorithm for optimizing neural networks could enable people to train models that generate Deepfakes faster.
        \item The authors should consider possible harms that could arise when the technology is being used as intended and functioning correctly, harms that could arise when the technology is being used as intended but gives incorrect results, and harms following from (intentional or unintentional) misuse of the technology.
        \item If there are negative societal impacts, the authors could also discuss possible mitigation strategies (e.g., gated release of models, providing defenses in addition to attacks, mechanisms for monitoring misuse, mechanisms to monitor how a system learns from feedback over time, improving the efficiency and accessibility of ML).
    \end{itemize}
    
\item {\bf Safeguards}
    \item[] Question: Does the paper describe safeguards that have been put in place for responsible release of data or models that have a high risk for misuse (e.g., pre-trained language models, image generators, or scraped datasets)?
    \item[] Answer: \answerNA{} %
    \item[] Justification: The paper does not release a new pre-trained language model, image generator, or scraped dataset.
    \item[] Guidelines:
    \begin{itemize}
        \item The answer \answerNA{} means that the paper poses no such risks.
        \item Released models that have a high risk for misuse or dual-use should be released with necessary safeguards to allow for controlled use of the model, for example by requiring that users adhere to usage guidelines or restrictions to access the model or implementing safety filters. 
        \item Datasets that have been scraped from the Internet could pose safety risks. The authors should describe how they avoided releasing unsafe images.
        \item We recognize that providing effective safeguards is challenging, and many papers do not require this, but we encourage authors to take this into account and make a best faith effort.
    \end{itemize}

\item {\bf Licenses for existing assets}
    \item[] Question: Are the creators or original owners of assets (e.g., code, data, models), used in the paper, properly credited and are the license and terms of use explicitly mentioned and properly respected?
    \item[] Answer: \answerYes{} %
    \item[] Justification: The paper cites the existing datasets, tokenizer/model references, software frameworks, and baseline methods used in the experiments, and the appendix states the relevant licenses and terms of use for these assets.
    \item[] Guidelines:
    \begin{itemize}
        \item The answer \answerNA{} means that the paper does not use existing assets.
        \item The authors should cite the original paper that produced the code package or dataset.
        \item The authors should state which version of the asset is used and, if possible, include a URL.
        \item The name of the license (e.g., CC-BY 4.0) should be included for each asset.
        \item For scraped data from a particular source (e.g., website), the copyright and terms of service of that source should be provided.
        \item If assets are released, the license, copyright information, and terms of use in the package should be provided. For popular datasets, \url{paperswithcode.com/datasets} has curated licenses for some datasets. Their licensing guide can help determine the license of a dataset.
        \item For existing datasets that are re-packaged, both the original license and the license of the derived asset (if it has changed) should be provided.
        \item If this information is not available online, the authors are encouraged to reach out to the asset's creators.
    \end{itemize}

\item {\bf New assets}
    \item[] Question: Are new assets introduced in the paper well documented and is the documentation provided alongside the assets?
    \item[] Answer: \answerYes{} %
    \item[] Justification: The paper introduces and will release code for \ours as a new research artifact. The released repository will include documentation, configuration files, and instructions for reproducing the main experiments, including the communication topologies, local-step schedules, outer optimizer settings, and baseline implementations used in the paper.
    \item[] Guidelines:
    \begin{itemize}
        \item The answer \answerNA{} means that the paper does not release new assets.
        \item Researchers should communicate the details of the dataset\slash code\slash model as part of their submissions via structured templates. This includes details about training, license, limitations, etc. 
        \item The paper should discuss whether and how consent was obtained from people whose asset is used.
        \item At submission time, remember to anonymize your assets (if applicable). You can either create an anonymized URL or include an anonymized zip file.
    \end{itemize}

\item {\bf Crowdsourcing and research with human subjects}
    \item[] Question: For crowdsourcing experiments and research with human subjects, does the paper include the full text of instructions given to participants and screenshots, if applicable, as well as details about compensation (if any)? 
    \item[] Answer: \answerNA{} %
    \item[] Justification: The paper does not involve crowdsourcing, user studies, annotation tasks, or other research with human subjects.
    \item[] Guidelines:
    \begin{itemize}
        \item The answer \answerNA{} means that the paper does not involve crowdsourcing nor research with human subjects.
        \item Including this information in the supplemental material is fine, but if the main contribution of the paper involves human subjects, then as much detail as possible should be included in the main paper. 
        \item According to the NeurIPS Code of Ethics, workers involved in data collection, curation, or other labor should be paid at least the minimum wage in the country of the data collector. 
    \end{itemize}

\item {\bf Institutional review board (IRB) approvals or equivalent for research with human subjects}
    \item[] Question: Does the paper describe potential risks incurred by study participants, whether such risks were disclosed to the subjects, and whether Institutional Review Board (IRB) approvals (or an equivalent approval/review based on the requirements of your country or institution) were obtained?
    \item[] Answer: \answerNA{} %
    \item[] Justification: The paper does not involve human subjects, crowdsourced participants, or collection of data from study participants, so IRB approval or equivalent human-subjects review is not applicable.
    \item[] Guidelines:
    \begin{itemize}
        \item The answer \answerNA{} means that the paper does not involve crowdsourcing nor research with human subjects.
        \item Depending on the country in which research is conducted, IRB approval (or equivalent) may be required for any human subjects research. If you obtained IRB approval, you should clearly state this in the paper. 
        \item We recognize that the procedures for this may vary significantly between institutions and locations, and we expect authors to adhere to the NeurIPS Code of Ethics and the guidelines for their institution. 
        \item For initial submissions, do not include any information that would break anonymity (if applicable), such as the institution conducting the review.
    \end{itemize}

\item {\bf Declaration of LLM usage}
    \item[] Question: Does the paper describe the usage of LLMs if it is an important, original, or non-standard component of the core methods in this research? Note that if the LLM is used only for writing, editing, or formatting purposes and does \emph{not} impact the core methodology, scientific rigor, or originality of the research, declaration is not required.
    \item[] Answer: \answerNA{} %
    \item[] Justification: LLMs are the model class studied in the experiments, but no external LLM is used as an important, original, or non-standard component of the proposed method.
    \item[] Guidelines:
    \begin{itemize}
        \item The answer \answerNA{} means that the core method development in this research does not involve LLMs as any important, original, or non-standard components.
        \item Please refer to our LLM policy in the NeurIPS handbook for what should or should not be described.
    \end{itemize}

\end{enumerate}

\end{document}